\documentclass{article}

\usepackage{microtype}
\usepackage{graphicx}
\usepackage{subcaption}
\usepackage{booktabs} 

\usepackage{hyperref}

\usepackage[preprint]{icml2026}

\usepackage{amsmath}
\usepackage{amssymb}
\usepackage{mathtools}
\usepackage{amsthm}

\usepackage{multirow}      
\usepackage[table]{xcolor} 
\usepackage{colortbl}      
\usepackage{threeparttable}

\usepackage[capitalize,noabbrev]{cleveref}

\theoremstyle{plain}
\newtheorem{theorem}{Theorem}[section]
\newtheorem{proposition}[theorem]{Proposition}

\newtheorem{corollary}[theorem]{Corollary}
\theoremstyle{definition}

\theoremstyle{remark}
\newtheorem{remark}[theorem]{Remark}

\usepackage[textsize=tiny]{todonotes}

\usepackage{tikz}
\usetikzlibrary{arrows.meta, positioning}

\usepackage{xcolor}

\icmltitlerunning{Synergistic Intra- and Cross-Layer Regularization Losses for MoE Expert Specialization}

\begin{document}

\twocolumn[
  \icmltitle{Synergistic Intra- and Cross-Layer Regularization Losses for MoE Expert Specialization}



  \icmlsetsymbol{equal}{*}

  \begin{icmlauthorlist}
    \icmlauthor{Rizhen Hu}{equal,pku}
    \icmlauthor{Yuan Cao}{equal,pku}
    \icmlauthor{Boao Kong}{equal,pku}
    \icmlauthor{Mou Sun}{zjlab}
    \icmlauthor{Kun Yuan}{pku,ai4sci}
  \end{icmlauthorlist}

  \icmlaffiliation{pku}{Peking University, Beijing, China}
  \icmlaffiliation{zjlab}{Zhejiang Lab, Zhejiang, China}
  \icmlaffiliation{ai4sci}{AI for Science Institute, Beijing, China}

  \icmlcorrespondingauthor{Mou Sun}{123sssmmm@gmail.com}
  \icmlcorrespondingauthor{Kun Yuan}{kunyuan@pku.edu.cn}

  \icmlkeywords{Machine Learning, ICML}

  \vskip 0.3in
]



\printAffiliationsAndNotice{\icmlEqualContribution}

\begin{abstract}
Sparse Mixture-of-Experts (MoE) models scale Transformers efficiently but suffer from expert overlap---redundant representations across experts and routing ambiguity, resulting in severely underutilized model capacity. While architectural solutions like DeepSeekMoE promote specialization, they require substantial structural modifications and rely solely on intra-layer signals. In this paper, we propose two plug-and-play regularization losses that enhance MoE specialization and routing efficiency without modifying router or model architectures. First, an intra-layer specialization loss penalizes cosine similarity between experts' SwiGLU activations on identical tokens, encouraging experts to specialize in complementary knowledge. Second, a cross-layer coupling loss maximizes joint Top-$k$ routing probabilities across adjacent layers, establishing coherent expert pathways through network depth while reinforcing intra-layer expert specialization. Both losses are orthogonal to the standard load-balancing loss and compatible with both the shared-expert architecture in DeepSeekMoE and vanilla top-$k$ MoE architectures. We implement both losses as a drop-in Megatron-LM module. Extensive experiments across pre-training, fine-tuning, and zero-shot benchmarks demonstrate consistent task gains, higher expert specialization, and lower-entropy routing; together, these improvements translate into faster inference via more stable expert pathways.
\end{abstract}

\section{Introduction}
\label{section:introduction}
Sparse Mixture-of-Experts (MoE) has emerged as a standard approach for scaling Transformers by expanding parameters while keeping per-token compute roughly constant \citep{shazeer2017outrageously,jacobs1991adaptive}. In MoE, a learned router activates only a small subset of experts—typically feed-forward networks—for each token \citep{fedus2022switch}. From early sparsely gated layers to modern large language models \citep{du2022glam,fedus2022switch,lepikhin2020gshard,zoph2022st,dai2024deepseekmoe}, this design has delivered strong accuracy–efficiency trade-offs. Nevertheless, a fundamental challenge remains: {expert specialization} progressively deteriorates during training, with tokens routed to different experts exhibiting excessive uniformity and overlap, leading multiple experts to learn redundant knowledge~\citep{dai2024deepseekmoe}. This redundancy confronts the router with ambiguous choices among functionally equivalent experts, eroding token-to-expert boundaries and substantially underutilizing model capacity.

Recent work has sought to encourage specialization through architectural modifications. DeepSeekMoE~\citep{dai2024deepseekmoe}, HMoE~\citep{wang2025hmoe}, and MoDSE~\citep{sun2024mixture} redesign expert layouts (e.g., shared or heterogeneous experts) to better match token complexity and balance load, while large-scale routed variants such as Mixtral~\citep{jiang2024mixtral}, Mixture of a Million Experts~\citep{he2024mixture}, and ReMoE~\citep{wang2025remoe} adjust layer composition, expert granularity, or routing mechanisms to improve accuracy–efficiency trade-offs. These approaches primarily modify architectures and routers and rely on intra-layer dynamics, leaving open whether expert specialization itself can be treated as a first-class training objective.


In contrast to architectural modifications, this paper adopts an orthogonal, loss-centric perspective: 

\setlength{\fboxsep}{6pt}
\noindent\colorbox{cyan!20}
{\parbox{\dimexpr\linewidth-2\fboxsep\relax}{%
\textit{Treating expert specialization as a primary training objective rather than a structural property.}%
}}

This approach directly shapes expert behavior and complements prior structural innovations.
To design these losses, we identify two failure modes of specialization:
\textbf{(1) Expert Overlap}, where different experts produce nearly identical activations for the same tokens, creating redundancy.
\textbf{(2) Routing Ambiguity}, where similar inputs are dispatched inconsistently across experts, indicating ill-defined routing rules.
When either occurs, experts collapse toward overlapping knowledge while the router faces ambiguous choices among functionally equivalent experts, undermining the core principle of specialization.

To address these failures, we introduce two complementary regularization loss functions that work in concert:

\textbf{(L1) Intra-Layer specialization loss: }This loss penalizes high cosine similarity between different experts' activations for the {same} token. It discourages functional redundancy and pushes each expert in a layer to develop unique specialization. This targets \textbf{Expert Overlap} by discouraging redundant responses from co-activated experts.

\textbf{(L2) Cross-Layer coupling loss: }This loss promotes coherent routing across adjacent layers by maximizing the joint probability of top-ranked expert pairs. By encouraging tokens to follow consistent expert sequences through depth, termed {expert paths}, it sharpens routing distributions, lowers entropy, and enables system-level optimizations such as path-aware placement and caching. This mitigates \textbf{Routing Ambiguity} and strengthens depth-wise specialization.

Together, these loss functions translate our diagnosed failure modes into targeted supervision, producing experts that are both functionally distinct within layers and coherently utilized across them.

\noindent\textbf{Theoretical analysis.} We analyze how the two losses shape expert updates and routing. The intra-layer specialization loss drives co-activated experts toward nearly orthogonal activations and gradients, yielding distinct learning trajectories. The cross-layer coupling loss propagates this specialization across depth under a mild continuity assumption on adjacent-layer representations. Together, the two losses induce a feedback loop in which weak specialization margins sharpen routing, decisive routing purifies each expert's training distribution, and these effects remain compatible with standard load-balancing regularizers.


\noindent\textbf{Empirical evaluation.} We evaluate specialization and coupling losses across pre-training, supervised fine-tuning, and zero-shot evaluation for both vanilla and DeepSeek-style MoE architectures at multiple scales. On pre-training, adding \(\text{sp}+\text{cp}\) on top of load-balancing consistently reduces perplexity, with up to \(\sim\)2.7\% relative improvement. In scaling experiments, activating only \(N{=}6\) experts already outperforms the baseline even with \(N{=}10\). The gains transfer to downstream adaptation: under LoRA SFT on \(\sim\)16B-class MoE backbones, our method improves average downstream performance by \(+3.9\) points on DeepSeek-MoE and \(+4.6\) points on DeepSeek-V2-Lite (averaged over eight benchmarks), and in full-parameter fine-tuning of Qwen3-30B-A3B we further boost HumanEval pass@1 by \(+3.66\) points. Across settings, improvements correlate with stronger structural signals—lower activation overlap, lower routing entropy, and clearer cross-layer expert paths—and persist across random seeds and hyperparameter choices.

\noindent\textbf{Core contributions.} Our contributions are listed as follows: 

\textbf{(C1) Loss-centric specialization.} We address expert overlap and routing ambiguity with two complementary regularization losses: an intra-layer term penalizing same-token activation similarity and a cross-layer term encouraging coherent expert paths through depth.

\textbf{(C2) Theory for specialization and coupling.} We theoretically show that the intra-layer specialization loss drives co-activated experts toward orthogonality and that cross-layer coupling propagates specialization across depth while remaining compatible with load-balancing objectives.

\textbf{(C3) Accuracy, specialization, and efficiency gains.} Across pre-training, fine-tuning, and zero-shot experimental evaluations, the two losses consistently improve perplexity, downstream accuracy, expert-specialization metrics, and inference throughput under expert parallelism.

\textbf{(C4) Plug-and-play integration.} Both losses are router- and architecture-agnostic, implemented as a drop-in Megatron-LM module that requires only a configuration flag and no changes to core model code. This design enables seamless integration into existing MoE training pipelines.

\section{Related Works}
\label{section:related works}
Here we discuss prior works which are closely related to our proposed approach. More related works can be found in Appendix \ref{appendix:related works}.

\textbf{Balancing losses and specialization objectives.}
A primary strategy to prevent routing collapse and improve stability in MoE training is to enforce balanced expert utilization. Early systems such as GShard~\citep{lepikhin2020gshard} and Switch~\citep{fedus2022switch} introduced auxiliary load-balancing terms to distribute tokens across experts, with router z-loss~\citep{zoph2022st} providing additional stabilization. BASE layers~\citep{lewis2021base} formulated routing as an optimal linear assignment problem, achieving perfectly balanced usage without auxiliary terms. Expert-Choice routing~\citep{zhou2022mixture} further reversed the assignment process, allowing experts to select their Top-$k$ tokens, which inherently balances load. These methods primarily regulate \emph{how much} each expert is used. In contrast, our approach is complementary: we supervise \emph{what} experts learn and \emph{how} their paths align, introducing a within-layer similarity penalty to discourage activation overlap and a cross-layer coupling term to enforce coherence, while leaving existing balancing mechanisms intact.

\textbf{Architectural and router-Level approaches.} 
Another line of work promotes expert specialization by redesigning MoE architectures or router mechanisms. 
DeepSeekMoE~\citep{dai2024deepseekmoe} partitions experts more finely and introduces always-active shared experts, allowing routed specialists to focus on idiosyncratic patterns. 
Router-centric methods also refine gating: ReMoE~\citep{wang2025remoe} replaces Top-$k$ Softmax with a differentiable ReLU router and adaptive $L_1$ regularization, while Dynamic MoE~\citep{guo2025dynmoe} auto-tunes both the number of activated experts per token and the size of the expert pool. Several structural variants further expand capacity and specialization. 
Mixtral layers multiple FFNs with top-2 routing, achieving strong accuracy--efficiency trade-offs~\citep{jiang2024mixtral}; 
Mixture of a Million Experts pushes expert granularity to the extreme~\citep{he2024mixture}; 
HMoE mixes experts of different sizes and biases usage toward smaller ones to encourage division of labor~\citep{wang2025hmoe}; 
MoDSE deploys diverse-sized experts with pairwise allocation to stabilize routing and balance compute across devices~\citep{sun2024mixture}; 
while simpler approaches such as Hash Layers~\citep{roller2021hash} and THOR~\citep{zuo2021taming} enforce balanced usage through fixed or randomized routing schemes.  
Unlike these methods---which modify layer composition or router design and largely rely on in-layer dynamics---our approach is architecture-agnostic. 
We impose explicit specialization objectives, namely a within-layer similarity penalty and a cross-layer coupling loss, on top of existing designs without altering attention, FFN, or router code paths.

\section{Mixture-of-Experts Models: Preliminaries}
\textbf{MoE layer.} An MoE layer replaces the dense feed-forward network (FFN) in a Transformer block with $E$ experts and a router. For the $i$-th token at layer $\ell$, the computation proceeds in three steps.

\textbf{Step 1. Routing.}
Let $x_i^{(\ell)} \in \mathbb{R}^h$ denote the input representation. The router computes a logit $q_i^{(\ell,e)}$ for each expert $e$ using a learnable routing vector $r^{(\ell,e)} \in \mathbb{R}^h$, then applies softmax to obtain routing scores
\begin{equation}
\label{eq:moe_routing}
q_i^{(\ell,e)} = \langle x_i^{(\ell)}, r^{(\ell,e)} \rangle, 
\quad
s_i^{(\ell,e)} := 
\frac{\exp(q_i^{(\ell,e)})}
{\sum_{j=1}^E \exp(q_i^{(\ell,j)})},
\end{equation}
where $\langle\cdot,\cdot\rangle$ denotes the inner product.

\textbf{Step 2. Expert processing.}
The router activates the top-$k$ experts $A_i^{(\ell)} \subseteq \{1,\dots,E\}$ for token $x_i^{(\ell)}$. Each expert is an FFN, typically implemented as SwiGLU, with parameters $(W_{\text{gate}}^{(\ell,e)}, W_{\text{up}}^{(\ell,e)}, W_{\text{down}}^{(\ell,e)})$. The expert computation is:
\begin{equation}
\label{eq:moe_ffn}
\begin{aligned}
    z_i^{(\ell,e)} &= \text{Swish}\left(W_{\text{gate}}^{(\ell,e)}x_i^{(\ell)}\right)\odot\left(W_{\text{up}}^{(\ell,e)}x_i^{(\ell)}\right), \\
    y_i^{(\ell,e)}& = W_{\text{down}}^{(\ell,e)}z_i^{(\ell,e)},
\end{aligned}
\end{equation}
where $z_i^{(\ell,e)}$ denotes the intermediate activation, $y_i^{(\ell,e)}$ is the expert output, and $\odot$ denotes element-wise multiplication.

\textbf{Step 3. Combination.}
The layer output is a weighted sum of activated experts:
\begin{equation}
y_i^{(\ell)} = \sum_{e \in \mathbb{A}_i^{(\ell)}} s_i^{(\ell,e)}\, y_i^{(\ell,e)}.
\label{eq:moe-output}
\end{equation}

\textbf{Properties of well-behaved MoE.}
Putting these pieces together, we use ``well-behaved MoE'' as convenient shorthand for layers that approximately satisfy two qualitative properties: (i) experts are functionally disjoint on the token distribution, and (ii) routing is decisive and low-entropy. These properties are not formal requirements but rather capture the behaviors our losses are designed to encourage.

When these properties break down, we observe the two empirical failure modes introduced in Section~\ref{section:introduction}. {Expert overlap} arises when experts in $A_i^{(\ell)}$ produce highly similar activations $z_i^{(\ell,e)}$ on the same token, rendering their contributions nearly mergeable and wasting model capacity. {Routing ambiguity} occurs when near-identical inputs $x_i^{(\ell)}$ are dispatched to different experts, blurring token--expert boundaries and preventing experts from developing distinct roles. When either failure mode occurs, experts collapse toward redundant representations while the router faces ambiguous choices among functionally equivalent experts. The intra-layer specialization loss (Section~\ref{sec:intra_layer_specialization}) and the cross-layer coupling loss (Section~\ref{sec: cross-layer coupling loss}) are designed to directly counteract these two pathologies.

\section{Intra-layer specialization loss}
\label{sec:intra_layer_specialization}

This section introduces an intra-layer regularization loss that targets \emph{expert functional overlap} within each MoE layer. Standard load-balancing losses enforce even utilization, but do not prevent multiple experts from learning redundant transformations on the \emph{same} tokens, which weakens the intended division of labor in MoE.

\textbf{Functional view of expert overlap.}
Following the view of FFN blocks as concept extractors and writers to the residual stream~\citep{geva2022transformer}, we decompose each expert into an intermediate activation $z^{(\ell,e)}_i$ and a down-projection $W^{(\ell,e)}_{\mathrm{down}}$ as in Eq.~(\ref{eq:moe_ffn}). If two co-activated experts produce nearly identical intermediates $z^{(\ell,e)}_i \approx z^{(\ell,\nu)}_i$ on the same token, their contributions become algebraically mergeable on that token and the experts are functionally redundant.

\textbf{Loss definition.}
Motivated by this view, we discourage same-token similarity in the intermediate activations. For token $x_i$ we define the intra-layer specialization loss
\begin{equation}
\label{eq:lsp}
\mathcal{R}_{\mathrm{sp}}(x_i)
=
\sum_{\ell=1}^{L}\;
\sum_{e\neq\nu\in\mathbb{A}_i^{(l)}}
\Big[\cos\!\big(z_i^{(\ell,e)},\,z_i^{(\ell,\nu)}\big)\Big]^2.
\end{equation}
where $\mathbb{A}_i^{(\ell)}$ is the set of experts activated for $x_i$ at layer $\ell$. Squaring the cosine emphasizes highly overlapping pairs while keeping the penalty smooth. Crucially, the specialization loss $\mathcal{R}_{\mathrm{sp}}$ only acts on experts that are \emph{co-activated on the same token and layer}, leaving shared features across different tokens or contexts unconstrained.

\textbf{Effect on expert updates.}
Beyond interpretability, the intermediate activation $z^{(\ell,e)}_i$ also governs how expert $e$ is updated through its down-projection matrix $W^{(\ell,e)}_{\mathrm{down}}$. The next proposition makes this connection explicit.

\vspace{1mm}
\begin{proposition}[Activation-gradient alignment]
\label{prop:activation-similarity}
For any two activated experts $e, \nu \in \mathbb{A}_i^{(\ell)}$, the cosine similarity between the gradients of the total loss $\mathcal{L}$ with respect to their down-projection matrices satisfies
\begin{equation}
\cos\!\left(
\frac{\partial\mathcal{L}}{\partial W_{\mathrm{down}}^{(\ell,e)}},
\frac{\partial\mathcal{L}}{\partial W_{\mathrm{down}}^{(\ell,\nu)}}
\right)
=
\cos\!\left(z_i^{(\ell,e)},\,z_i^{(\ell,\nu)}\right),
\end{equation}
where $z_i^{(\ell,e)}$ and $z_i^{(\ell,\nu)}$ denote the corresponding intermediate activations. (The proof is provided in Appendix~\ref{appendix: statement 1}.)
\end{proposition}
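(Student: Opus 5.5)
The plan is a direct chain-rule computation. The key observation is that, for a single token $x_i$, each down-projection gradient is a rank-one matrix. Both matrices share the same left factor, namely the upstream gradient at the MoE layer output, and they differ only in their right factors $z_i^{(\ell,e)}$ and $z_i^{(\ell,\nu)}$ and in positive scalar weights. Throughout, I read $\mathcal{L}$ as the loss incurred on token $x_i$, and I take cosine similarity between matrices to be with respect to the Frobenius inner product $\langle A,B\rangle_F=\operatorname{tr}(A^\top B)$.

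\textbf{Step 1: compute the gradient.} Let $g_i := \partial\mathcal{L}/\partial y_i^{(\ell)}\in\mathbb{R}^h$ denote the gradient with respect to the layer output in Eq.~(\ref{eq:moe-output}). The parameter $W_{\mathrm{down}}^{(\ell,e)}$ affects $\mathcal{L}$ only through $y_i^{(\ell,e)} = W_{\mathrm{down}}^{(\ell,e)} z_i^{(\ell,e)}$. It does not affect the following quantities:
\begin{itemize}
\item the routing scores $s_i^{(\ell,e)}$, which depend only on $x_i^{(\ell)}$ and the router vectors;
\item the intermediate activations $z_i^{(\ell,e)}$;
\item the auxiliary terms $\mathcal{R}_{\mathrm{sp}}$ and load balancing at layer $\ell$, which are functions of $z$ and $s$ only.
\end{itemize}
Downstream layers affect $\mathcal{L}$ only through $g_i$. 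Hence, by the chain rule,
\begin{equation*}
\frac{\partial\mathcal{L}}{\partial W_{\mathrm{down}}^{(\ell,e)}} = s_i^{(\ell,e)}\, g_i\, \big(z_i^{(\ell,e)}\big)^{\top},
\end{equation*}
and the analogous formula holds for $\nu$.

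\textbf{Step 2: evaluate the inner products.} Use the rank-one identity $\langle a b^\top, a c^\top\rangle_F = \|a\|^2\langle b,c\rangle$. This gives
\begin{equation*}
\big\langle \partial_{W^{(\ell,e)}_{\mathrm{down}}}\mathcal{L},\, \partial_{W^{(\ell,\nu)}_{\mathrm{down}}}\mathcal{L}\big\rangle_F = s_i^{(\ell,e)} s_i^{(\ell,\nu)} \|g_i\|^2 \langle z_i^{(\ell,e)}, z_i^{(\ell,\nu)}\rangle .
\end{equation*}
Similarly, the Frobenius norms are $\|\partial_{W^{(\ell,e)}_{\mathrm{down}}}\mathcal{L}\|_F = s_i^{(\ell,e)}\|g_i\|\,\|z_i^{(\ell,e)}\|$, and likewise for $\nu$.

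\textbf{Step 3: take the ratio.} The softmax scores are strictly positive, so the factors $s_i^{(\ell,e)} s_i^{(\ell,\nu)}\|g_i\|^2$ cancel without any sign change. What remains is exactly $\cos(z_i^{(\ell,e)},z_i^{(\ell,\nu)})$.

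\textbf{Main obstacle: the scope of the statement.} The delicate point is stating the hypotheses precisely, not the algebra. I would make three explicit assumptions:
\begin{itemize}
\item $g_i\neq 0$ and $z_i^{(\ell,e)},z_i^{(\ell,\nu)}\neq 0$, so that both cosines are defined;
\item the identity is per token, since for a minibatch the gradients are sums of rank-one terms with different left factors and the equality generally fails;
\item the gating weights are treated as independent of $W_{\mathrm{down}}$, which holds for the architecture in Eqs.~(\ref{eq:moe_routing})--(\ref{eq:moe-output}).
\end{itemize}
With these assumptions stated, the proof is a two-line computation.
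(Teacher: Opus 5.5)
Your proposal is correct and follows the paper's proof. Both arguments write each down-projection gradient as a rank-one outer product $\bigl(\partial\mathcal{L}/\partial y_i^{(\ell)}\bigr)\,(z_i^{(\ell,e)})^\top$ with a shared upstream factor, apply the Frobenius identities $\langle ab^\top,cd^\top\rangle_F=(a^\top c)(b^\top d)$ and $\|ab^\top\|_F=\|a\|\,\|b\|$, and cancel the positive routing weights; the paper first assumes equal weights and then reinstates them as positive scalars that drop out of the cosine.

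Your explicit caveats are sharper than the paper's: nonzero $g_i$ and activations, and the per-token scope, since a batch gradient is a sum of rank-one terms. One further precision applies to both arguments. In a causal Transformer, even a single position's loss reaches $W_{\mathrm{down}}^{(\ell,e)}$ through other positions via attention, so the identity really concerns the gradient contribution flowing through $y_i^{(\ell,e)}$, which is exactly what you and the paper compute.
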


Proposition~\ref{prop:activation-similarity} links representation geometry to optimization dynamics: for co-activated experts on the same token, the cosine similarity of their activations $z_i^{(\ell,\cdot)}$ \emph{exactly} equals the cosine similarity of their $W_{\mathrm{down}}$ gradients. We summarize the key implication for expert updates as follows:

\setlength{\fboxsep}{6pt}
\noindent\colorbox{orange!20}{\parbox{\dimexpr\linewidth-2\fboxsep\relax}{%
\textit{Penalizing same-token activation similarity makes co-activated experts' $W_{\mathrm{down}}$ gradients more orthogonal,
driving distinct learning trajectories and strengthening intra-layer functional specialization.}%
}}



\begin{figure}[t]
\centering
\vspace{-2mm}
\includegraphics[width=0.38\textwidth]{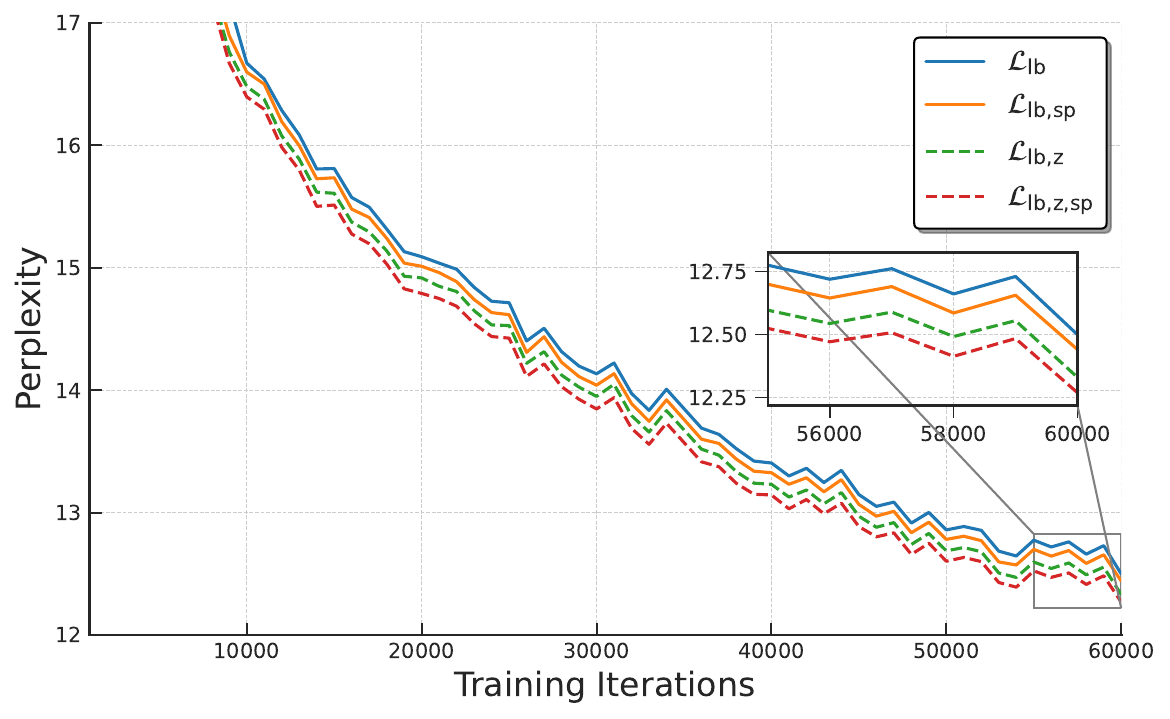}
\vspace{-2mm}
\caption{\small The perplexity for training a 1.1B model with different regularization. Setup is in Table~\ref{tab:moe-config}.}
\label{fig: sp_validation}
\vspace{-2mm}
\end{figure}

\textbf{Empirical impact.}
To validate that $\mathcal{R}_{\mathrm{sp}}$ captures specialization and improves training, we pre-train a 1.1B MoE model (100M activated parameters) with and without this regularizer while keeping all other settings identical. We denote by $\mathcal{L}_{(\cdot)}$ the full training objective, i.e., the language-modeling loss together with the indicated regularization terms. We evaluate four configurations: $\mathcal{L}_{\mathrm{lb}}$ (load balancing only), $\mathcal{L}_{\mathrm{lb,sp}}$ (load balancing + specialization), $\mathcal{L}_{\mathrm{lb,z}}$ (load balancing + $z$-loss~\citep{zoph2022st}), and $\mathcal{L}_{\mathrm{lb,z,sp}}$ (all three losses involved). Figure~\ref{fig: sp_validation} shows that incorporating $\mathcal{R}_{\mathrm{sp}}$ consistently reduces perplexity, with the combined $\mathcal{L}_{\mathrm{lb,z,sp}}$ achieving the best performance.

\section{Cross-layer coupling loss}
\label{sec: cross-layer coupling loss}

This section introduces a cross-layer coupling loss to address routing ambiguity and propagate expert specialization across depth. When near-identical tokens are dispatched to different experts, each expert receives a mixed and largely overlapping data distribution; consequently, their gradients become correlated and updates drive them toward similar functionality. Without stable, consistent assignments, experts cannot develop distinct roles, token--expert boundaries remain blurred, and the intended division of labor in MoE collapses into redundant behavior.

\textbf{Cross-layer coupling in pre-trained MoE models.}
Recent work has identified an emergent property in MoE known as \emph{cross-layer coupling}~\citep{cai2024textit,yao2024exploiting}: routing decisions in adjacent layers are strongly correlated, such that the expert activated at layer $\ell$ is highly predictive of the expert activated at layer $\ell+1$. During training, models spontaneously develop such structured pathways, forming coherent information pipelines through depth.

\textbf{Cross-layer coupling as a specialization amplifier.}
Cross-layer coupling clearly promotes routing stability: when tokens consistently traverse fixed expert sequences (e.g., ``expert 3 in layer 7 followed by expert 5 in layer 8''), routing ambiguity is eliminated by definition. However, its effect on expert specialization is less obvious. Specifically, we ask: \emph{how does inter-layer structural consistency influence intra-layer expert differentiation?} In other words, if tokens follow stable paths across layers, does this help individual experts within each layer become more specialized? Our analysis reveals that the answer is yes, through a propagation mechanism formalized below.

\begin{figure*}[t!]
    \centering
    \vspace{-1mm}
    \begin{subfigure}[b]{0.85\textwidth}
        \includegraphics[width=\textwidth]{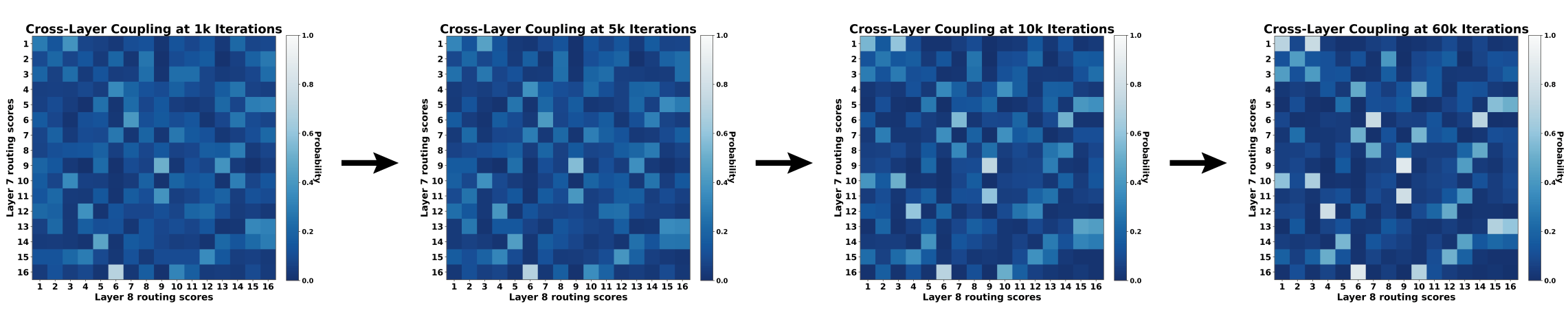}
    \end{subfigure}
    \hfill
    \begin{subfigure}[b]{0.85\textwidth}
        \includegraphics[width=\textwidth]{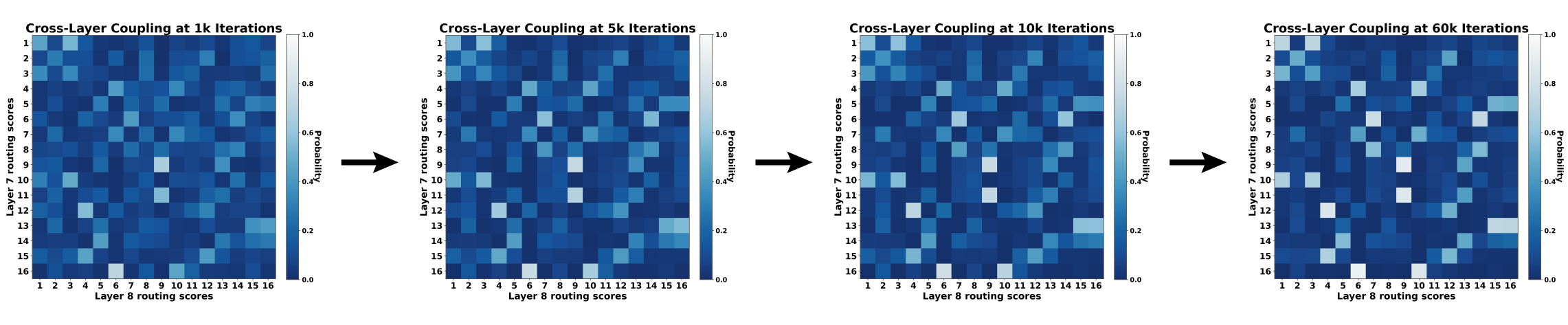}
    \end{subfigure}
    \caption{Conditional activation probabilities between experts in layers~7 and~8 for a 0.4B MoE model. \emph{Top:} training with only load-balance regularization. \emph{Bottom:} training with both load-balance and coupling regularization.}
    \label{fig:inter_78_base_cp}
\end{figure*}

\begin{proposition}
\label{lemma2}
Let $\mathbb{A}_i^{(\ell)}$ denote the set of activated experts for token $x_i$ at layer $l$. Consider two adjacent layers $\ell$ and $\ell+1$ that satisfy the following conditions:

\textbf{1. Representation continuity.} For a token $x_i$, its representations evolve smoothly across layers: $\cos(x_i^{(\ell)}, x_i^{(\ell+1)}) \ge 1 - \delta^2$ for small $\delta \in (0,1)$.

\textbf{2. Source-layer specialization.} Layer $\ell$ exhibits expert specialization with nearly orthogonal router weights: for experts $e_1 \in \mathbb{A}_i^{(\ell)}$ and $e_2 \in \mathbb{A}_j^{(\ell)}$ processing different tokens $x_i \neq x_j$, we have $|\cos(r^{(\ell,e_1)}, r^{(\ell,e_2)})| \le \varepsilon$ for a small $\varepsilon \in (0,1)$.

\textbf{3. Strong cross-layer coupling.} Adjacent layers exhibit stable expert pathways with high routing correlation. For any expert $e \in \mathbb{A}_i^{(\ell)}$ activated by token $x_i$, there exists a corresponding expert $\nu \in \mathbb{A}_i^{(\ell+1)}$ such that both routing decisions are confident:
$\cos(x_i^{(\ell)}, r^{(\ell,e)}) \ge 1 - \iota^2$ and $\cos(x_i^{(\ell+1)}, r^{(\ell+1,\nu)}) \ge 1 - \iota^2$ for small $\iota \in (0,1)$.

Under these conditions, layer $\ell+1$ inherits the specialization structure from layer $\ell$:
\begin{align}
\label{eq-cross-layer}
\Bigl|\cos\bigl(r^{(\ell+1,\nu_1)}, r^{(\ell+1,\nu_2)}\bigr)\Bigr|
\le \varepsilon + O(\delta, \iota)
\end{align}
for experts $\nu_1 \in \mathbb{A}_i^{(\ell+1)}$ and $\nu_2 \in \mathbb{A}_j^{(\ell+1)}$ processing different tokens, where the error term $O(\delta,\iota)$ vanishes as $\delta$ and $\iota$ decrease to $0$ (proof in Appendix~\ref{appendix: statement 2}).
\end{proposition}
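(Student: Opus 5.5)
The plan is to reduce everything to angles on the unit sphere and then apply the triangle inequality for angular distance. Write $\theta(u,v) := \arccos\cos(u,v) \in [0,\pi]$. This is a metric on directions, so $\theta(u,w) \le \theta(u,v) + \theta(v,w)$. The hypotheses convert to small angles via $\cos\theta \ge 1-\eta^2 \Rightarrow \theta \le \arccos(1-\eta^2) \le \sqrt{2}\,\eta\,(1+O(\eta^2))$, since $1-\cos\theta \ge \theta^2/2 - \theta^4/24$. Representation continuity then gives $\theta(x_i^{(\ell)}, x_i^{(\ell+1)}) \le c\,\delta$, and the confident-routing condition gives $\theta(x_i^{(\ell)}, r^{(\ell,e)}) \le c\,\iota$ and $\theta(x_i^{(\ell+1)}, r^{(\ell+1,\nu)}) \le c\,\iota$.

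Next, chain these angles to relate each layer-$(\ell+1)$ router to its layer-$\ell$ partner. For token $x_i$, Condition 3 applied to any $e_1\in\mathbb{A}_i^{(\ell)}$ yields a partner $\nu_1\in\mathbb{A}_i^{(\ell+1)}$ with
\[
\theta\bigl(r^{(\ell+1,\nu_1)}, r^{(\ell,e_1)}\bigr) \le \theta\bigl(r^{(\ell+1,\nu_1)}, x_i^{(\ell+1)}\bigr) + \theta\bigl(x_i^{(\ell+1)}, x_i^{(\ell)}\bigr) + \theta\bigl(x_i^{(\ell)}, r^{(\ell,e_1)}\bigr) \le c(2\iota+\delta) =: \rho.
\]
Token $x_j$ gives the analogous pair $(e_2,\nu_2)$ with the same bound. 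A point needs care here: the statement takes arbitrary $\nu_1\in\mathbb{A}_i^{(\ell+1)}$, while Condition 3 only supplies some $\nu$. I would use the fact that the confident-routing bound is phrased for the activated expert at layer $\ell+1$. Any $\nu_1$ satisfying $\cos(x_i^{(\ell+1)}, r^{(\ell+1,\nu_1)})\ge 1-\iota^2$ works in the chain above, so I interpret the statement as concerning such coupled experts and state this explicitly.

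Finally, transfer the source-layer bound. Condition 2 gives $\theta(r^{(\ell,e_1)}, r^{(\ell,e_2)}) \in [\arccos\varepsilon, \pi-\arccos\varepsilon]$. The triangle inequality, applied in both directions, gives $|\theta(r^{(\ell+1,\nu_1)}, r^{(\ell+1,\nu_2)}) - \theta(r^{(\ell,e_1)}, r^{(\ell,e_2)})| \le 2\rho$. Since $\cos$ is 1-Lipschitz,
\[
\bigl|\cos(r^{(\ell+1,\nu_1)}, r^{(\ell+1,\nu_2)})\bigr| \le \bigl|\cos(r^{(\ell,e_1)}, r^{(\ell,e_2)})\bigr| + 2\rho \le \varepsilon + O(\delta+\iota),
\]
which is \eqref{eq-cross-layer}, and the error term vanishes as $\delta,\iota\to0$. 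An alternative route avoids angles: write each unit vector as its partner plus a residual of norm $\le \sqrt{2(1-\cos)}$ and expand the inner product. This gives the same $O(\delta,\iota)$ bound with explicit constants.

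The main obstacle is logical bookkeeping rather than analysis. First, Condition 2 must apply to the pair $(e_1,e_2)$ obtained from the coupling. This holds because $e_1\in\mathbb{A}_i^{(\ell)}$ and $e_2\in\mathbb{A}_j^{(\ell)}$ with $x_i\neq x_j$. Second, the quantifier over $\nu_1,\nu_2$ must be made consistent with the existential in Condition 3, as discussed above. Third, the sign and absolute value must be handled; the angle-band argument does this uniformly. The calculus part, bounding $\arccos(1-\eta^2)$ by $O(\eta)$, is routine.
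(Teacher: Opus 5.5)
Your proof is correct. Its skeleton matches the paper's: tie each $r^{(\ell+1,\nu)}$ to some $r^{(\ell,e)}$ by passing through $x_i^{(\ell)}$ and $x_i^{(\ell+1)}$, then perturb the source-layer cosine. The differences are the metric and the final perturbation step.

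The paper works with chordal distance between normalized vectors. It uses the exact identity $\|\hat u-\hat v\|^2=2-2\cos(u,v)$ to turn the hypotheses into distances at most $\sqrt2\,\delta$ and $\sqrt2\,\iota$, and chains these to $\|\tilde r^{(\ell,e)}-\tilde r^{(\ell+1,\nu)}\|\le\sqrt2(\delta+2\iota)$. It then expands $\langle\tilde r^{(\ell+1,\nu_1)},\tilde r^{(\ell+1,\nu_2)}\rangle$ around $\langle\tilde r^{(\ell,e_1)},\tilde r^{(\ell,e_2)}\rangle$ into four terms, which gives $\varepsilon+2\sqrt2(\delta+2\iota)+2(\delta+2\iota)^2$. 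This is precisely the ``alternative route'' you mention at the end.

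\textbf{What each version buys.} The chordal version needs no inversion of $\arccos$ and gives explicit constants at once. Your geodesic version needs the routine estimate $\arccos(1-\eta^2)=2\arcsin(\eta/\sqrt2)\le\pi\eta/\sqrt2$. In exchange, the absolute value is handled in one line by the 1-Lipschitz property of $\cos$, and the error is linear, $2c(\delta+2\iota)$. Since $c\to\sqrt2$ as the parameters shrink, your leading constant agrees with the paper's.

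\textbf{Quantifiers.} Your remark here is well taken. The paper's proof likewise takes $\nu_1,\nu_2$ to be the confident partners supplied by Condition~3, and applies Condition~1 to both $x_i$ and $x_j$, without saying so. So the restriction to experts with $\cos(x_i^{(\ell+1)},r^{(\ell+1,\nu_1)})\ge1-\iota^2$ is what either argument actually proves. For $k\ge2$, an activated expert at layer $\ell+1$ that is not such a partner is left unconstrained by the hypotheses.
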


Proposition~\ref{lemma2} shows that when layer $\ell$ has well-specialized experts (Condition~2) and is strongly coupled to layer $\ell+1$ (Condition~3), the specialization structure is transferred to the adjacent layer with bounded degradation (Eq.~\eqref{eq-cross-layer}). Localized specialization can therefore cascade through depth, yielding globally specialized representations. This motivates the following summary.

\setlength{\fboxsep}{6pt}
\noindent\colorbox{orange!20}{\parbox{\dimexpr\linewidth-2\fboxsep\relax}{%
\textit{Cross-layer coupling acts as a \textbf{specialization amplifier}: it transforms localized expert differentiation into network-wide functional diversity by creating stable pathways that propagate specialization across depth.}%
}}

\vspace{1mm}
\textbf{Cross-layer coupling loss.}
Although cross-layer coupling emerges naturally, it develops slowly and incompletely, especially early in training when routing ambiguity is severe. Rather than waiting for it to appear organically, we introduce a coupling regularizer $\mathcal{R}_{\text{cp}}$ that directly maximizes joint routing probabilities between adjacent layers. For each token $x_i$, we define the pathway strength between expert $e$ in layer $\ell$ and expert $\nu$ in layer $\ell+1$ as
$P_i^{(\ell,(e,\nu))} = s_i^{(\ell,e)} s_i^{(\ell+1,\nu)}$, the product of their routing scores. The loss focuses on the Top-$k$ strongest cross-layer connections for each activated expert:
\begin{equation}
\begin{aligned}
\label{eq: coupling loss}
&\mathcal{R}_{\text{cp}}(x_i)
= -\sum_{\ell=1}^{L-1}\sum_{e=1}^{E}\sum_{\nu\in\mathbb{T}_i^{(\ell,e)}} P_i^{(\ell,(e,\nu))},\\
&\text{where}\quad P_i^{(\ell,(e,\nu))} = s_i^{(\ell,e)} s_i^{(\ell+1,\nu)}.
\end{aligned}
\end{equation}
Here $s_i^{(\ell,e)}$ is defined in Eq.~\eqref{eq:moe_routing}, and $\mathbb{T}_i^{(\ell,e)}$ contains the $k$ experts in layer $\ell+1$ with the highest joint probabilities with expert $e$. Minimizing $\mathcal{R}_{\text{cp}}$ encourages decisive, high-probability pathways between adjacent layers, and by Proposition~\ref{lemma2} these pathways create the structural conditions for specialization to propagate throughout the network.

By concentrating probability mass on a few strong cross-layer expert pairs, $\mathcal{R}_{\text{cp}}$ reduces routing ambiguity and encourages consistent expert identities across depth. This lowers token-distribution overlap among experts, decreases gradient sharing, and promotes divergent specialization, while remaining complementary to standard load-balancing objectives discussed in Section~\ref{sec:unified_objective}.

\vspace{1mm}
\textbf{Empirical validation.}
To verify that cross-layer coupling is both natural and worth amplifying, we pre-train a 0.4B MoE model with 80M activated parameters and monitor conditional activation probabilities between adjacent layers. As shown in Figure~\ref{fig:inter_78_base_cp}, a clear coupling structure is already present early in training and becomes more pronounced over time, confirming that structured expert paths are an intrinsic feature of MoE learning.

\section{Unified Training Objective and Theoretical Picture}
\label{sec:unified_objective}

\noindent \textbf{Joint training objective.}
With the intra-layer specialization loss $\mathcal{R}_{\mathrm{sp}}$ (Section~\ref{sec:intra_layer_specialization})
and the cross-layer coupling loss $\mathcal{R}_{\mathrm{cp}}$ (Section~\ref{sec: cross-layer coupling loss}),
we train MoE models by adding them as plug-in regularizers on top of the standard MoE objective.
For each token $x_i$, the full training objective is
\begin{equation}
\begin{aligned}
\mathcal{L}_{\mathrm{lb,sp,cp}}(x_i)
:= &\underbrace{\mathcal{L}(x_i)
+ \mathcal{R}_{\mathrm{lb}}(x_i)}_{\text{Load balance loss}}\\
&+ \underbrace{\lambda_{\mathrm{sp}}\,\mathcal{R}_{\mathrm{sp}}(x_i)
+ \lambda_{\mathrm{cp}}\,\mathcal{R}_{\mathrm{cp}}(x_i)}_{\text{Intra-/Cross-layer regularization}},
\label{eq:unified_objective_maintext}
\end{aligned}
\end{equation}
where $\mathcal{L}(x_i)$ is the primary language modeling loss, $\mathcal{R}_{\mathrm{lb}}(x_i)$ is the standard load-balancing
regularizer, and $\lambda_{\mathrm{sp}}$, $\lambda_{\mathrm{cp}}$ control the strength of specialization and coupling regularization.
We use $\mathcal{L}_{(\cdot)}$ to denote the full objective with the indicated regularizers (e.g., $\mathcal{L}_{\mathrm{lb}}$,
$\mathcal{L}_{\mathrm{lb,sp}}$, $\mathcal{L}_{\mathrm{lb,cp}}$, and $\mathcal{L}_{\mathrm{lb,sp,cp}}$).
In practice, router-stabilization terms such as $z$-loss can be included on top of Eq.~\eqref{eq:unified_objective_maintext};
our two losses only reuse intermediate activations and routing scores already produced in the forward pass and therefore
require no architectural or routing-code modifications.

\vspace{0.25em}
\textbf{Practical overhead.}
Both $\mathcal{R}_{\mathrm{sp}}$ and $\mathcal{R}_{\mathrm{cp}}$ are lightweight auxiliaries that operate on quantities already produced in standard MoE training.
$\mathcal{R}_{\mathrm{sp}}$ only uses the Top-$k$ activated experts per token and adds an $O(k^2 d)$ similarity computation, which can reuse cached expert activations from the forward pass.
$\mathcal{R}_{\mathrm{cp}}$ is computed from scalar routing scores and introduces no additional matrix multiplications.
A detailed compute/memory accounting and a empirical validation are all provided in Appendix~\ref{appendix:analysis for computation}.

\vspace{0.25em}
\noindent \textbf{Core theoretical picture: why the two losses form a coherent system.}
Sections~\ref{sec:intra_layer_specialization} and~\ref{sec: cross-layer coupling loss} introduced $\mathcal{R}_{\mathrm{sp}}$ and $\mathcal{R}_{\mathrm{cp}}$
as direct supervision signals for expert overlap and routing ambiguity.
At a high level, Proposition~\ref{prop:activation-similarity} shows that reducing same-token activation similarity directly decorrelates
co-activated experts' update directions through $W_{\mathrm{down}}$, while Proposition~\ref{lemma2} shows that strong cross-layer coupling can
propagate specialization structure across depth with bounded degradation. Moreover, our regularizers remain compatible with standard load
balancing (formal constructions in Appendix~\ref{appendix:theoretical}). We also empirically verify that adding $\mathcal{R}_{\mathrm{sp}}$ and $\mathcal{R}_{\mathrm{cp}}$ does not destabilize load balancing by analyzing load-balance loss curves throughout training (Appendix~\ref{app:lb_curve}).

\vspace{0.25em}
\noindent \textbf{Closed-loop theory: specialization and routing sharpen each other.}
Beyond the local effects above, our key new theoretical contribution is a closed-loop mechanism linking specialization and routing.
In this loop, even a mild best-expert loss advantage tends to make routing more decisive (lower entropy), and more decisive routing in turn
purifies each expert’s effective training data, which strengthens regional experts and further amplifies specialization.
Formally, Theorem~\ref{thm:weak-spec-sharp-app} establishes the first direction (weak advantage sharpens routing), while
Theorem~\ref{thm:clear-routing-improves-region-app} establishes the reverse direction (sharp and stable routing amplifies specialization on advantage regions).
This perspective also clarifies why our two losses are complementary:
$\mathcal{R}_{\mathrm{sp}}$ helps create and maintain meaningful expert advantages by discouraging within-layer overlap,
while $\mathcal{R}_{\mathrm{cp}}$ stabilizes token--expert paths across depth so that purity and specialization persist and propagate.

\begin{figure}[t]
  \centering
  \vspace{-2mm} 
  \begin{subfigure}{0.49\linewidth}
    \includegraphics[width=\linewidth]{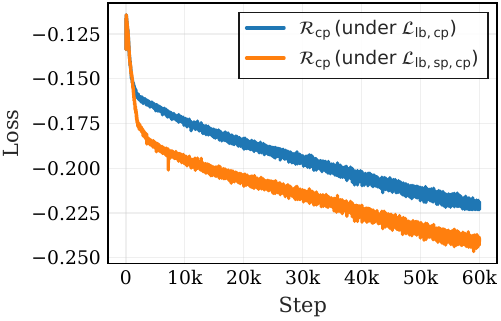}
    \label{fig:cp_with_or_without_sp}
  \end{subfigure}\hfill
  \begin{subfigure}{0.47\linewidth}
    \includegraphics[width=\linewidth]{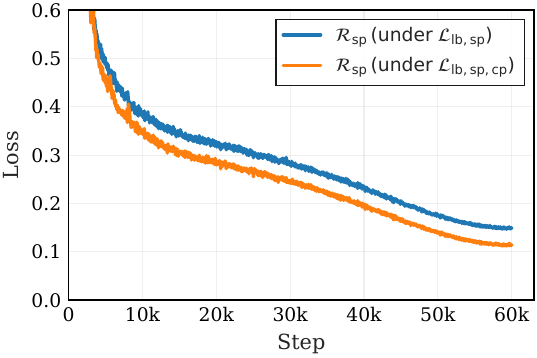}
    \label{fig:sp_with_or_without_cp}
  \end{subfigure}
  \vspace{-2mm}  
  \caption{\small Training dynamics on the 0.4B MoE model. Left: cross-layer coupling loss $\mathcal{R}_{\mathrm{cp}}$ when training with $\mathcal{L}_{\mathrm{lb,cp}}$ vs.\ $\mathcal{L}_{\mathrm{lb,cp,sp}}$; adding $\mathcal{R}_{\mathrm{sp}}$ consistently makes $\mathcal{R}_{\mathrm{cp}}$ more negative (stronger coupling). Right: intra-layer specialization loss $\mathcal{R}_{\mathrm{sp}}$ when training with $\mathcal{L}_{\mathrm{lb,sp}}$ vs.\ $\mathcal{L}_{\mathrm{lb,sp,cp}}$; adding $\mathcal{R}_{\mathrm{cp}}$ consistently reduces $\mathcal{R}_{\mathrm{sp}}$ (stronger specialization).}
  \label{fig:combined_losses}
  \vspace{-10pt}  
\end{figure}



\vspace{0.25em}
\noindent \textbf{A quick empirical check: $\mathcal{R}_{\mathrm{sp}}$ and $\mathcal{R}_{\mathrm{cp}}$ reinforce each other.}
The closed-loop picture above suggests that specialization and coupling should interact positively rather than compete.
We verify this interaction in a lightweight experiment on the same 0.4B MoE setting used in Section~\ref{sec: cross-layer coupling loss}.
We first train the model with $\mathcal{L}_{\mathrm{lb,cp}}$ and $\mathcal{L}_{\mathrm{lb,cp,sp}}$, respectively.
The coupling loss values in Figure~\ref{fig:combined_losses}\subref{fig:cp_with_or_without_sp} show that adding $\mathcal{R}_{\mathrm{sp}}$ consistently lowers
$\mathcal{R}_{\mathrm{cp}}$, indicating that stronger intra-layer specialization promotes clearer cross-layer coupling.
Conversely, we train the model with $\mathcal{L}_{\mathrm{lb,sp}}$ and $\mathcal{L}_{\mathrm{lb,sp,cp}}$, respectively.
The specialization loss values in Figure~\ref{fig:combined_losses}\subref{fig:sp_with_or_without_cp} show that adding $\mathcal{R}_{\mathrm{cp}}$ consistently lowers
$\mathcal{R}_{\mathrm{sp}}$, indicating that stabilizing cross-layer pathways reinforces within-layer expert differentiation.

\vspace{0.25em}
\noindent \textbf{Takeaway: a positive feedback loop.}
Combining Theorem~\ref{thm:weak-spec-sharp-app} (weak specialization pushes routing to be decisive)
and Theorem~\ref{thm:clear-routing-improves-region-app} (decisive routing amplifies specialization),
we obtain a positive feedback loop: even a weak loss advantage of the best expert tends to sharpen routing (lower entropy),
which increases the purity of each expert's effective training data; this purer training distribution then strengthens
regional experts and further amplifies specialization.
This loop complements our objectives in Sections~\ref{sec:intra_layer_specialization} and~\ref{sec: cross-layer coupling loss}:
$\mathcal{R}_{\mathrm{sp}}$ directly encourages functional differentiation within a layer,
while $\mathcal{R}_{\mathrm{cp}}$ stabilizes pathways across depth so that these specialization signals persist and propagate.
Figure~\ref{fig:feedback_loop_schematic} provides a schematic summary of this mechanism.

\begin{figure*}[t]
\centering
\vspace{-2mm}
\includegraphics[width=0.85\textwidth]{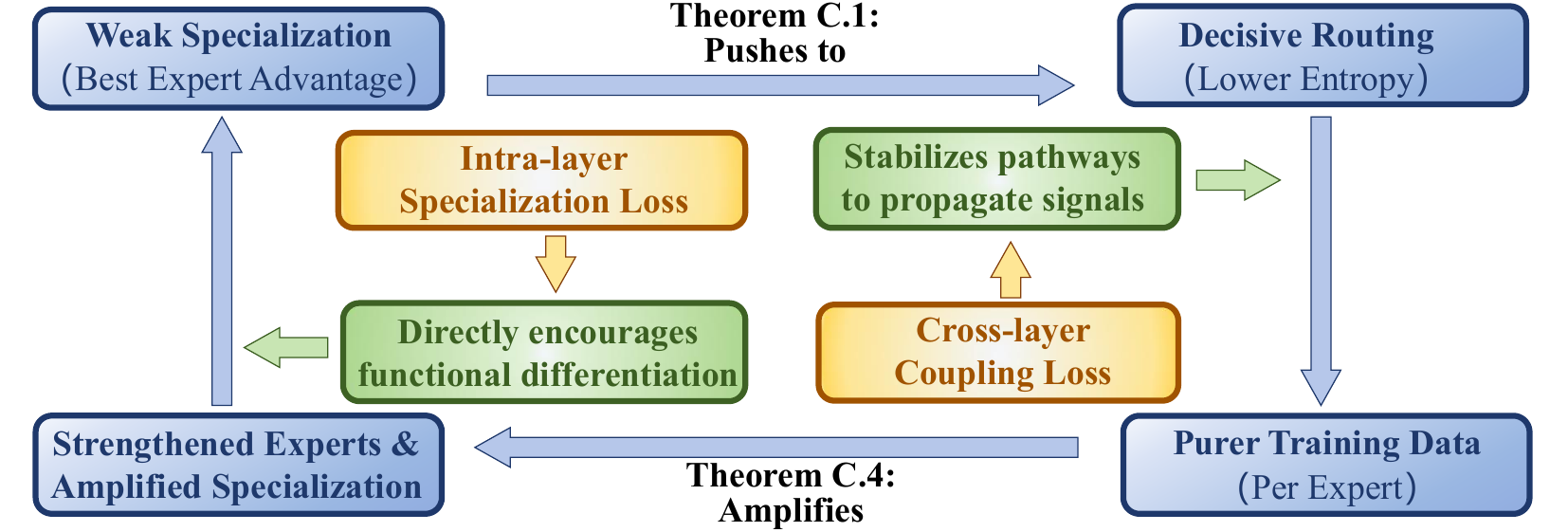}
\vspace{-2mm}
\caption{\textbf{The self-reinforcing cycle} illustrating how expert specialization and routing decisiveness amplify one another.}
\label{fig:feedback_loop_schematic}
\vspace{-2mm}
\end{figure*}

\section{Experiments}
In this section, we present a series of experiments to validate the performance of the proposed approaches. Additional details and additional experimental results can be found in Appendix \ref{appendix:experiments}.

\begin{table}[t]
\centering
\setlength{\tabcolsep}{3pt}
\caption{\small Validation perplexity ($\downarrow$) across model scales and auxiliary-loss configurations.}
\label{tab:moe-perplexity}
{\small
\begin{tabular}{l|ccc|ccc}
\toprule
\multicolumn{1}{c}{}& \multicolumn{3}{c}{Vanilla MoE} & \multicolumn{3}{c}{DeepSeek-style MoE} \\
\cmidrule(lr){2-4}\cmidrule(l){5-7}
\multicolumn{1}{c|}{\textbf{Losses}} & \textbf{Small} & \textbf{Medium} & \textbf{Large} & \textbf{Small} & \textbf{Medium} & \textbf{Large} \\
\midrule
\(\mathcal{L}_{\text{lb}}\)                & 14.01 & 12.50 &  9.68 & 13.54 & 12.33 &  9.56 \\
\(\mathcal{L}_{\text{lb,o,v}}\)            & 14.27 & 12.71 &  9.84 & 13.76 & 12.51 &  9.81 \\
\cellcolor{cyan!30}\(\mathcal{L}_{\text{lb,sp,cp}}\)       & \cellcolor{cyan!30}13.75 & \cellcolor{cyan!30}12.27 & \cellcolor{cyan!30}9.48 & \cellcolor{cyan!30}13.37 & \cellcolor{cyan!30}12.16 & \cellcolor{cyan!30}9.47 \\
\midrule
\(\mathcal{L}_{\text{lb,z}}\)              & 13.80 & 12.33 &  9.52 & 13.40 & 12.07 &  9.46 \\
\(\mathcal{L}_{\text{lb,z,o,v}}\)          & 14.15 & 12.67 &  9.82 & 13.69 & 12.36 &  9.77 \\
\cellcolor{orange!30}\(\mathcal{L}_{\text{lb,z,sp,cp}}\)   & \cellcolor{orange!30}13.63 & \cellcolor{orange!30}12.17 & \cellcolor{orange!30}9.42 & \cellcolor{orange!30}13.30 & \cellcolor{orange!30}11.99 & \cellcolor{orange!30}9.39 \\
\bottomrule
\end{tabular}}
\end{table}

\textbf{Comparison of validation perplexity. }
We evaluate on the C4 dataset \citep{raffel2020exploring} across three model scales for both Vanilla MoE and DeepSeek-style MoE. Table~\ref{tab:moe-perplexity} summarizes validation perplexity under different auxiliary-loss configurations. Beyond standard load-balancing (and optionally the router $z$-loss), we include a recent auxiliary-loss baseline from \citet{guo2025advancing} that regularizes orthogonality and routing-logit variance (denoted as $\mathcal{L}_{\text{lb,o,v}}$, and $\mathcal{L}_{\text{lb,z,o,v}}$ when combined with $z$-loss); further discussions for auxiliary-loss comparisons are deferred to Appendix~\ref{sec:compare_recent_aux}. Overall, our specialization-and-coupling regularizers consistently improve perplexity across scales and architectures, while the variance-on-logits baseline tends to degrade perplexity relative to the corresponding objectives. These trends hold both with and without $z$-loss, indicating that our activation/path-based regularization is complementary to standard router-stabilization and more robust than directly amplifying routing-logit dispersion. Notably, the gains are architecture-agnostic: the same objectives benefit both Vanilla MoE and DeepSeek-style MoE (with shared experts), and in some regimes the improved Vanilla MoE reaches or surpasses the DeepSeek-style baseline without increasing activated capacity, highlighting that targeted training objectives can rival architectural router modifications while remaining plug-and-play. We further report controlled ablations (Appendix~\ref{app:ablation}), three-seed repetitions (Appendix~\ref{app:seeds}), and hyperparameter sensitivity sweeps (Appendix~\ref{app:hparam}) to verify that these trends are robust beyond the main settings. After pre-training, we additionally benchmark the resulting checkpoints in a controlled zero-shot evaluation suite to measure downstream generalization; details and full results are provided in Appendix~\ref{app:zero-shot}.

\begin{table*}[t]
\centering
\caption{Downstream evaluation results (mean $\pm$ stderr) across multiple models.}
\label{tab:excel_downstream}

\resizebox{\textwidth}{!}{%
\setlength{\tabcolsep}{4pt} 
\begin{tabular}{@{}l l c c c c c c c c@{}}
\toprule
\textbf{Method} & \textbf{Model} & \textbf{MMLU} & \textbf{MMLU-Pro} & \textbf{HellaSwag} & \textbf{BBH} & \textbf{GPQA-Diamond} & \textbf{MBPP} & \textbf{HumanEval} & \textbf{GSM8K} \\
\midrule
$\mathcal{L}_{\text{lb}}$         & \multirow{4}*{DeepSeek-MOE} & 0.4143{\tiny$\pm$0.0040} & 0.1729{\tiny$\pm$0.0034} & 0.5852{\tiny$\pm$0.0049} & 0.4041{\tiny$\pm$0.0053} & 0.2323{\tiny$\pm$0.0301} &\textbf{ 0.4180}{\tiny$\pm$0.0221} & 0.2927{\tiny$\pm$0.0356} & 0.2661{\tiny$\pm$0.0122} \\
$\mathcal{L}_{\text{lb,z}}$       &                            & 0.3717{\tiny$\pm$0.0040} & 0.1410{\tiny$\pm$0.0032} & 0.5621{\tiny$\pm$0.0050} & 0.3864{\tiny$\pm$0.0053} & 0.2273{\tiny$\pm$0.0299} & 0.3200{\tiny$\pm$0.0209} & 0.2500{\tiny$\pm$0.0339} & 0.1789{\tiny$\pm$0.0106} \\
$\mathcal{L}_{\text{lb,o,v}}$     &                            & 0.4293{\tiny$\pm$0.0041} & 0.1735{\tiny$\pm$0.0034} & 0.5882{\tiny$\pm$0.0049} & 0.4386{\tiny$\pm$0.0055} & 0.2626{\tiny$\pm$0.0314} & 0.3940{\tiny$\pm$0.0219} & 0.2805{\tiny$\pm$0.0352} & 0.2896{\tiny$\pm$0.0125} \\
\rowcolor{cyan!20}
\textbf{$\mathcal{L}_{\text{lb,sp,cp}}$} &                    & \textbf{0.4586}{\tiny$\pm$0.0041} & \textbf{0.2276}{\tiny$\pm$0.0034} & \textbf{0.5906}{\tiny$\pm$0.0049} & \textbf{0.4558}{\tiny$\pm$0.0053} & \textbf{0.2828}{\tiny$\pm$0.0321} & 0.4100{\tiny$\pm$0.0220} & \textbf{0.3415}{\tiny$\pm$0.0371} & \textbf{0.3275}{\tiny$\pm$0.0128} \\
\midrule
$\mathcal{L}_{\text{lb}}$         & \multirow{4}*{DeepSeek-V2-Lite} & 0.5361{\tiny$\pm$0.0040} & 0.2605{\tiny$\pm$0.0039} & 0.5893{\tiny$\pm$0.0049} & 0.4346{\tiny$\pm$0.0056} & 0.2879{\tiny$\pm$0.0323} & 0.3680{\tiny$\pm$0.0216} & 0.3110{\tiny$\pm$0.0363} & 0.4617{\tiny$\pm$0.0137} \\
$\mathcal{L}_{\text{lb,z}}$       &                                & 0.5268{\tiny$\pm$0.0040} & 0.2103{\tiny$\pm$0.0037} & 0.5777{\tiny$\pm$0.0049} & 0.3860{\tiny$\pm$0.0054} & 0.2778{\tiny$\pm$0.0319} & 0.3660{\tiny$\pm$0.0216} & 0.3049{\tiny$\pm$0.0363} & 0.4147{\tiny$\pm$0.0136} \\
$\mathcal{L}_{\text{lb,o,v}}$     &                                & 0.5474{\tiny$\pm$0.0040} & 0.2523{\tiny$\pm$0.0039} & 0.5902{\tiny$\pm$0.0049} & 0.4264{\tiny$\pm$0.0056} & 0.3131{\tiny$\pm$0.0330}  & 0.4080{\tiny$\pm$0.0216} & 0.3049{\tiny$\pm$0.0361} & 0.4701{\tiny$\pm$0.0137} \\
\rowcolor{cyan!20}
\textbf{$\mathcal{L}_{\text{lb,sp,cp}}$} &                        & \textbf{0.5735}{\tiny$\pm$0.0040} & \textbf{0.3108}{\tiny$\pm$0.0039} & \textbf{0.6091}{\tiny$\pm$0.0049} & \textbf{0.4793}{\tiny$\pm$0.0055} & \textbf{0.3535}{\tiny$\pm$0.0341} & \textbf{0.4280}{\tiny$\pm$0.0216} & \textbf{0.3598}{\tiny$\pm$0.0371} & \textbf{0.5004}{\tiny$\pm$0.0138} \\
\midrule
AuxLossFree          & \multirow{4}*{Ling-mini-2.0} & 0.6998{\tiny$\pm$0.0036} & 0.4759{\tiny$\pm$0.0044} & 0.7449{\tiny$\pm$0.0044} & 0.6592{\tiny$\pm$0.0051} & 0.3636{\tiny$\pm$0.0343} & 0.6707{\tiny$\pm$0.0368} & 0.6402{\tiny$\pm$0.0376} & 0.8173{\tiny$\pm$0.0106} \\
AuxLossFree+z         &                     & 0.6878{\tiny$\pm$0.0036} & 0.4480{\tiny$\pm$0.0044} & 0.7396{\tiny$\pm$0.0044} & 0.6520{\tiny$\pm$0.0054} & 0.3535{\tiny$\pm$0.0341} & 0.6500{\tiny$\pm$0.0214} & 0.6220{\tiny$\pm$0.0380}    & 0.7885{\tiny$\pm$0.0112} \\
AuxLossFree+o+v      &                     & 0.7119{\tiny$\pm$0.0036} & 0.4892{\tiny$\pm$0.0044} & 0.7596{\tiny$\pm$0.0044} & 0.7016{\tiny$\pm$0.0048} & 0.3838{\tiny$\pm$0.0346} & 0.6620{\tiny$\pm$0.0212} & 0.6524{\tiny$\pm$0.0373} & 0.8309{\tiny$\pm$0.0103} \\
\rowcolor{cyan!20}
\textbf{AuxLossFree+cp+sp} &              & \textbf{0.7667}{\tiny$\pm$0.0036} & \textbf{0.5002}{\tiny$\pm$0.0044} & \textbf{0.7627}{\tiny$\pm$0.0042} & \textbf{0.7269}{\tiny$\pm$0.0048} & \textbf{0.3889}{\tiny$\pm$0.0347} & \textbf{0.6820}{\tiny$\pm$0.0208} & \textbf{0.6890}{\tiny$\pm$0.0363} & \textbf{0.8559}{\tiny$\pm$0.0101} \\
\bottomrule
\end{tabular}%
}
\end{table*}

\textbf{LoRA SFT Downstream Evaluation (16B-class MoE Backbones).} We assess downstream performance under a LoRA SFT setting on three $\sim$16B-scale MoE backbones---DeepSeek-MoE-16B \citep{dai2024deepseekmoe}, DeepSeek-V2-Lite \citep{liu2024deepseek}, and Ling-mini-2.0 \citep{tian2025towards}---evaluated on MMLU, MMLU-Pro, HellaSwag, BBH, GPQA-Diamond, MBPP, HumanEval, and GSM8K (Table~\ref{tab:excel_downstream}). To ensure a fair comparison, we match each backbone's \emph{auxiliary routing regularization} to its pre-training recipe. Specifically, DeepSeek backbones use load-balancing as the baseline auxiliary loss.Ling uses Auxiliary-Loss-Free load balance as the baseline, and we evaluate AuxLossFree+$\text{cp}$+$\text{sp}$. We also include the variance-on-logits auxiliary baseline of \citet{guo2025advancing}, which adds variance-related regularizers on top of the same $\mathcal{L}_{\text{CE}}$ (and on top of $\mathcal{L}_{\text{lb}}$ for DeepSeek), with standard stabilization to complete fine-tuning.

Across all three backbones, adding our coupling and specialization regularizers delivers the strongest overall results: for both DeepSeek-MoE and DeepSeek-V2-Lite, it consistently improves broad knowledge and hard reasoning benchmarks and yields non-trivial gains on code generation and arithmetic reasoning compared to the matched baseline, while also surpassing the variance-on-logits method on most metrics. Under the AuxLossFree-controlled setting, Ling-mini-2.0 exhibits the same qualitative pattern, where adding $\text{cp}+\text{sp}$ provides the most reliable and across-the-board improvements and remains stronger than the variance-based auxiliary, indicating that activation/path-based regularization transfers more robustly in SFT than directly maximizing routing-logit variance.

\begin{table}[t]
\centering
\caption{\small Evaluation score on Qwen3-30B-A3B-Instruct-2507 finetuning tasks. The last four rows stands for the performance for mmlu dataset with different domains.}
\label{tab:intl-bench}
\setlength{\tabcolsep}{4pt}
{\small
\begin{tabular}{llcc}
\toprule
\multicolumn{1}{c}{\textbf{Dataset}} & \multicolumn{1}{c}{\textbf{Metric}} & \(\mathcal{L}_{\text{lb}}\)  & \(\mathcal{L}_{\text{lb,sp,cp}}\) \\
\midrule
openai\_humaneval    & humaneval\_pass@1   & 92.07 & \cellcolor{cyan!30}\textbf{95.73} \\
gsm8k                & accuracy            & 93.33 & \cellcolor{cyan!30}\textbf{94.16} \\
math\_prm800k\_500   & accuracy            & 94.00 & \cellcolor{cyan!30}\textbf{94.20} \\
mmlu                 & naive\_average      & 78.97 & \cellcolor{cyan!30}\textbf{79.86} \\
mmlu-weighted        & weighted\_average   & 76.35 & \cellcolor{cyan!30}\textbf{77.10}  \\
\bottomrule
\end{tabular}}
\end{table}

\textbf{Full-Parameter Fine-Tuning Evaluation (Qwen3-30B-A3B).}We consider the fine-tuning tasks under \texttt{Qwen3-30B-A3B-Instruct-2507} model on an internal corpus of 38B tokens (see details in Appendix \ref{appendix: experimental finetune}). We evaluate on a broad suite of reasoning and knowledge-intensive benchmarks, including HumanEval \citep{chen2021evaluating}, GSM8K \citep{cobbe2021training}, math500\_PRM800K\_dataset \citep{lightman2023let}, and MMLU \citep{hendrycks2020measuring}. Across nearly all settings, incorporating \(\mathcal{R}_{\text{cp}}\) and \(\mathcal{R}_{\text{sp}}\) outperforms the baseline, yielding consistent gains on reasoning-oriented tasks as well as aggregate knowledge measures as Table \ref{tab:intl-bench}. While a minor fluctuation is observed on the humanities subset of MMLU, the overall trend remains positive, confirming that our objectives not only sharpen specialization in pre-training but also transfer effectively to finetuning adaptation.

\begin{figure}[!t]
  \centering
  \includegraphics[width=0.80\linewidth]{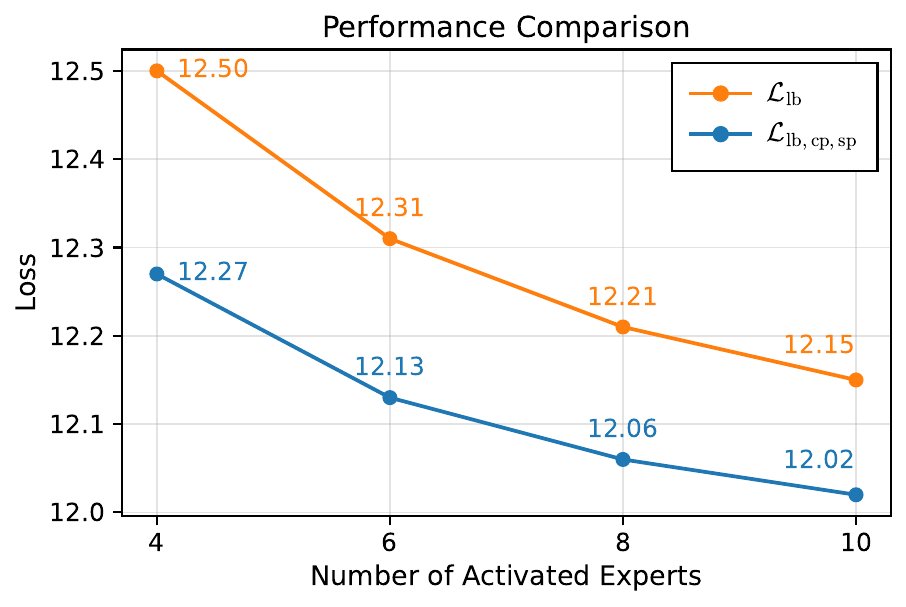}
  \caption{Scalability performance with varying number of activated experts ($N$) on medium-sized models.}
  \label{fig:n_scale}
\end{figure}

\begin{figure}[!t]
  \centering
  \includegraphics[width=0.80\linewidth]{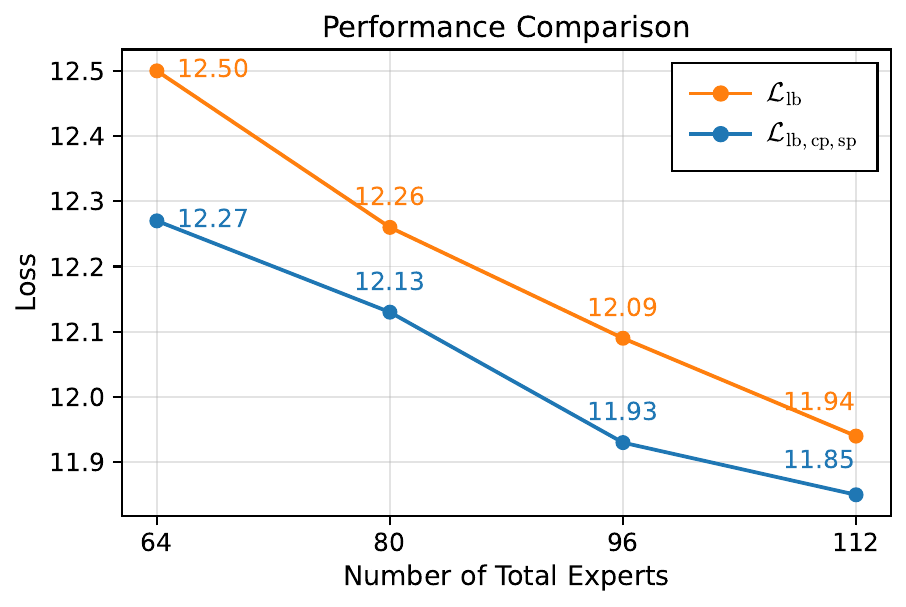}
  \caption{Scalability performance with varying total number of experts ($E$) on medium-sized models.}
  \label{fig:e_scale}
\end{figure}

\textbf{Scalability of the auxiliary loss.}
We evaluate scalability on \emph{medium-sized} MoE models by varying (i) the number of activated experts ($N$) and (ii) the total number of experts ($E$).
As shown in Figures~\ref{fig:n_scale} and~\ref{fig:e_scale}, our auxiliary objectives consistently achieve lower perplexity than the load-balance-only baseline across both axes.
Notably, with our loss, activating only $N{=}6$ experts already outperforms the baseline even with $N{=}10$ (12.13 vs.\ 12.15), and using a smaller expert pool $E{=}96$ surpasses the baseline with $E{=}112$ (11.93 vs.\ 11.94), indicating improved scaling efficiency with fewer active/total experts.


\section{Conclusion}
We presented two plug-and-play losses that directly optimize expert specialization in MoE models. The intra-layer specialization loss ($\mathcal{R}_{\text{sp}}$) penalizes activation similarity between experts processing identical tokens, while the cross-layer coupling loss ($\mathcal{R}_{\text{cp}}$) maximizes joint routing probabilities across adjacent layers to establish coherent expert pathways. These losses require no architectural modifications, integrate seamlessly with existing objectives, and are theoretically grounded. Empirically, our approach improves performance across all tested scales and MoE variants while increasing inference throughput through stable expert paths.

\newpage
\section*{Impact Statement}
This work focuses on improving our basic understanding of machine learning methods. Although the ideas presented here could have broader practical effects, we are not aware of any direct or significant societal impacts that require specific mention at this time.

\bibliography{icml2026}
\bibliographystyle{icml2026}

\newpage

\appendix

\onecolumn

\begin{center}
\LARGE
    \textbf{Appendix}
\end{center}
\vspace{5mm}

\section{Additional related works}
\label{appendix:related works}
Here we present some additional related works that have not been discussed in Section \ref{section:related works}.

\noindent\textbf{Cross-layer signals and information.}
Recent work observes that routing decisions and activations exhibit strong cross-layer correlations.
Read-ME precomputes routing decisions across layers and leverages inter-layer expert affinity to optimize scheduling and caching ~\citep{cai2024textit,yao2024exploiting}. Meanwhile, other studies demonstrate that inter-layer residuals—especially when exhibiting low-rank or redundant structures—can be harnessed to improve efficiency ~\citep{liu2024minicache,pagliardini2024denseformer,kong2025cr,zhang2025lax}.
These methods demonstrate that cross-layer structure is a robust empirical phenomenon, but they mainly use it for system or training efficiency rather than as an explicit learning signal.
In contrast, our cross-layer coupling loss turns inter-layer routing affinity into a supervised objective: it encourages tokens to follow coherent expert paths through depth, thereby reinforcing intra-layer specialization while remaining compatible with standard load-balancing objectives.

\section{Proofs for Gradient Decorrelation and Cross-Layer Specialization Propagation.}
\label{appendix:B}
In this section, we present the proofs for the proposed propositions in Section \ref{sec:intra_layer_specialization} and \ref{sec: cross-layer coupling loss}.

\subsection{Proof of Proposition \ref{prop:activation-similarity}}
\label{appendix: statement 1}

\begin{proposition}[Proposition \ref{prop:activation-similarity}]
\label{prop:activation-similarity1}
For any two activated experts $e,\nu\in\mathbb{A}_i^{(\ell)}$, the cosine similarity between the gradients of the total
loss $\mathcal{L}$ with respect to their down-projection matrices satisfies
\begin{equation}
\cos\!\left(
\frac{\partial\mathcal{L}}{\partial W_{\mathrm{down}}^{(\ell,e)}},
\frac{\partial\mathcal{L}}{\partial W_{\mathrm{down}}^{(\ell,\nu)}}
\right)
=
\cos\!\left(z_i^{(\ell,e)},\,z_i^{(\ell,\nu)}\right),
\end{equation}
where $z_i^{(\ell,e)}$ and $z_i^{(\ell,\nu)}$ are the corresponding intermediate activations.
\end{proposition}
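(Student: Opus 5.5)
The plan is a direct chain-rule computation for a single token $x_i$. The one structural fact it relies on is that $W_{\mathrm{down}}^{(\ell,e)}$ influences the total loss $\mathcal{L}$ only through the expert output $y_i^{(\ell,e)} = W_{\mathrm{down}}^{(\ell,e)} z_i^{(\ell,e)}$. That output in turn enters the network only via the combined layer output $y_i^{(\ell)}$ in Eq.~\eqref{eq:moe-output}.

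\textbf{Step 1: the gradients are rank-one.} Define the upstream gradient $g_i := \partial\mathcal{L}/\partial y_i^{(\ell)} \in \mathbb{R}^h$. This vector collects every downstream effect (later layers, the LM head, any later-layer regularizers) and is the same for all experts in layer $\ell$. Three observations:
\begin{itemize}
\item The routing scores $s_i^{(\ell,e)}$ depend on $x_i^{(\ell)}$ and $r^{(\ell,e)}$, not on $W_{\mathrm{down}}$.
\item The activation $z_i^{(\ell,e)}$ likewise does not depend on $W_{\mathrm{down}}$.
\item The same-layer terms of $\mathcal{R}_{\mathrm{sp}}$, $\mathcal{R}_{\mathrm{cp}}$ and $\mathcal{R}_{\mathrm{lb}}$ depend only on $z$'s and $s$'s, so they contribute nothing to this gradient.
\end{itemize}
Hence the chain rule gives
\[
\frac{\partial\mathcal{L}}{\partial W_{\mathrm{down}}^{(\ell,e)}} = s_i^{(\ell,e)}\, g_i \,\big(z_i^{(\ell,e)}\big)^{\top},
\qquad
\frac{\partial\mathcal{L}}{\partial W_{\mathrm{down}}^{(\ell,\nu)}} = s_i^{(\ell,\nu)}\, g_i \,\big(z_i^{(\ell,\nu)}\big)^{\top}.
\]

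\textbf{Step 2: evaluate the cosine under the Frobenius inner product.} For rank-one matrices, $\langle a b^{\top}, c d^{\top}\rangle_F = \langle a,c\rangle\langle b,d\rangle$ and $\|a b^{\top}\|_F = \|a\|\,\|b\|$. Applying these:
\begin{itemize}
\item The inner product of the two gradients is $s_i^{(\ell,e)} s_i^{(\ell,\nu)} \|g_i\|^2 \langle z_i^{(\ell,e)}, z_i^{(\ell,\nu)}\rangle$.
\item The product of their norms is $s_i^{(\ell,e)} s_i^{(\ell,\nu)} \|g_i\|^2 \|z_i^{(\ell,e)}\|\,\|z_i^{(\ell,\nu)}\|$, using that softmax scores are strictly positive.
\end{itemize}
The common factor $s_i^{(\ell,e)} s_i^{(\ell,\nu)}\|g_i\|^2$ cancels in the ratio. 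Because this factor is positive, no sign flip occurs, and what remains is exactly $\cos(z_i^{(\ell,e)}, z_i^{(\ell,\nu)})$.

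\textbf{Main obstacle: making the implicit assumptions explicit.} The computation itself is routine, so the real work is in the side conditions.
\begin{itemize}
\item \emph{Single token.} The identity is per-token. Over a batch, each gradient becomes a sum of rank-one terms $\sum_i s_i g_i z_i^{\top}$, and the equality generally fails. I will state that $\mathcal{L}$ is the loss on the single token $x_i$, matching the per-token form of Eq.~\eqref{eq:unified_objective_maintext}.
\item \emph{Nondegeneracy.} Both cosines must be defined, which requires $g_i \neq 0$ and $z_i^{(\ell,e)}, z_i^{(\ell,\nu)} \neq 0$. I will assume these and note them in the statement.
\item \emph{Combination weights.} If the implementation renormalizes the top-$k$ scores, the weights change but remain positive, so the argument is unaffected.
\item \emph{No other dependence on $W_{\mathrm{down}}$.} I will check that no same-layer regularizer depends on $W_{\mathrm{down}}$, so that Step 1 is exact rather than approximate.
\end{itemize}
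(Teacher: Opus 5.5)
Your proposal is correct and follows essentially the same route as the paper. Both arguments write each down-projection gradient as a positive routing weight times the rank-one matrix $g_i\,(z_i^{(\ell,e)})^{\top}$ with a common upstream gradient $g_i$, apply $\langle ab^\top, cd^\top\rangle_F=(a^\top c)(b^\top d)$ and $\|ab^\top\|_F=\|a\|_2\|b\|_2$, and cancel the shared positive factor; the paper first assumes equal weights and then reinstates $\alpha_i^{(\ell,e)}>0$, whereas you carry $s_i^{(\ell,e)}$ throughout. Your side conditions (a per-token loss, $g_i\neq 0$, nonzero activations) are left implicit in the paper's proof, and stating them is an improvement; strictly, ``single token'' should mean the gradient contribution along token $i$'s own path, since through attention other positions routed to expert $e$ would add further rank-one terms.
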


\begin{proof}
    As the routing weights do not affect the cosine, without loss of generality we assume that the activated experts contribute with equal weights. Then the output of MoE blocks for layer $\ell$ can be written as
    \begin{align}
        E(x_i^{(\ell)}):=\sum_{e\in\mathbb{A}_i^{(\ell)}}y_i^{(\ell,e)}.
    \end{align}
    Then for any $e\in\mathbb{A}_i^{(\ell)}$ it holds that
    \begin{align}
    \label{eq: partial y_l,e}
        \dfrac{\partial\mathcal{L}}{\partial y^{(\ell,e)}}=\dfrac{\partial\mathcal{L}}{\partial E(x_i^{(\ell)})}.
    \end{align}

    As $y_i^{(\ell,e)} = z_i^{(\ell,e)}W_{\text{down}}^{(\ell,e)}$, thus from \eqref{eq: partial y_l,e} it comes for any $e\in\mathbb{A}_i^{(\ell)}$ that:
    \begin{align}
    \label{eq: partial l w_down}
    \dfrac{\partial\mathcal{L}}{\partial W_{\text{down}}^{(\ell,e)}}=\dfrac{\partial\mathcal{L}}{\partial y_i^{(\ell,e)}}\dfrac{\partial y_i^{(\ell,e)}}{\partial W_{\text{down}}^{(\ell,e)}}=\dfrac{\partial\mathcal{L}}{\partial E(x_i^{(\ell)})}z_i^{(\ell,e)}.
    \end{align}

    Using the Frobenius inner-product identity
    \(\langle ab^\top,\; cd^\top\rangle_F = (a^\top c)(b^\top d)\) and
    \(\|ab^\top\|_F = \|a\|_2\|b\|_2\), we obtain that for $e_1,e_2\in\mathbb{A}_i^{(\ell)}$ it holds that
    \begin{equation}
    \begin{aligned}
    \cos\left(\dfrac{\partial\mathcal{L}}{\partial W_{\text{down}}^{(\ell,e_1)}},\dfrac{\partial\mathcal{L}}{\partial W_{\text{down}}^{(\ell,e_2)}}\right)
    =& \frac{\left[\left(z_i^{(\ell,e_1)}\right)^{\top}z_i^{(\ell,e_2)}\right]\cdot\left[\dfrac{\partial\mathcal{L}}{\partial \left(y_i^{(\ell,e)}\right)^{\top}}\dfrac{\partial\mathcal{L}}{\partial  y_i^{(\ell,e)}}\right]}
    {\left\|z_i^{(\ell,e_1)}\right\|_2\cdot\left\|\dfrac{\partial\mathcal{L}}{\partial \left(y_i^{(\ell,e)}\right)}\right\|_2\cdot\left\|z_i^{(\ell,e_2)}\right\|_2\cdot\left\|\dfrac{\partial\mathcal{L}}{\partial \left(y_i^{(\ell,e)}\right)}\right\|_2}\\
    =& \frac{z_i^{(\ell,e_1)}\left(z_i^{(\ell,e_2)}\right)^{\top}}{\left\|z_i^{(\ell,e_1)}\right\|_2\left\|z_i^{(\ell,e_2)}\right\|_2}
    = \cos\left(z_i^{(\ell,e_1)},z_i^{(\ell,e_2)}\right).
    \end{aligned}
    \end{equation}
    
    When condisering the case that each expert output is scaled by a positive routing weight, i.e.,
    \(\widetilde{y}_i^{(\ell,e)}=\alpha_i^{(\ell,e)}\cdot z_i^{(\ell,e)}W_{\text{down}}^{(\ell,e)}\), where $\alpha_i^{(\ell,e)}\in(0,1]$ is the routing weight. Similar to \eqref{eq: partial l w_down}, we can obtain that $$\dfrac{\partial\mathcal{L}}{\partial W_{\text{down}}^{(\ell,e)}}=\alpha_i^{(\ell,e)}\cdot\dfrac{\partial\mathcal{L}}{\partial E(x_i^{(\ell)})}z_i^{(\ell,e)}.$$ 
    Thus the common positive factor cancels in the cosine similarity, leaving the result unchanged.
\end{proof}

\subsection{Proof of Proposition \ref{lemma2}}
\label{appendix: statement 2}

\begin{proposition}[Proposition \ref{lemma2}]
\label{lemma2_1}
Let $\mathbb{A}_i^{(\ell)}$ denote the set of activated experts for token $x_i$ at layer $\ell$. Consider adjacent layers $\ell$ and $\ell+1$ satisfying:

\textbf{1. Representation continuity.} For a token $x_i$, its representations evolve smoothly across layers: $\cos(x_i^{(\ell)}, x_i^{(\ell+1)}) \ge 1 - \delta^2$ for small $\delta \in (0,1)$.

\textbf{2. Source-layer specialization.} Layer $\ell$ exhibits expert specialization with nearly orthogonal router weights: for experts $e_1 \in \mathbb{A}_i^{(\ell)}$ and $e_2 \in \mathbb{A}_j^{(\ell)}$ processing different tokens $x_i \neq x_j$, we have $|\cos(r^{(\ell,e_1)}, r^{(\ell,e_2)})| \le \varepsilon$ for a small $\varepsilon \in (0,1)$.

\textbf{3. Strong cross-layer coupling.} Adjacent layers exhibit stable expert pathways with high routing correlation. For any expert $e \in \mathbb{A}_i^{(\ell)}$ activated by token $x_i$, there exists a corresponding expert $\nu \in \mathbb{A}_i^{(\ell+1)}$ such that both routing decisions are confident:
$\cos(x_i^{(\ell)}, r^{(\ell,e)}) \ge 1 - \iota^2$ and $\cos(x_i^{(\ell+1)}, r^{(\ell+1,\nu)}) \ge 1 - \iota^2$ for small $\iota \in (0,1)$.

Under these conditions, layer $\ell+1$ inherits the specialization structure from layer $\ell$:
\begin{align}
\label{eq-cross-layer1}
\Bigl|\cos\bigl(r^{(\ell+1,\nu_1)}, r^{(\ell+1,\nu_2)}\bigr)\Bigr|
\le \varepsilon + O(\delta, \iota)
\end{align}
for experts $\nu_1 \in \mathbb{A}_i^{(\ell+1)}$ and $\nu_2 \in \mathbb{A}_j^{(\ell+1)}$ processing different tokens, where the error term $O(\delta,\iota)$ vanishes as $\delta$ and $\iota$ decrease to $0$.
\end{proposition}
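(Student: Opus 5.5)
The plan is to turn every hypothesis into a statement about angles on the unit sphere and then chain them with the triangle inequality for the angular metric. Write $\theta(u,v)=\arccos\cos(u,v)\in[0,\pi]$; this is a metric on directions, so $\theta(u,w)\le\theta(u,v)+\theta(v,w)$. A cosine bound $\cos(u,v)\ge 1-\eta^2$ gives $\theta(u,v)\le \arccos(1-\eta^2)\le c\,\eta$ for an absolute constant $c$. Indeed $1-\cos\theta=2\sin^2(\theta/2)\ge 2\theta^2/\pi^2$ on $[0,\pi]$, so $\theta\le \pi\eta/\sqrt2$.

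Fix $\nu_1\in\mathbb{A}_i^{(\ell+1)}$ and $\nu_2\in\mathbb{A}_j^{(\ell+1)}$. By Condition~3, read so that the confident next-layer expert attached to each token is the one in question, we get $\theta(x_i^{(\ell+1)},r^{(\ell+1,\nu_1)})\le c\iota$. We also get some $e_1\in\mathbb{A}_i^{(\ell)}$ with $\theta(x_i^{(\ell)},r^{(\ell,e_1)})\le c\iota$. Condition~1 gives $\theta(x_i^{(\ell)},x_i^{(\ell+1)})\le c\delta$. Chaining these three,
\begin{equation*}
\theta\bigl(r^{(\ell+1,\nu_1)},r^{(\ell,e_1)}\bigr)\le 2c\iota+c\delta .
\end{equation*}
The same argument for token $x_j$ produces $e_2\in\mathbb{A}_j^{(\ell)}$ with $\theta(r^{(\ell+1,\nu_2)},r^{(\ell,e_2)})\le 2c\iota+c\delta$.

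Condition~2 applied to $e_1,e_2$ gives $\theta(r^{(\ell,e_1)},r^{(\ell,e_2)})\in[\arccos\varepsilon,\pi-\arccos\varepsilon]$. Put $\rho=4c\iota+2c\delta$. Two applications of the triangle inequality, and the same for the reflected angle $\pi-\theta$ (i.e.\ replacing $r^{(\ell,e_2)}$ by $-r^{(\ell,e_2)}$ to handle the absolute value), yield
\begin{equation*}
\bigl|\theta(r^{(\ell+1,\nu_1)},r^{(\ell+1,\nu_2)})-\theta(r^{(\ell,e_1)},r^{(\ell,e_2)})\bigr|\le\rho .
\end{equation*}
Since $|\cos|$ is $1$-Lipschitz in the angle, we conclude
\begin{equation*}
\bigl|\cos(r^{(\ell+1,\nu_1)},r^{(\ell+1,\nu_2)})\bigr|\le \varepsilon+\rho=\varepsilon+O(\delta+\iota),
\end{equation*}
which vanishes as $\delta,\iota\to0$.

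An equivalent algebraic route avoids angles. Decompose each unit router vector as $\hat r^{(\ell+1,\nu_1)}=\hat r^{(\ell,e_1)}+d_1$ with $\|d_1\|=O(\delta+\iota)$; this follows from $\|\hat u-\hat v\|=\sqrt{2(1-\cos)}$ and the ordinary triangle inequality. Then expand the inner product: $|\langle\hat r^{(\ell+1,\nu_1)},\hat r^{(\ell+1,\nu_2)}\rangle|\le\varepsilon+\|d_1\|+\|d_2\|+\|d_1\|\|d_2\|$. I would present this version, since it is shorter.

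The main obstacle is the quantifier structure of Condition~3. As stated, it only guarantees that each $e$ has \emph{some} confident partner $\nu$, not that an arbitrary activated $\nu_1$ is confidently aligned with $x_i^{(\ell+1)}$. I would resolve this by interpreting the conclusion for the coupled partners $\nu_1,\nu_2$ furnished by Condition~3, which is the intended setting. Alternatively, one could assume confident routing for every activated expert at layer $\ell+1$. I would state this interpretation explicitly before the chaining.
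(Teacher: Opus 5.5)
Your proposal is correct, and the algebraic version you say you would present is essentially the paper's proof: convert each cosine hypothesis into a unit-vector distance bound via $\|\hat u-\hat v\|^2=2-2\cos(u,v)$, chain these with the triangle inequality to get $\|\hat r^{(\ell,e)}-\hat r^{(\ell+1,\nu)}\|\le\sqrt{2}(\delta+2\iota)$, and expand the inner product into four terms to obtain $\varepsilon+2\sqrt{2}(\delta+2\iota)+2(\delta+2\iota)^2$. The paper silently relies on the same reading of Condition~3 that you flag, namely that $\nu_1,\nu_2$ are the confident partners of some $e_1\in\mathbb{A}_i^{(\ell)}$ and $e_2\in\mathbb{A}_j^{(\ell)}$, so stating that interpretation explicitly is an improvement over the paper rather than a departure from it.
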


\begin{proof}
    From the first Assumption, It can be obtained that:
    \begin{align}
    \label{eq: x_l+1-x_l}
        \left\Vert\dfrac{x_i^{(\ell,e)}}{\left\Vert x_i^{(\ell,e)}\right\Vert}-\dfrac{x_i^{(\ell+1,e)}}{\left\Vert x_i^{(\ell+1,e)}\right\Vert}\right\Vert^2=2-2\cos\left(x_i^{(\ell,e)},x_i^{(\ell+1,e)}\right)\leq2\delta^2.
    \end{align}

    Similarly, it holds that:
    \begin{align}
    \label{eq: r_l-x_l}
        \left\Vert\dfrac{x_i^{(\ell,e)}}{\left\Vert x_i^{(\ell,e)}\right\Vert}-\dfrac{r^{(\ell,e)}}{\left\Vert r^{(\ell,e)}\right\Vert}\right\Vert^2\leq2\iota^2,\quad \left\Vert\dfrac{x_i^{(\ell+1,\nu)}}{\left\Vert x_i^{(\ell+1,\nu)}\right\Vert}-\dfrac{r^{(\ell+1,\nu)}}{\left\Vert r^{(\ell+1,\nu)}\right\Vert}\right\Vert^2\leq2\iota^2.
    \end{align}

    Then from Eq. \eqref{eq: x_l+1-x_l} and Eq. \eqref{eq: r_l-x_l}, it holds that
    \begin{equation}
    \label{eq: r_l+1-r_l}
    \begin{aligned}
        &\left\Vert\dfrac{r^{(\ell,e)}}{\left\Vert r^{(\ell,e)}\right\Vert}-\dfrac{r^{(\ell+1,\nu)}}{\left\Vert r^{(l+1,\nu)}\right\Vert}\right\Vert\\
        \leq&\left\Vert\dfrac{x_i^{(\ell,e)}}{\left\Vert x_i^{(\ell,e)}\right\Vert}-\dfrac{x_i^{(\ell+1,e)}}{\left\Vert x_i^{(\ell+1,e)}\right\Vert}\right\Vert+\left\Vert\dfrac{x_i^{(\ell,e)}}{\left\Vert x_i^{(\ell,e)}\right\Vert}-\dfrac{r^{(\ell,e)}}{\left\Vert r^{(\ell,e)}\right\Vert}\right\Vert+\left\Vert\dfrac{x_i^{(\ell+1,\nu)}}{\left\Vert x_i^{(\ell+1,\nu)}\right\Vert}-\dfrac{r^{(\ell+1,\nu)}}{\left\Vert r^{(\ell+1,\nu)}\right\Vert}\right\Vert\\
        \leq&\sqrt{2}\left(\delta+2\iota\right).
    \end{aligned}
    \end{equation}
    Then it holds that 
    \begin{align}
        \cos\left(r^{(\ell,e)},r^{(\ell+1,\nu)}\right)=1-\dfrac{1}{2}\left\Vert\dfrac{r^{(\ell,e)}}{\left\Vert r^{(\ell,e)}\right\Vert}-\dfrac{r^{(\ell+1,\nu)}}{\left\Vert r^{(\ell+1,\nu)}\right\Vert}\right\Vert^2\geq1 - (\delta+2\iota)^2.
    \end{align}

    Then we prove \eqref{eq-cross-layer}. Let 
    \begin{align*}
    \tilde{r}^{(\ell,e_1)}&:=\dfrac{r^{(\ell,e_1)}}{\left\Vert r^{(\ell,e_1)}\right\Vert},\quad\tilde{r}^{(\ell+1,\nu_1)}:=\dfrac{r^{(\ell+1,\nu_1)}}{\left\Vert r^{(\ell+1,\nu_1)}\right\Vert},\\
    \tilde{r}^{(\ell,e_2)}&:=\dfrac{r^{(\ell,e_2)}}{\left\Vert r^{(\ell,e_2)}\right\Vert},\quad\tilde{r}^{(\ell+1,\nu_2)}:=\dfrac{r^{(\ell+1,\nu_2)}}{\left\Vert r^{(\ell+1,\nu_2)}\right\Vert}.
    \end{align*}
    Then it comes that:
    \begin{equation}
    \begin{aligned}
        &\left|\left\langle\tilde{r}^{(\ell+1,\nu_1)},\tilde{r}^{(\ell+1,\nu_2)}\right\rangle\right|\\
        =&\left|\left\langle\tilde{r}^{(\ell,e_1)},\tilde{r}^{(\ell,e_2)}\right\rangle\right|+\left|\left\langle\tilde{r}^{(\ell,e_1)}-\tilde{r}^{(\ell+1,\nu_1)},\tilde{r}^{(\ell,e_2)}\right\rangle\right|+\left|\left\langle\tilde{r}^{(\ell,e_1)},\tilde{r}^{(\ell,e_2)}-\tilde{r}^{(\ell+1,\nu_2)}\right\rangle\right|\\
        &+\left|\left\langle\tilde{r}^{(\ell,e_1)}-\tilde{r}^{(\ell+1,\nu_1)},\tilde{r}^{(\ell,e_2)}-\tilde{r}^{(\ell+1,\nu_2)}\right\rangle\right|\\\leq&\varepsilon+2\sqrt{2}\left(\delta+2\iota\right)+2\left(\delta+2\iota\right)^2,
    \end{aligned}
    \end{equation}
    where the last inequality is from \eqref{eq: r_l+1-r_l}. Then we finish the proof of this lemma.
\end{proof}

\section{Theory Details for Specialization, Routing Sharpness, and Cross-Layer Coupling}
In this section, we aim to establish a comprehensive theoretical framework for the proposed specialization and coupling loss.
\label{appendix:theoretical}

\textbf{Notations. }
Throughout this appendix we consider a single MoE layer with $E\ge 2$ experts and \textbf{top-$1$ routing} ($k=1$) at inference.
Let $(t,y)\sim D$ be the data distribution. Each expert $e$ incurs per-token loss
\[
\ell_e(t)\;\coloneqq\;\ell\big(f_e(t),y\big),
\]
and the router produces logits $z(t)=(z_1(t),\dots,z_E(t))$ with softmax probabilities
\[
g_e(t)\;=\;\frac{\exp(z_e(t))}{\sum_{j=1}^E \exp(z_j(t))},\qquad e\in\{1,\dots,E\}.
\]
The standard soft-routing objective is the mixture loss
\[
L(t)\;\coloneqq\;\sum_{e=1}^E g_e(t)\,\ell_e(t).
\]
When the best expert is unique, we write $e^{\star}(t)\coloneqq\arg\min_e \ell_e(t)$ and define the (token-wise) margin
\[
\Delta(t)\;\coloneqq\;\min_{e\neq e^{\star}(t)}\big(\ell_e(t)-\ell_{e^{\star}(t)}(t)\big)\in[0,\infty).
\]

\textbf{Section Roadmap. }
Section~\ref{sec:unified_objective} in the main text presents a lightweight theoretical picture of how
$\mathcal{R}_{\mathrm{sp}}$ and $\mathcal{R}_{\mathrm{cp}}$ interact with routing and load balancing.
This section provides the formal counterparts: additional definitions, auxiliary quantities, and proofs that support
the claims summarized in Section~\ref{sec:unified_objective}. Concretely:
\begin{itemize}
    \item Appendix~\ref{appendix:c1} formalizes the specialization--routing mutual reinforcement mechanism (weak loss advantage $\Rightarrow$ decisive routing, and decisive routing $\Rightarrow$ specialization amplification), and derives an explicit entropy corollary.
    \item Appendix~\ref{app:coupling_loss} models cross-layer coupling as a learning signal, defines a permutation-invariant coupling coefficient, and proves a backward-transfer guarantee under strong coupling.
    \item Appendix~\ref{appendix: statement 3} provides sufficient conditions and constructions establishing compatibility with standard load balancing.
\end{itemize}

\subsection{Specialization–Routing Mutual Reinforcement}
\label{appendix:c1}
\subsubsection{Weak specialization $\Rightarrow$ decisive routing}

\begin{theorem}[Weak specialization implies high-probability decisive routing]
\label{thm:weak-spec-sharp-app}
We assume that
\begin{itemize}
\item[(A1)] (\textbf{Local uniqueness}) For $D$-almost every token $t$, the best expert $e^{\star}(t)$ is unique.
\item[(A2)] (\textbf{Weak specialization: high-probability margin}) There exist $\gamma_0>0$ and $\varepsilon_0\in[0,1)$ such that
\begin{equation}
\mathbb{P}_{(t,y)\sim D}\big[\Delta(t)\ge \gamma_0\big]\;\ge\;1-\varepsilon_0.
\label{eq:WS-app}
\end{equation}
\end{itemize}
Consider a router update in which $\{\ell_e(t)\}_{e=1}^E$ are treated as locally constant w.r.t.\ the router logits $z(t)$.
Then:
\begin{enumerate}
\item For any token $t$ with $\Delta(t)>0$ and $g_{e^{\star}(t)}(t)<1$,
\begin{equation}
\frac{\partial L(t)}{\partial z_{e^{\star}(t)}(t)}\;<\;0.
\label{eq:best-logit-up-app}
\end{equation}
Consequently, a gradient-descent update increases $z_{e^{\star}(t)}(t)$.
\item For any $\delta\in(0,1)$,
\begin{equation}
\mathbb{P}_{(t,y)\sim D}\!\left[g_{e^{\star}(t)}(t)\ge 1-\delta\right]
\;\ge\;
1-\varepsilon_0
-\frac{\mathbb{E}_{(t,y)\sim D}\!\left[L(t)-\ell_{e^{\star}(t)}(t)\right]}{\gamma_0\,\delta}.
\label{eq:Cweak-app}
\end{equation}
\end{enumerate}
\end{theorem}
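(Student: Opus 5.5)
Part 1 is a direct softmax-Jacobian computation, and Part 2 follows from a pointwise lower bound on the excess mixture loss plus Markov's inequality.

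\textbf{Part 1.} Write $e^\star=e^\star(t)$ and $g_e=g_e(t)$. Treating the $\ell_e$ as constants and using $\partial g_e/\partial z_{e^\star} = g_e(\mathbf{1}[e=e^\star]-g_{e^\star})$, we get
\[
\frac{\partial L}{\partial z_{e^\star}} = g_{e^\star}\Bigl(\ell_{e^\star} - \sum_e g_e \ell_e\Bigr) = -g_{e^\star}\sum_{e\neq e^\star} g_e\,(\ell_e-\ell_{e^\star}).
\]
Softmax probabilities are strictly positive, so $g_{e^\star}>0$. The hypothesis $g_{e^\star}<1$ gives $\sum_{e\ne e^\star}g_e = 1-g_{e^\star}>0$. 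Each difference satisfies $\ell_e-\ell_{e^\star}\ge\Delta(t)>0$. Hence
\[
\frac{\partial L}{\partial z_{e^\star}} \le -g_{e^\star}(1-g_{e^\star})\Delta(t) < 0,
\]
so a gradient-descent step $z\leftarrow z-\eta\nabla_z L$ with $\eta>0$ increases $z_{e^\star}$.

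\textbf{Part 2.} The key pointwise inequality is
\[
L(t)-\ell_{e^\star}(t) = \sum_{e\ne e^\star} g_e(\ell_e-\ell_{e^\star}) \ge \bigl(1-g_{e^\star}(t)\bigr)\Delta(t)\;\ge 0 .
\]
Define the bad events $B_1=\{\Delta(t)<\gamma_0\}$ and $B_2=\{\Delta(t)\ge\gamma_0,\ g_{e^\star}(t)<1-\delta\}$. The complement of $\{g_{e^\star}\ge 1-\delta\}$ is contained in $B_1\cup B_2$. By (A2), $\mathbb{P}[B_1]\le\varepsilon_0$. On $B_2$, the pointwise bound gives $L-\ell_{e^\star} > \delta\gamma_0$. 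Since $L-\ell_{e^\star}$ is nonnegative, Markov's inequality yields
\[
\mathbb{P}[B_2]\le \mathbb{P}\bigl[L-\ell_{e^\star}\ge \gamma_0\delta\bigr]\le \frac{\mathbb{E}[L-\ell_{e^\star}]}{\gamma_0\delta}.
\]
A union bound over $B_1$ and $B_2$ then gives exactly Eq.~\eqref{eq:Cweak-app}.

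\textbf{Technical points.} Assumption (A1) is what makes $e^\star(t)$, and hence $g_{e^\star(t)}(t)$ and $\Delta(t)$, well defined almost everywhere, so all events above are measurable up to null sets. Nonnegativity of $L-\ell_{e^\star}$ holds by definition of $e^\star$ and is exactly what Markov's inequality requires. There is no real obstacle here. The main care point is noting that Part 2 is a static statement about whatever router is in hand, not about the dynamics: Part 1 explains why training drives $\mathbb{E}[L-\ell_{e^\star}]$ down, and Part 2 converts a small value of that excess into high-probability decisive routing.
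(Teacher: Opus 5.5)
Your proof is correct and follows the paper's argument essentially step for step. Part 1 rests on the softmax-Jacobian identity $\partial L/\partial z_{e^\star} = g_{e^\star}(\ell_{e^\star}-L)$, and Part 2 combines the pointwise bound $L-\ell_{e^\star}\ge (1-g_{e^\star})\Delta$ with a split on the margin event $\{\Delta\ge\gamma_0\}$, a union bound, and Markov's inequality; the only cosmetic difference is that you record the quantitative bound $\partial L/\partial z_{e^\star}\le -g_{e^\star}(1-g_{e^\star})\Delta(t)$, where the paper simply argues $L>\ell_{e^\star}$.
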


\begin{proof}
Fix a token $t$ and abbreviate $\ell_e=\ell_e(t)$, $g_e=g_e(t)$, $z_e=z_e(t)$, and $L=L(t)$.
For the softmax, $\dfrac{\partial g_j}{\partial z_e}=g_j(\mathbf{1}\{j=e\}-g_e)$, hence
\begin{align}
\frac{\partial L}{\partial z_e}
=\sum_{j=1}^E \ell_j\frac{\partial g_j}{\partial z_e}
=\sum_{j=1}^E \ell_j g_j(\mathbf{1}\{j=e\}-g_e)
=\ell_e g_e-g_e\sum_{j=1}^E g_j\ell_j
=g_e(\ell_e-L).
\end{align}
Let $e^{\star}=e^{\star}(t)$. If $\Delta(t)>0$ and $g_{e^{\star}}<1$, then there exists at least one $e\neq e^{\star}$ with $g_e>0$ and $\ell_e>\ell_{e^{\star}}$.
As $L=\sum_j g_j\ell_j$ is a convex combination with positive mass on a value strictly larger than $\ell_{e^{\star}}$, we have $L>\ell_{e^{\star}}$.
Therefore $\frac{\partial L}{\partial z_{e^{\star}}}=g_{e^{\star}}(\ell_{e^{\star}}-L)<0$, proving~\eqref{eq:best-logit-up-app}.

For the high-probability sharpness bound, on any token $t$ where $e^{\star}(t)$ is unique and $\Delta(t)>0$,
\begin{align}
L(t)-\ell_{e^{\star}(t)}(t)
=\sum_{e\neq e^{\star}(t)} g_e(t)\big(\ell_e(t)-\ell_{e^{\star}(t)}(t)\big)
\ge
\sum_{e\neq e^{\star}(t)} g_e(t)\,\Delta(t)
=\Delta(t)\big(1-g_{e^{\star}(t)}(t)\big),
\end{align}
which implies
\begin{equation}
1-g_{e^{\star}(t)}(t)\;\le\;\frac{L(t)-\ell_{e^{\star}(t)}(t)}{\Delta(t)}.
\label{eq:mass-gap-app}
\end{equation}
Let $G\coloneqq\{\Delta(t)\ge \gamma_0\}$. By~\eqref{eq:WS-app}, $\mathbb{P}(G^c)\le \varepsilon_0$, where $G^c$ denotes the complement of event. Then for any $\delta\in(0,1)$,
\begin{equation}
\mathbb{P}\big(g_{e^{\star}(t)}(t)<1-\delta\big)
=\mathbb{P}\big(1-g_{e^{\star}(t)}(t)>\delta\big)
\le
\mathbb{P}(G^c)+\mathbb{P}\big(1-g_{e^{\star}(t)}(t)>\delta,\;G\big),
\end{equation}
For the event $G$, inequality~\eqref{eq:mass-gap-app} yields
$1-g_{e^{\star}(t)}(t)\le \dfrac{L(t)-\ell_{e^{\star}(t)}(t)}{\gamma_0}$, then we can obtain
\begin{equation}
\mathbb{P}\big(1-g_{e^{\star}(t)}(t)>\delta,\;G\big)
\le
\mathbb{P}\!\left(\frac{L(t)-\ell_{e^{\star}(t)}(t)}{\gamma_0}>\delta\right)
=
\mathbb{P}\big(L(t)-\ell_{e^{\star}(t)}(t)>\gamma_0\delta\big).
\end{equation}
Applying Markov's inequality to the nonnegative random variable $L(t)-\ell_{e^{\star}(t)}(t)$ gives
\begin{equation}
\mathbb{P}\big(L(t)-\ell_{e^{\star}(t)}(t)>\gamma_0\delta\big)
\le
\frac{\mathbb{E}\!\left[L(t)-\ell_{e^{\star}(t)}(t)\right]}{\gamma_0\delta}.
\end{equation}
Combining the last three displays and using $\mathbb{P}(G^c)\le \varepsilon_0$ yields
\[
\mathbb{P}\big(g_{e^{\star}(t)}(t)\ge 1-\delta\big)
\ge
1-\varepsilon_0
-\frac{\mathbb{E}\!\left[L(t)-\ell_{e^{\star}(t)}(t)\right]}{\gamma_0\delta},
\]
which proves~\eqref{eq:Cweak-app}.
\end{proof}

Theorem~\ref{thm:weak-spec-sharp-app} formalizes a simple but important point:
the standard MoE mixture objective already prefers \emph{decisive} routing.
As soon as one expert attains even a weak per-token loss advantage, the router gradient increases that expert's logit,
so probability mass moves toward the best expert.
Moreover, if the mixture loss stays close to the best-expert loss on average (a small oracle gap),
then the router must place nearly all mass on the best expert for most tokens, yielding low-entropy routing.
This is exactly the direction in which $\mathcal{R}_{\mathrm{sp}}$ helps: by discouraging expert overlap, it makes such loss advantages more persistent and easier to amplify.

\begin{remark}[Why is the oracle gap typically small during training?]
Let $G(t):=L(t)-\ell_{e^\star(t)}(t)\ge 0$ denote the \emph{oracle gap} between the router’s soft mixture loss $L(t)$
and the best-expert loss $\ell_{e^\star(t)}(t)$ at token $t$.
In practice, $G(t)$ tends to shrink during training for two coupled reasons.
First, router updates increase the logit (and hence probability) of the current best expert, which decreases the mixture loss $L(t)$.
Second, expert updates reduce $\ell_e(t)$ on the tokens they repeatedly receive, which makes best-expert advantages more pronounced over time.
Once experts become even mildly differentiated, this interaction naturally drives routing to become more decisive, pushing $L(t)$ closer to the oracle loss.
\end{remark}

We next convert the high-probability sharpness event in Theorem~\ref{thm:weak-spec-sharp-app} into an explicit upper bound on router entropy, which serves as a convenient routing-clarity diagnostic.

\begin{corollary}[Entropy bound as a consequence of decisive routing]
\label{cor:entropy-app}
Let $H(g(t))\coloneqq-\sum_{e=1}^E g_e(t)\log g_e(t)$.
For any $\delta\in(0,1/2]$, on the event $\{g_{e^{\star}(t)}(t)\ge 1-\delta\}$ we have
\begin{equation}
H(g(t))
\;\le\;
h(\delta)+\delta\log(E-1),
\qquad
h(\delta)\coloneqq-\delta\log\delta-(1-\delta)\log(1-\delta).
\label{eq:entropy-bound-app}
\end{equation}
Consequently, with the same lower bound as in~\eqref{eq:Cweak-app},
\begin{align}
\mathbb{P}_{(t,y)\sim D}\!\left[
H(g(t))\le h(\delta)+\delta\log(E-1)
\right]
\;\ge\;
1-\varepsilon_0
-\frac{\mathbb{E}_{(t,y)\sim D}\!\left[L(t)-\ell_{e^{\star}(t)}(t)\right]}{\gamma_0\,\delta}.
\end{align}
\end{corollary}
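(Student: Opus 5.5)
The argument has two parts: a deterministic entropy bound on the event $\{g_{e^{\star}(t)}(t)\ge 1-\delta\}$, and then a transfer of the probability bound from Theorem~\ref{thm:weak-spec-sharp-app}. Fix a token $t$ and abbreviate $g_e=g_e(t)$, $e^{\star}=e^{\star}(t)$, and $p:=1-g_{e^{\star}}\le\delta$.

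\textbf{Step 1: grouping decomposition.} We use the chain rule for entropy, collapsing all non-best experts into a single outcome:
\[
H(g)=h(p)+p\,H\!\left(\frac{g_{e}}{p}:e\neq e^{\star}\right),
\]
with the convention that the second term is $0$ when $p=0$. The conditional distribution on the remaining $E-1$ experts has entropy at most $\log(E-1)$, since the uniform distribution maximizes entropy on $E-1$ points. This gives
\[
H(g)\le h(p)+p\log(E-1).
\]

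\textbf{Step 2: monotonicity in $p$.} The right-hand side $\phi(p)=h(p)+p\log(E-1)$ is nondecreasing on $[0,1/2]$. Indeed, $h'(p)=\log\frac{1-p}{p}\ge 0$ for $p\le 1/2$, and $\log(E-1)\ge 0$ because $E\ge 2$. Since $p\le\delta\le 1/2$, we get $\phi(p)\le\phi(\delta)$, which is \eqref{eq:entropy-bound-app}. This is the only place the restriction $\delta\le 1/2$ is used, and it is the step requiring the most care: without it, $h$ is no longer monotone on the relevant range.

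\textbf{Step 3: probability statement.} Step 2 shows the event inclusion
\[
\{g_{e^{\star}(t)}(t)\ge 1-\delta\}\subseteq\{H(g(t))\le h(\delta)+\delta\log(E-1)\}.
\]
By monotonicity of probability, $\mathbb{P}$ of the entropy event is at least $\mathbb{P}$ of the sharpness event. That probability is bounded below by \eqref{eq:Cweak-app} of Theorem~\ref{thm:weak-spec-sharp-app}, applied with the same $\delta\in(0,1/2]\subset(0,1)$, which yields the corollary. Measurability of $e^{\star}(t)$ off a null set follows from (A1), so the events are well defined almost surely.
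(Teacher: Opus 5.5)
Your proof is correct and is essentially the paper's argument. The paper bounds $H(g(t))$ by spreading the residual mass $1-g_{e^{\star}(t)}(t)$ uniformly over the other $E-1$ experts (via concavity of $-x\log x$), which gives exactly the bound $h(1-g_{e^{\star}})+(1-g_{e^{\star}})\log(E-1)$ your grouping identity produces; it then uses monotonicity on $[0,1/2]$ and the same event inclusion to inherit the lower bound from Theorem~\ref{thm:weak-spec-sharp-app}.

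One minor aside: because you treat $\phi(p)=h(p)+p\log(E-1)$ jointly, $\phi'(p)=\log\frac{(1-p)(E-1)}{p}\ge 0$ up to $p=(E-1)/E$. So $\delta\le 1/2$ is sharp only for $E=2$, and your remark that it is essential slightly overstates things.
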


\begin{proof}
Fix a token $t$ and let $p\coloneqq g_{e^{\star}(t)}(t)$. Consider all probability vectors $g\in\Delta^{E-1}$ with $g_{e^{\star}}=p$.
Among these, the Shannon entropy $H(g)$ is maximized when the remaining mass $1-p$ is spread uniformly over the other $E-1$ coordinates, i.e.,
$g_e=\dfrac{1-p}{E-1}$ for all $e\neq e^{\star}$ (this follows from the strict concavity of $x\mapsto -x\log x$ and Jensen's inequality).
Therefore,
\begin{align}
H(g(t))
\le
-\;p\log p\;-\sum_{e\neq e^{\star}}\frac{1-p}{E-1}\log\frac{1-p}{E-1}
=
-\;p\log p\;-\;(1-p)\log\frac{1-p}{E-1}.
\end{align}
Rewriting gives
\begin{align}
H(g(t))
\le
\underbrace{\big(-p\log p-(1-p)\log(1-p)\big)}_{=\,h(1-p)}
+(1-p)\log(E-1)
=
h(1-p)+(1-p)\log(E-1).
\end{align}
As $p\ge 1-\delta$, we have $1-p\le \delta\le 1/2$, and since $h(\cdot)$ is nondecreasing on $[0,1/2]$,
it follows that $h(1-p)\le h(\delta)$ and $(1-p)\log(E-1)\le \delta\log(E-1)$.
This proves~\eqref{eq:entropy-bound-app}. The probability statement follows by observing that the event
$\{g_{e^{\star}(t)}(t)\ge 1-\delta\}$ implies the event $\{H(g(t))\le h(\delta)+\delta\log(E-1)\}$ and applying~\eqref{eq:Cweak-app}.
\end{proof}

Corollary~\ref{cor:entropy-app} converts the sharpness event $g_{e^\star}(t)\ge 1-\delta$
into an explicit upper bound on the router entropy.
Combined with Theorem~\ref{thm:weak-spec-sharp-app}, it provides a clean “loss margin $\Rightarrow$ low entropy” link without introducing any extra entropy regularizer.

\subsubsection{decisive routing $\Rightarrow$ specialization amplification}

We now formalize the reverse direction: \emph{if routing is sufficiently clear and aligned, each expert trains on a cleaner
distribution concentrated on tokens where it already has advantage, which forces its performance on that region to improve.}
This result is stated in a deliberately weak (but assumption-light) form that captures the positive feedback mechanism.

Let $\{S_e^{\star}\}_{e=1}^E$ be measurable subsets of $\mathcal{T}$ that partition the token space up to $D$-null sets
(i.e., $D(\cup_e S_e^{\star})=1$ and $D(S_e^{\star}\cap S_{e'}^{\star})=0$ for $e\neq e'$), interpreted as the current ``advantage regions''.
Given router weights $g(e\mid t)$ and data distribution $D$, define the \emph{effective training distribution} seen by expert $e$:
\begin{equation}
D_e(A)
\;\coloneqq\;
\frac{\mathbb{E}_{(t,y)\sim D}\big[g(e\mid t)\mathbf{1}\{(t,y)\in A\}\big]}
     {\mathbb{E}_{(t,y)\sim D}[g(e\mid t)]},
\qquad A\subseteq\mathcal{T}\times\mathcal{Y}.
\label{eq:De-def-app}
\end{equation}
Define the \emph{purity} of expert $e$ as
\begin{equation}
\alpha_e
\;\coloneqq\;
\mathbb{P}_{(t,y)\sim D_e}\big[t\in S_e^{\star}\big]
=
D_e\big(S_e^{\star}\times\mathcal{Y}\big).
\label{eq:purity-def-app}
\end{equation}
Finally define the effective risk and the region-conditional risk:
\[
R_e^{\mathrm{eff}}(f_e)\;\coloneqq\;\mathbb{E}_{(t,y)\sim D_e}\big[\ell(f_e(t),y)\big],
\qquad
R_e^{S}(f_e)\;\coloneqq\;\mathbb{E}_{(t,y)\sim D_e}\big[\ell(f_e(t),y)\mid t\in S_e^{\star}\big].
\]

\begin{theorem}[Clear routing improves region-conditional risk (a weak specialization amplifier)]
\label{thm:clear-routing-improves-region-app}
Fix an expert $e$ and assume:
\begin{itemize}
\item[(B1)] (\textbf{High purity}) $\alpha_e\ge 1-\eta$ for some $\eta\in(0,1)$.
\item[(B2)] (\textbf{Effective optimization}) After a training phase, expert $e$ achieves
\begin{equation}
R_e^{\mathrm{eff}}(f_e^{\mathrm{new}})
\;\le\;
R_e^{\mathrm{eff}}(f_e^{\mathrm{old}})-\Delta_e
\qquad\text{for some }\Delta_e>0.
\label{eq:eff-risk-decrease-app}
\end{equation}
\item[(B3)] (\textbf{Bounded loss}) $0\le \ell(\hat y,y)\le B$ almost surely for some $B>0$.
\end{itemize}
Then the region-conditional risk on the advantage region satisfies
\begin{equation}
R_e^{S}(f_e^{\mathrm{new}})
\;\le\;
R_e^{S}(f_e^{\mathrm{old}})
-
(\Delta_e-\eta B).
\label{eq:region-risk-improve-app}
\end{equation}
In particular, if $\Delta_e>\eta B$ and $\eta$ is small (clear, well-aligned routing), then expert $e$ must
strictly improve on its advantage region $S_e^{\star}$ under $D_e(\cdot\mid t\in S_e^{\star})$.
\end{theorem}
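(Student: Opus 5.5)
The plan is to decompose the effective risk $R_e^{\mathrm{eff}}$ into its in-region and out-of-region parts and to compare the old and new models term by term. The key observation is that, for a fixed router $g(e\mid t)$, the effective distribution $D_e$ and hence the purity $\alpha_e$ are the same before and after the expert update; only $f_e$ changes. Abbreviate $\ell^{\mathrm{new}}:=\ell(f_e^{\mathrm{new}}(t),y)$ and $\ell^{\mathrm{old}}:=\ell(f_e^{\mathrm{old}}(t),y)$. For either model we write
\[
R_e^{\mathrm{eff}}(f_e)=\alpha_e\,R_e^{S}(f_e)+(1-\alpha_e)\,R_e^{\bar S}(f_e),
\]
where $R_e^{\bar S}$ denotes the conditional risk under $D_e$ given $t\notin S_e^{\star}$. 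If $\alpha_e=1$, the second term is simply absent.

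Subtracting the decomposition for $f_e^{\mathrm{new}}$ from the one for $f_e^{\mathrm{old}}$ and applying (B2) gives
\[
\alpha_e\bigl(R_e^{S}(f_e^{\mathrm{old}})-R_e^{S}(f_e^{\mathrm{new}})\bigr)\;\ge\;\Delta_e-(1-\alpha_e)\bigl(R_e^{\bar S}(f_e^{\mathrm{new}})-R_e^{\bar S}(f_e^{\mathrm{old}})\bigr).
\]
By (B3), the difference of the two out-of-region conditional risks has absolute value at most $B$. By (B1), $1-\alpha_e\le\eta$. Together these yield
\[
\alpha_e\bigl(R_e^{S}(f_e^{\mathrm{old}})-R_e^{S}(f_e^{\mathrm{new}})\bigr)\ge\Delta_e-\eta B .
\]

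The main obstacle is removing the factor $\alpha_e$ on the left-hand side, and this needs a case split.

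\textbf{Case $\Delta_e-\eta B\ge 0$.} The improvement $R_e^{S}(f_e^{\mathrm{old}})-R_e^{S}(f_e^{\mathrm{new}})$ is then nonnegative. Since $\alpha_e\le 1$, we have
\[
R_e^{S}(f_e^{\mathrm{old}})-R_e^{S}(f_e^{\mathrm{new}})\;\ge\;\alpha_e\bigl(R_e^{S}(f_e^{\mathrm{old}})-R_e^{S}(f_e^{\mathrm{new}})\bigr)\;\ge\;\Delta_e-\eta B,
\]
which is exactly \eqref{eq:region-risk-improve-app}. This is also the case covered by the "in particular" clause of the theorem.

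\textbf{Case $\Delta_e-\eta B<0$.} Dividing by $\alpha_e$ would weaken the bound to $(\Delta_e-\eta B)/\alpha_e$, which is more negative than $\Delta_e-\eta B$, so this route does not close the gap. I would therefore first try a sharper bookkeeping: bound the out-of-region contribution directly as an unnormalized expectation,
\[
\mathbb{E}_{D_e}\bigl[(\ell^{\mathrm{new}}-\ell^{\mathrm{old}})\mathbf{1}\{t\notin S_e^{\star}\}\bigr]\le(1-\alpha_e)B\le \eta B,
\]
and work with $\alpha_e R^{S}$ as a single quantity rather than dividing.

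If that still leaves a residual factor, I would fall back on one of two fixes. The first is to note that the weaker statement $\alpha_e\cdot(\text{improvement})\ge\Delta_e-\eta B$ is all that the substantive "in particular" conclusion requires. The second is to use the bound $B$ on $R_e^{S}$ itself: the residual $(1/\alpha_e-1)\,|\Delta_e-\eta B|$ is of order $\eta B$, so it can be absorbed, possibly at the cost of replacing the constant with $2\eta B$.

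The remaining steps are routine. The decomposition is well defined as long as $\mathbb{E}_D[g(e\mid t)]>0$ (so that $D_e$ exists) and $\alpha_e>0$, which follows from $\eta<1$. The bound $|R_e^{\bar S}(f_e^{\mathrm{new}})-R_e^{\bar S}(f_e^{\mathrm{old}})|\le B$ follows immediately from the fact that the loss takes values in $[0,B]$.
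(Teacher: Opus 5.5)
Your decomposition, the subtraction step, and your Case $\Delta_e\ge\eta B$ are correct and are exactly the paper's argument; that case is also what delivers the ``in particular'' clause. One minor slip: your intermediate display should carry $+(1-\alpha_e)\bigl(R_e^{\bar S}(f_e^{\mathrm{new}})-R_e^{\bar S}(f_e^{\mathrm{old}})\bigr)$, not a minus sign. This does no harm once you bound that difference by $B$ in absolute value.

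The genuine gap is your Case $\Delta_e<\eta B$. No bookkeeping will close it, because the displayed inequality is false there. Take $B=1$, $0<\Delta_e<\eta\le\tfrac12$, and $\alpha_e=1-\eta$. For instance, use one token in $S_e^{\star}$ and one outside, with routing weights chosen so that $D_e$ puts mass $1-\eta$ on the first. Let the expert move from in-/out-of-region risks $(0,1)$ to $\bigl(\tfrac{\eta-\Delta_e}{1-\eta},0\bigr)$. Then $R_e^{\mathrm{eff}}$ goes from $\eta$ to $\eta-\Delta_e$, so (B1)--(B3) all hold. Yet $R_e^S(f_e^{\mathrm{new}})=\tfrac{\eta-\Delta_e}{1-\eta}>\eta-\Delta_e$, which the theorem forbids. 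Concretely, $\eta=\tfrac12$, $\Delta_e=\tfrac14$ gives $\tfrac12$ against the claimed bound $\tfrac14$.

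The paper's proof has the same hole, unflagged. Its last step passes from $\alpha_e\Delta R^S\le-\Delta_e+\eta B$ to $\Delta R^S\le-\Delta_e+\eta B$ ``using $\alpha_e\le1$''. That is valid only when the right-hand side is nonpositive, i.e.\ only in your Case~1. It also writes $\Delta R^{\bar S}\le B$, whereas the bound actually used is $\Delta R^{\bar S}\ge-B$.

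So your unnormalized-expectation idea cannot help: $\mathbb{E}_{D_e}[(\ell^{\mathrm{new}}-\ell^{\mathrm{old}})\mathbf{1}\{t\in S_e^{\star}\}]$ is just $\alpha_e\Delta R^S$ again. Your fallbacks necessarily change the statement. A clean repair valid in all cases is to divide by $\alpha_e$ and use $\Delta_e/\alpha_e\ge\Delta_e$:
\[
R_e^{S}(f_e^{\mathrm{new}})\le R_e^{S}(f_e^{\mathrm{old}})-\Bigl(\Delta_e-\tfrac{1-\alpha_e}{\alpha_e}B\Bigr)\le R_e^{S}(f_e^{\mathrm{old}})-\Bigl(\Delta_e-\tfrac{\eta}{1-\eta}B\Bigr).
\]
Alternatively, keep the constant $\eta B$ but add the hypothesis $\Delta_e\ge\eta B$. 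Your $2\eta B$ variant is also true. However, the ``residual of order $\eta B$'' argument only works directly for $\eta\le\tfrac12$, since the residual is at most $\tfrac{\eta^2}{1-\eta}B$. For larger $\eta$ you additionally need the trivial bound $R_e^{S}\in[0,B]$, used separately in the regime $\Delta_e\le(2\eta-1)B$.
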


\begin{proof}
Under $D_e$, decompose the effective risk by conditioning on whether $t\in S_e^{\star}$:
\begin{align}
R_e^{\mathrm{eff}}(f_e)
=
\mathbb{E}_{D_e}\big[\ell(f_e(t),y)\big]
=
\alpha_e\,\mathbb{E}_{D_e}\big[\ell(f_e(t),y)\mid t\in S_e^{\star}\big]
+(1-\alpha_e)\,\mathbb{E}_{D_e}\big[\ell(f_e(t),y)\mid t\notin S_e^{\star}\big].
\end{align}
Let $R_e^{\bar S}(f_e)\coloneqq \mathbb{E}_{D_e}[\ell(f_e(t),y)\mid t\notin S_e^{\star}]$.
Define the changes
\begin{align}
\Delta R^{\mathrm{eff}}
\coloneqq
R_e^{\mathrm{eff}}(f_e^{\mathrm{new}})-R_e^{\mathrm{eff}}(f_e^{\mathrm{old}}),
\qquad
\Delta R^{S}
\coloneqq
R_e^{S}(f_e^{\mathrm{new}})-R_e^{S}(f_e^{\mathrm{old}}),
\qquad
\Delta R^{\bar S}
\coloneqq
R_e^{\bar S}(f_e^{\mathrm{new}})-R_e^{\bar S}(f_e^{\mathrm{old}}).
\end{align}
By the above decomposition applied to $f_e^{\mathrm{new}}$ and $f_e^{\mathrm{old}}$ and subtracting, we get
\begin{align}
\Delta R^{\mathrm{eff}}
=
\alpha_e\,\Delta R^{S}+(1-\alpha_e)\,\Delta R^{\bar S}.
\end{align}
Assumption~\eqref{eq:eff-risk-decrease-app} gives $\Delta R^{\mathrm{eff}}\le -\Delta_e$.
By boundedness (B3), both $R_e^{\bar S}(f_e^{\mathrm{new}})$ and $R_e^{\bar S}(f_e^{\mathrm{old}})$ lie in $[0,B]$, hence
$\Delta R^{\bar S}\le B$. Therefore,
\begin{align}
\alpha_e\,\Delta R^{S}
=
\Delta R^{\mathrm{eff}}-(1-\alpha_e)\Delta R^{\bar S}
\le
-\Delta_e+(1-\alpha_e)B.
\end{align}
Using (B1), $1-\alpha_e\le \eta$ and $\alpha_e\le 1$, yielding
\begin{align}
\Delta R^{S}
\le
-\Delta_e+\eta B.
\end{align}
Rearranging gives~\eqref{eq:region-risk-improve-app}.
\end{proof}

Theorem~\ref{thm:clear-routing-improves-region-app} captures the reverse direction of the loop.
Once routing is sharp and aligned, each expert's effective training distribution becomes \emph{high-purity}:
it is dominated by the subset of tokens where that expert already has comparative advantage.
Under this higher-purity data distribution, progress on the effective risk must translate into progress on the expert's preferred region,
so the expert’s regional advantage widens over time.
In short, decisive routing is not only about low entropy—it is a data-selection mechanism that reduces gradient interference and \emph{amplifies} functional specialization.

\subsection{Cross-layer coupling as a learning signal}
\label{app:coupling_loss}
So far we focused on the within-layer feedback mechanism; we next show how cross-layer coupling provides an additional learning signal that stabilizes and propagates these effects across depth.

\textbf{Observation: cross-layer coupling.}
Recent work has identified an emergent property in MoE training known as \emph{cross-layer coupling}
\citep{cai2024textit,yao2024exploiting}:
the expert activated at layer $\ell$ strongly predicts the expert activated at layer $\ell+1$.
Empirically, this phenomenon is visible early in training and becomes increasingly pronounced as optimization proceeds .
These observations suggest that coupling is not merely an artifact of optimization,
but reflects a meaningful structural regularity in how tokens are routed through depth.
\begin{figure}[t]
  \centering
  \captionsetup{font=small}
  \includegraphics[width=0.55\linewidth]{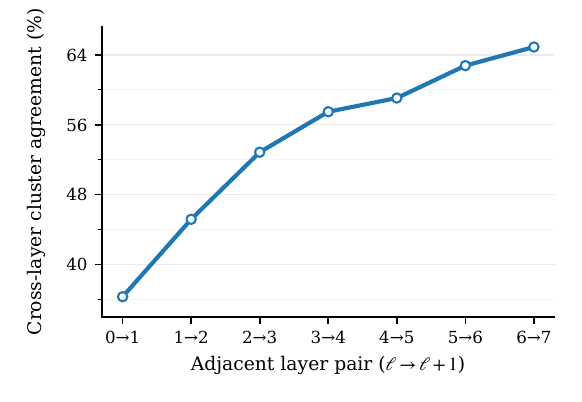}
  \caption{Cross-layer cluster agreement (\%) for adjacent layer pairs $(\ell \!\to\! \ell{+}1)$ under top-1 routing.}

  \label{fig:cluster-coupling}
\end{figure}

\textbf{A token-distribution prior view.}
We consider the \textbf{top-1 routing setting}, i.e., each token activates exactly one expert per layer. We provide an intuitive explanation for why such coupling naturally arises.
Tokens are not uniformly distributed in semantic space; rather, they exhibit latent structure .
For example, some tokens predominantly encode mathematical reasoning patterns,
others linguistic syntax, and others domain-specific factual knowledge.
This induces a \emph{latent, relatively good token-to-expert allocation}:
a partition of the token distribution into subsets for which different experts are more suitable .

Importantly, we do \emph{not} assume this allocation is globally optimal or explicitly known.
Instead, it should be viewed as a soft prior induced by the data distribution itself.
During training, routers at different layers implicitly attempt to approximate this same underlying allocation.
Cross-layer coupling emerges when multiple layers align to this shared token-distribution prior,
up to permutations of expert identities .

\textbf{Empirical justification of the prior.}
To justify the token-distribution prior, we approximate it by a natural partition of the pre-router token distribution at each layer and evaluate how consistent these partitions are across depth. Concretely, we use a 0.4B MoE with total experts $E{=}8$ and activated experts $k{=}1$ per layer. At every layer $\ell$, we run $k$-means to cluster tokens into $E$ groups and apply a distance-greedy balancing step to equalize cluster sizes. Because cluster indices are arbitrary per layer, we align clusters between layers $\ell$ and $\ell{+}1$ using the Hungarian algorithm (maximum bipartite matching over the inter-layer confusion matrix). We report the percentage of tokens whose cluster at layer $\ell$ matches the aligned cluster at layer $\ell{+}1$; the partition difference is $1$ minus this agreement (Fig.~\ref{fig:cluster-coupling}).

\textbf{Why coupling strengthens with depth.}
A key empirical observation is that cross-layer coupling becomes stronger in deeper layers . (See Fig.~\ref{fig:coupling-heatmaps-depth}, where deeper-layer pairs exhibit sharper coupling patterns .)
Our modeling provides a natural explanation for this phenomenon.

Let $h_\ell(t)\in\mathbb{R}^{d_\ell}$ denote the representation of token $t$ at layer $\ell$.
For a given latent token-to-expert assignment $A(t)$ induced by the data distribution,
define the best achievable linear routing error at layer $\ell$ as
\[
\alpha_\ell
\;\coloneqq\;
\inf_{W_\ell\in\mathbb{R}^{E\times d_\ell}}
\mathbb{P}_{t\sim D}\!\left[
\arg\max_e \langle W_{\ell,e},h_\ell(t)\rangle \neq A(t)
\right].
\]
In deep networks, representations typically become more linearly separable with depth \citep{zhang2023understanding},
suggesting $\alpha_{\ell+1}\le \alpha_\ell$ in many regimes.

Since standard MoE routers are approximately linear classifiers on $h_\ell(t)$,
their routing error can be decomposed into:
(i) a \emph{representation-induced approximation error} (captured by $\alpha_\ell$),
and (ii) an \emph{optimization error} due to imperfect training.
As depth increases, the approximation error decreases,
allowing later-layer routers to more faithfully align with the latent token allocation.
As a result, adjacent layers increasingly agree on routing decisions,
leading to stronger cross-layer coupling in deeper parts of the network.

\textbf{Using later-layer routing as a learning signal.}
The discussion above suggests that, on average, routing decisions at layer $\ell+1$
are a more accurate proxy of the latent token allocation than those at layer $\ell$.
This motivates a simple but powerful idea:
\emph{use routing decisions from deeper layers as a self-supervised target to guide earlier layers}.
By encouraging adjacent layers to agree on expert identities,
we can propagate improved routing structure backward through depth,
reducing routing ambiguity and accelerating specialization.

\textbf{Coupling coefficient and backward transfer guarantee.}
For top-1 routing, let
\[
C_\ell(t)\in\{1,\dots,E\},\qquad C_{\ell+1}(t)\in\{1,\dots,E\}
\]
denote the selected experts at layers $\ell$ and $\ell+1$ for token $t$.
We quantify cross-layer pathway consistency using the permutation-invariant coupling coefficient
\[
\kappa_{\ell\to\ell+1}
\;\coloneqq\;
\max_{\pi\in S_E}
\mathbb{P}_{t\sim D}\!\left[\pi\!\big(C_\ell(t)\big)=C_{\ell+1}(t)\right].
\]
A value of $\kappa_{\ell\to\ell+1}$ close to $1$ indicates that,
after relabeling experts, most tokens follow consistent expert identities across the two layers.
\begin{figure}[t]
  \centering
  \captionsetup{font=small}
  \begin{subfigure}[t]{0.32\linewidth}
    \centering
    \includegraphics[width=\linewidth]{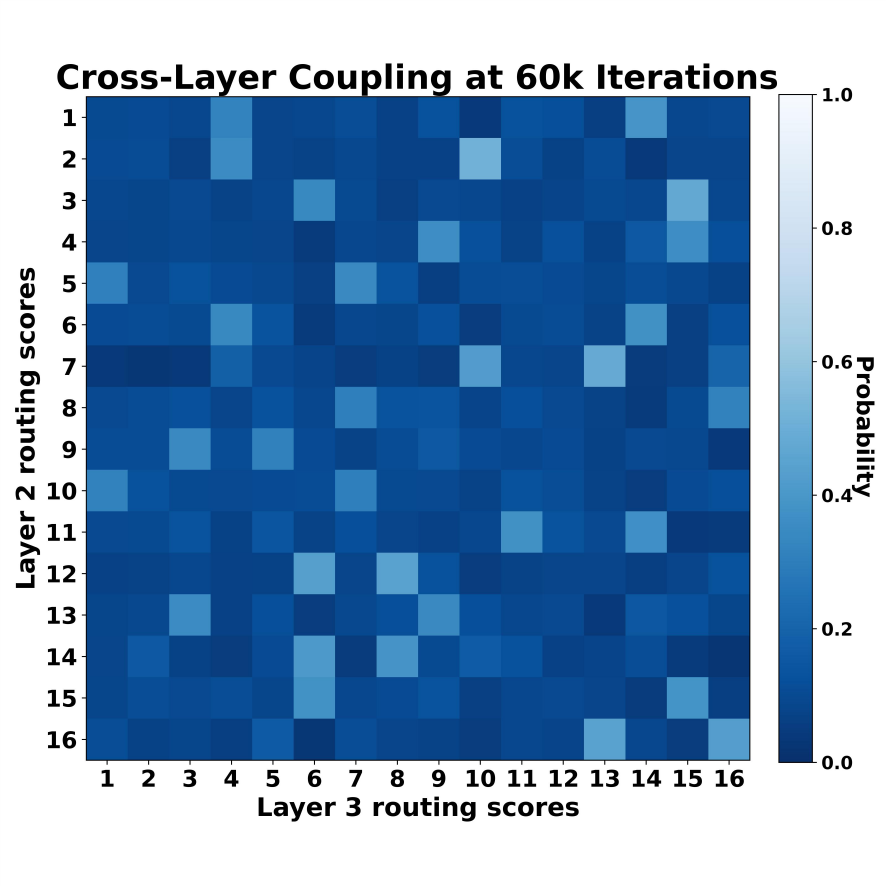}
    \caption{Layers 2$\to$3}
    \label{fig:coupling-2-3}
  \end{subfigure}\hfill
  \begin{subfigure}[t]{0.32\linewidth}
    \centering
    \includegraphics[width=\linewidth]{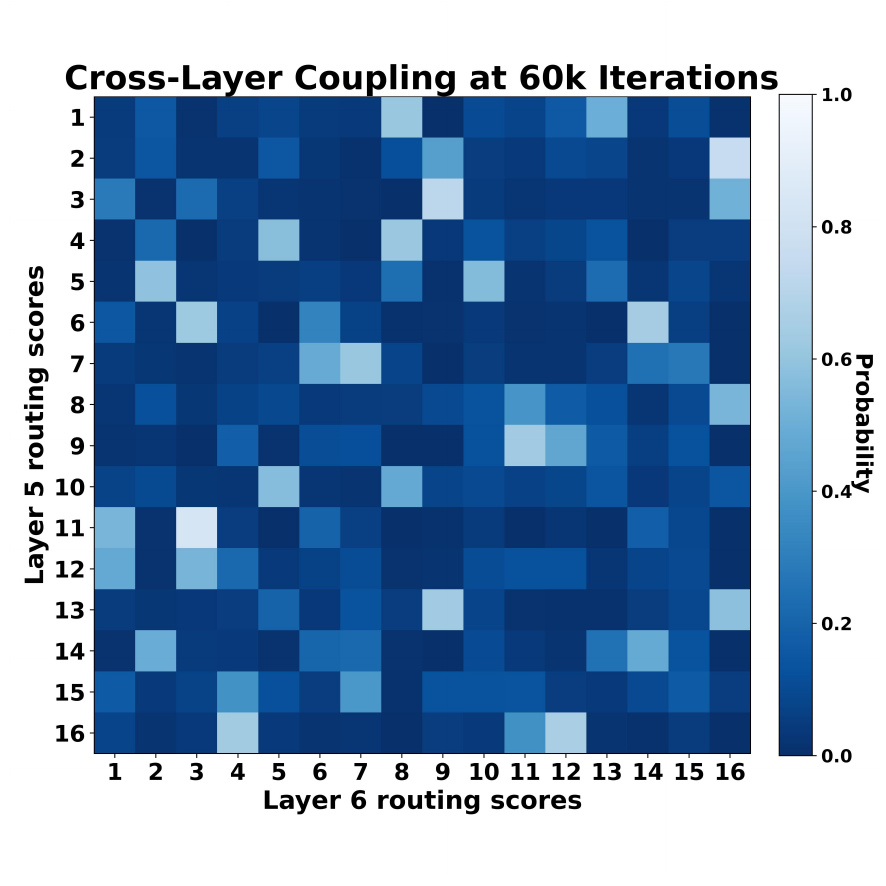}
    \caption{Layers 5$\to$6}
    \label{fig:coupling-5-6}
  \end{subfigure}\hfill
  \begin{subfigure}[t]{0.32\linewidth}
    \centering
    \includegraphics[width=\linewidth]{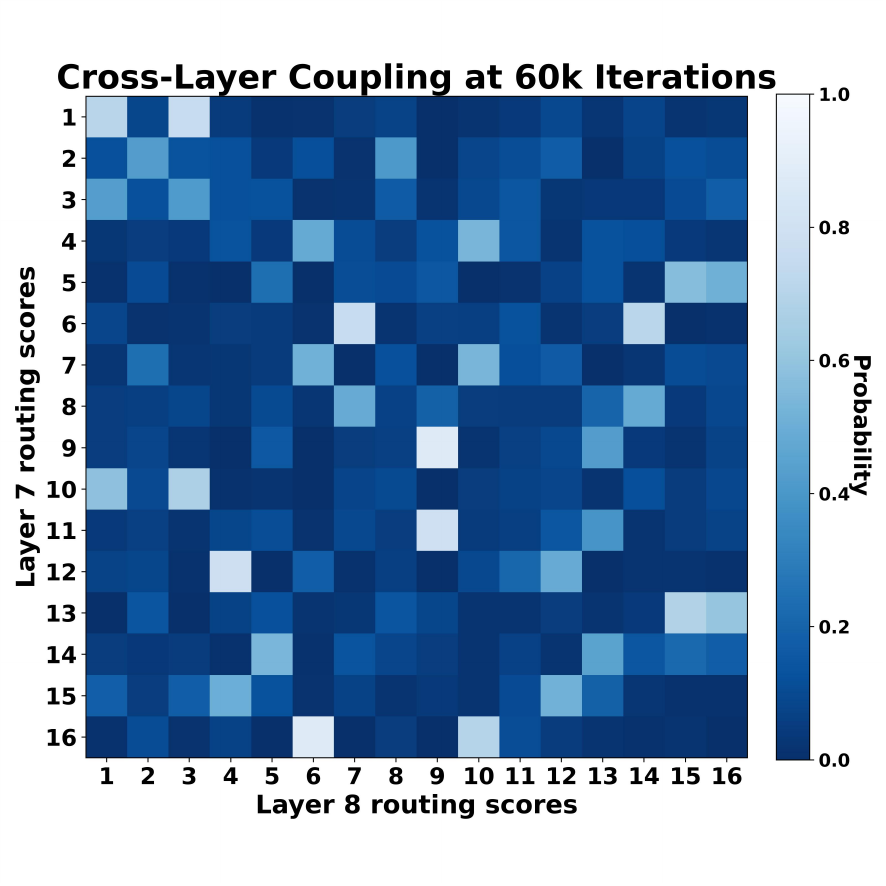}
    \caption{Layers 7$\to$8}
    \label{fig:coupling-7-8}
  \end{subfigure}
  \caption{Cross-layer coupling heatmaps. Coupling becomes more pronounced in deeper layers.}
  \label{fig:coupling-heatmaps-depth}
\end{figure}

\subsubsection{Backward transfer via cross-layer coupling}
We now formalize the above intuition with a simple \emph{backward-transfer} guarantee:
if adjacent layers exhibit strong cross-layer coupling, then aligning the earlier-layer routing to the later-layer routing bounds how far it can drift from the same latent token allocation.
For clarity, we restrict attention to the top-1 routing setting ($k=1$), where each token activates exactly one expert per layer.

Specifically, let $\mathcal{T}$ denote the token space and let $t\sim D$ be drawn from the data distribution.
Assume the existence of an underlying latent assignment
\[
A:\mathcal{T}\to\{1,\dots,E\},
\]
which represents an idealized data-induced expert partition.

For layer $\ell$, let $C_\ell(t)\in\{1,\dots,E\}$ denote the expert selected by the router
for token $t$.
Define the permutation-invariant cross-layer coupling coefficient
\begin{equation}
\kappa_{\ell\to\ell+1}
\;\coloneqq\;
\max_{\pi\in S_E}
\mathbb{P}_{t\sim D}\!\left[\pi\!\big(C_\ell(t)\big)=C_{\ell+1}(t)\right],
\label{eq:appendix-kappa}
\end{equation}
where $S_E$ is the symmetric group on $\{1,\dots,E\}$.

Define the routing error of layer $\ell+1$ relative to $A$ as
\[
\varepsilon_{\ell+1}
\;\coloneqq\;
\mathbb{P}_{t\sim D}\!\left[C_{\ell+1}(t)\neq A(t)\right].
\]

\begin{theorem}[Backward transfer via cross-layer coupling]
\label{thm:appendix-backward}
There exists a permutation $\pi^\star\in S_E$ such that the aligned routing decision
$\pi^\star(C_\ell(t))$ satisfies
\begin{equation}
\mathbb{P}_{t\sim D}\!\left[\pi^\star\!\big(C_\ell(t)\big)\neq A(t)\right]
\;\le\;
\varepsilon_{\ell+1}+\bigl(1-\kappa_{\ell\to\ell+1}\bigr).
\label{eq:appendix-backward-bound}
\end{equation}
\end{theorem}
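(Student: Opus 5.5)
Choose $\pi^\star$ to be the maximizer in the definition~\eqref{eq:appendix-kappa} of the coupling coefficient. The maximum exists because $S_E$ is finite, so some permutation attains it. With this choice,
\[
\mathbb{P}_{t\sim D}\bigl[\pi^\star(C_\ell(t))=C_{\ell+1}(t)\bigr]=\kappa_{\ell\to\ell+1},
\qquad\text{equivalently}\qquad
\mathbb{P}_{t\sim D}\bigl[\pi^\star(C_\ell(t))\neq C_{\ell+1}(t)\bigr]=1-\kappa_{\ell\to\ell+1}.
\]
No property of $A$ or of the routers is needed here beyond measurability of the events involved. We treat $C_\ell$, $C_{\ell+1}$ and $A$ as measurable maps into $\{1,\dots,E\}$.

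The core step is a pointwise set inclusion, which amounts to the triangle inequality for the discrete (Hamming) metric on labels. Fix a token $t$ with $\pi^\star(C_\ell(t))\neq A(t)$. Then either $\pi^\star(C_\ell(t))\neq C_{\ell+1}(t)$, or else $\pi^\star(C_\ell(t))=C_{\ell+1}(t)$, and in the latter case $C_{\ell+1}(t)\neq A(t)$. This gives
\[
\{t:\pi^\star(C_\ell(t))\neq A(t)\}\subseteq\{t:\pi^\star(C_\ell(t))\neq C_{\ell+1}(t)\}\cup\{t:C_{\ell+1}(t)\neq A(t)\}.
\]

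Apply the union bound to this inclusion under $D$. The first event has probability $1-\kappa_{\ell\to\ell+1}$ by the choice of $\pi^\star$. The second has probability $\varepsilon_{\ell+1}$ by definition. Summing the two gives exactly~\eqref{eq:appendix-backward-bound}.

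There is no substantial obstacle; the only point requiring care is that the permutation must be the \emph{same} for the coupling event and the final statement. That is why $\pi^\star$ is fixed as the maximizer of the coupling coefficient rather than re-optimized against $A$. One could then remark that re-optimizing over $\pi$ against $A$ can only decrease the left-hand side, so the bound also holds for the best alignment of layer $\ell$ to $A$.
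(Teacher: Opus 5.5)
Your proposal is correct and matches the paper's proof. In both, $\pi^\star$ is fixed as a maximizer in the definition of $\kappa_{\ell\to\ell+1}$, which exists since $S_E$ is finite. Both then use the inclusion $\{\pi^\star(C_\ell(t))\neq A(t)\}\subseteq\{C_{\ell+1}(t)\neq A(t)\}\cup\{\pi^\star(C_\ell(t))\neq C_{\ell+1}(t)\}$ and apply the union bound.

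Your closing remark, that the best alignment of layer $\ell$ to $A$ also satisfies the bound, is valid. It follows because the statement is existential.
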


\begin{proof}
By definition of $\kappa_{\ell\to\ell+1}$, there exists
\[
\pi^\star\in\arg\max_{\pi\in S_E}
\mathbb{P}_{t\sim D}\!\left[\pi\!\big(C_\ell(t)\big)=C_{\ell+1}(t)\right]
\]
such that
\[
\mathbb{P}_{t\sim D}\!\left[\pi^\star\!\big(C_\ell(t)\big)=C_{\ell+1}(t)\right]
=\kappa_{\ell\to\ell+1}.
\]

Consider the event
\[
\mathcal{E}
\;\coloneqq\;
\left\{\pi^\star\!\big(C_\ell(t)\big)\neq A(t)\right\}.
\]
Then
\[
\mathcal{E}
\subseteq
\left\{C_{\ell+1}(t)\neq A(t)\right\}
\;\cup\;
\left\{\pi^\star\!\big(C_\ell(t)\big)\neq C_{\ell+1}(t)\right\}.
\]
Taking probabilities and applying the union bound yields
\begin{align*}
\mathbb{P}(\mathcal{E})
&\le
\mathbb{P}\!\left[C_{\ell+1}(t)\neq A(t)\right]
+
\mathbb{P}\!\left[\pi^\star\!\big(C_\ell(t)\big)\neq C_{\ell+1}(t)\right] \\
&=
\varepsilon_{\ell+1}
+
\Bigl(1-\mathbb{P}\!\left[\pi^\star\!\big(C_\ell(t)\big)=C_{\ell+1}(t)\right]\Bigr) \\
&=
\varepsilon_{\ell+1}+\bigl(1-\kappa_{\ell\to\ell+1}\bigr),
\end{align*}
which proves the claim.
\end{proof}
Theorem~\ref{thm:appendix-backward} explains why cross-layer agreement is a meaningful training signal.
If layer $\ell{+}1$ routing is closer to an underlying token allocation (small $\kappa_{\ell+1}$),
then forcing $g_\ell$ to align with $g_{\ell+1}$ bounds how far $g_\ell$ can drift from that same allocation.
This gives a principled view of $\mathcal{R}_{\mathrm{cp}}$:
later-layer routing can act as a “teacher signal” that transfers stable pathways backward, reducing routing ambiguity early in training.

\subsection{Compatibility with load balancing}
\label{appendix: statement 3}

In this section, we present the complete statements and proofs of Proposition~\ref{lemma 3} and Proposition~\ref{lemma 4} which regards the compatibility between the load balancing condition, the intra-layer specialization loss, and the cross-layer coupling loss.

Before presenting the proof, we note that \textit{exact} load balancing is not achievable when the batch size is not divisible by the number of experts. However, since the imbalance per expert is at most one token, and the batch size in practice is large, this discrepancy is negligible. Thus, in this subsection, we assume the batch size is divisible by the number of experts without loss of generality.

\subsubsection{Balanced partition exists under orthogonality-style structure}

The following proposition demonstrates that load balancing can be maintained under conditions of expert orthogonality, illustrating the compatibility between the intra-layer specialization loss and load balancing:
\begin{proposition}
\label{lemma 3}
Suppose $k=1$. For $e=1,2,\cdots,E$, denote $\mathcal{P}^{(\ell,e)}$ as the input space in which all the token can activate the $e$-th expert in layer $l$. Then there is always possible that the token space $\mathcal{P}^{(\ell,1)},\mathcal{P}^{(\ell,2)},\cdots,\mathcal{P}^{(\ell,E)}$ are convex, connected, and disjoint. Moreover, each $\mathcal{P}^{(\ell,e)}$ contains ${B}/{E}$ elements for the batch of input tokens.
\end{proposition}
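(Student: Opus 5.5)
This is an existence statement: it says that \emph{some} choice of router vectors makes the routing regions convex, connected, disjoint and balanced on the batch. So I plan to construct such a router explicitly rather than analyse a trained one. With $k=1$, token $x$ goes to expert $e$ exactly when $\langle x, r^{(\ell,e)}\rangle > \langle x, r^{(\ell,j)}\rangle$ for all $j\neq e$ (softmax is monotone). Hence
\[
\mathcal{P}^{(\ell,e)}=\bigcap_{j\neq e}\{x:\langle x,r^{(\ell,e)}-r^{(\ell,j)}\rangle>0\}.
\]
Each such set is an intersection of open half-spaces, so it is convex and therefore connected. Regions with distinct indices are disjoint because strict argmax is unique; ties form a measure-zero set and can be broken by a fixed rule. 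For any choice of router vectors, then, convexity, connectivity and disjointness come for free. The only real content is balancedness: each region must contain exactly $B/E$ of the batch tokens.

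To get balancedness, I would sort the tokens along a generic direction and cut them into equal blocks.
\begin{enumerate}
\item Choose a unit vector $u$ such that the projections $t_b=\langle x_b,u\rangle$ of the batch tokens $x_1,\dots,x_B$ are pairwise distinct. This holds for all $u$ outside finitely many hyperplanes, assuming the tokens are distinct; if two tokens coincide, assign the duplicates consistently and allow the at-most-one-token slack noted earlier.
\item Sort the projections and pick thresholds $\tau_1<\dots<\tau_{E-1}$ strictly between consecutive sorted values. The thresholds split the line into $E$ intervals, each containing exactly $B/E$ tokens, using divisibility.
\item Realise these intervals as a softmax argmax with router vectors of the form $r^{(\ell,e)}=a_e u+c_e w$. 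Here I use the standard one-dimensional trick: the upper envelope of the lines $a_e t+b_e$ with increasing slopes $a_1<\dots<a_E$ and suitable intercepts picks line $e$ exactly on the $e$-th interval.
\end{enumerate}

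The main obstacle is step 3: a bias term is not available, since $q=\langle x,r\rangle$ has none. I would first try to absorb the intercepts into a constant coordinate. If the representations contain a direction $w$ along which $\langle x,w\rangle\equiv 1$ (for instance after augmentation, or approximately after normalization), then setting $r^{(\ell,e)}=a_e u+b_e w$ gives logits $a_e t+b_e$. The intercepts come from $b_{e+1}-b_e=-(a_{e+1}-a_e)\tau_e$, which places the breakpoint between lines $e$ and $e+1$ at $\tau_e$. Because the slopes increase, the envelope is convex in $t$ and the breakpoints are ordered, so each line is maximal on exactly one interval.

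If no such coordinate exists, I would fall back to the bias-free case. The regions are then convex cones through the origin, and I would partition by angle rather than by a line. Pick a generic 2-plane, project the tokens onto it, and sort them by angle. Choose $E$ angular sectors, each containing $B/E$ tokens and each of width less than $\pi$. Take $r^{(\ell,e)}$ to be unit vectors at the sector bisectors, adjusted so that the Voronoi-on-the-circle boundaries fall between consecutive token angles. The argmax regions of $\langle x,r^{(\ell,e)}\rangle$ are exactly these sectors, which are convex cones, and the proof concludes by checking that the sector widths can be tuned to hit the prescribed cuts.
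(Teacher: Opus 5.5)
Your steps 1--2 are essentially the paper's whole proof. The paper projects the batch onto a direction $a$, sorts, cuts into consecutive blocks of $B/E$, and simply declares the slabs $\{x: a^\top x\in[\alpha_{e-1},\alpha_e]\}$ to be the regions $\mathcal{P}^{(\ell,e)}$. It never produces router vectors. Your generic choice of $u$ also fixes a point the paper skips: for an arbitrary $a$ the interval $(u_{(em)},u_{(em+1)})$ can be empty. So for the statement as the paper reads it, you are done after step 2.

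Step 3 goes beyond the paper, and you are right that it is the real issue for the router in \eqref{eq:moe_routing}. Bias-free argmax regions are cones through the origin, so for $E\ge3$ the slabs can never be router regions. Your constant-coordinate branch is therefore an added hypothesis; RMS normalization fixes a norm, not a linear functional. The bias-free fallback carries the weight, and there the proposal has unproved or incorrect steps.

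(i) Sectors that are all narrower than $\pi$ cannot exist for $E=2$. There the cells are complementary half-spaces through the origin, and you need a separate argument. For example, rotate a line through the origin in your 2-plane: the count on one side changes by at most one per crossing and goes from $c$ to $B-c$ in a half-turn, so it hits $B/2$.

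For $E\ge3$ such sectors do exist, but you must prove it. For a fixed cyclic offset, it suffices that each block of $m=B/E$ angularly consecutive tokens spans less than $\pi$; a Hall-type argument then splits every gap between its two neighbouring sectors. Some offset always works. Runs of $m-1$ consecutive angular gaps with sum $\ge\pi$ pairwise intersect, since two disjoint ones would use up the whole $2\pi$. As short arcs on a cycle of $B\ge 3m$ gaps, they therefore share a common gap, so they block at most $m-1$ of the $m$ offsets.

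(ii) Unit router vectors are too rigid. The cell boundaries are then the angular midpoints of consecutive router directions, which forces relations among the sector widths; for $E=4$, opposite cells must have widths summing to $\pi$. So ``adjusted bisectors'' need not hit your prescribed cuts. Drop the unit norm instead. Let $r^{(\ell,1)},\dots,r^{(\ell,E)}$ be the vertices of a convex polygon in the 2-plane whose outward edge normals point along the cut rays $\beta_1,\dots,\beta_E$. The argmax cell of vertex $e$ is its normal cone, which is exactly the $e$-th sector. Such a polygon exists (edge lengths $\lambda_e>0$ with $\sum_e\lambda_e(\cos\beta_e,\sin\beta_e)=0$) precisely when no closed half-plane contains all the cut rays, i.e.\ when every sector is narrower than $\pi$.

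Finally, state the genericity you need explicitly: no batch token is $0$, and no two are positive multiples of each other, since such tokens always share a cell. Repeated tokens are not handled by a one-token slack in general.
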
		

\begin{proof}
Since $E \mid B$, let $m=\dfrac{B}{E}$.  
We aim to partition the input set $\{x_1^{(\ell,e)},x_2^{(\ell,e)},\cdots,x_B^{(\ell,e)}\}\subset\mathbb{R}^h$ into $E$ convex connected subsets of equal size $m$.  
Pick any nonzero vector $a \in \mathbb{R}^h$.  
For each token $x_i^{(\ell,e)}$, compute the scalar projection 
\begin{align}
u_i = a^\top x_i^{(\ell,e)}, \quad i=1,\dots,N.
\end{align}
Without loss of generality, sort them in increasing order:
\begin{align}
u_{(1)} \le u_{(2)} \le \cdots \le u_{(N)}.
\end{align}

As $N_E \mid N$, we can divide this ordered list into $E$ consecutive blocks of size $m$. Specifically, denote  
\begin{align}
B_e := \{u_{((e-1)m+1)},\, u_{((e-1)m+2)},\,\dots,\,u_{(em)}\}, \quad e=1,\dots,E.
\end{align}

Now define $E-1$ hyperplanes of the form
\begin{align}
H_e = \{\, x \in \mathbb{R}^n : a^\top x + b_r = 0 \}, \qquad e=1,\dots,E-1,
\end{align}
where each $b_e$ is chosen to satisfy that $-\,b_e \in \big(u_{(em)},\, u_{(em+1)}\big)$, which means that the hyperplane lies strictly between the last element of block $B_r$ and the first element of block $B_{e+1}$.

These hyperplanes split $\mathbb{R}^h$ into $E$ slabs:
\begin{align}
P_r = \{\, x : a^\top x \in [\alpha_{e-1}, \alpha_e] \}, \quad e=1,\dots,E,
\end{align}
where $\alpha_0 < \alpha_1 < \cdots < \alpha_{E}$ are thresholds satisfying
\begin{align}
u_{(em)} < \alpha_e < u_{(em+1)}, \quad e=1,\dots,E-1,
\end{align}
and we set $\alpha_0 = -\infty$, $\alpha_{E}=+\infty$ for completeness.

Each region $P_e$ is convex (intersection of halfspaces), connected, and by construction contains exactly $m=B/E$ tokens.  
Thus, if we let $\mathcal{P}^{(\ell,e)}=P_e$, we obtain a partition of the token space into $E$ disjoint convex connected subset with equal token counts, which proves the theorem.
\end{proof}

Proposition~\ref{lemma 3} is an existence result addressing a common concern:
load balancing constrains \emph{how much} each expert is used, while our regularizers constrain \emph{what} experts learn.
The proposition shows that even under strong orthogonality-style specialization,
one can construct token partitions (of equal measure) and corresponding router directions that satisfy perfect load balance.
Hence, balancing and specialization objectives need not be inherently in conflict.

\subsubsection{Coupling optimum exists under perfect load balance}

Then we consider the compatibility between the coupling loss. Formally, for a given input $x_i^{(\ell)}$ and expert $e=1,2,\cdots,E$, define a binary variable:
$$f_{i}^{(\ell,e)}:=\chi(\text{The expert }e\text{ in layer }\ell\text{ is activated})$$
where $\chi$ denotes the indicator function.
With the definition of $f_{i}^{(\ell,e)}$, we can present the following proposition:

\begin{proposition}
\label{lemma 4}
    If we define the coupling loss $\mathcal{R}_{\text{cp}}(x_i)$ as Eq. \eqref{eq: coupling loss}, there exists a state that $\mathcal{L}_{\text{cp}}$ reach the optimal when satisifying the load balance condition $$\sum_{i=1}^Bf_i^{(\ell,e)}=\sum_{i=1}^Bf_i^{(\ell,\nu)}$$ for any $e,\nu\in\{1,2,\cdots,E\}$.
\end{proposition}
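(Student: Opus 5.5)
We prove the proposition by explicit construction: exhibit routing scores that attain the global lower bound of $\mathcal{R}_{\mathrm{cp}}$ while meeting the load-balance condition.

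\textbf{Step 1: a universal lower bound.} For each token $x_i$ and layer $\ell$, the scores $s_i^{(\ell,\cdot)}$ form a probability vector. Hence for every $e$,
\[
\sum_{\nu\in\mathbb{T}_i^{(\ell,e)}} P_i^{(\ell,(e,\nu))}
= s_i^{(\ell,e)}\sum_{\nu\in\mathbb{T}_i^{(\ell,e)}} s_i^{(\ell+1,\nu)}
\le s_i^{(\ell,e)}.
\]
Summing over $e$ gives at most $1$ per layer pair. Therefore $\mathcal{R}_{\mathrm{cp}}(x_i)\ge -(L-1)$ for every token, and the batch loss is bounded below by $-B(L-1)$. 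Equality holds exactly when, for every $i$, $\ell$ and $e$ with $s_i^{(\ell,e)}>0$, the top-$k$ set $\mathbb{T}_i^{(\ell,e)}$ carries all of the mass of $s_i^{(\ell+1,\cdot)}$. This holds in particular when each $s_i^{(\ell+1,\cdot)}$ is supported on at most $k$ experts.

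Softmax scores are strictly positive, so this bound is only approached, not attained. I therefore read ``reach the optimal'' as an infimum reached in a limit of logits, or as attained by the idealized one-hot routing of the top-$1$ setting used in Proposition~\ref{lemma 3}. I will state this convention explicitly.

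\textbf{Step 2: a balanced routing pattern.} Take $k=1$ and assume $E\mid B$, as in the preamble of this subsection. Invoke Proposition~\ref{lemma 3} at layer $1$ to partition the batch into $E$ slabs $\mathcal{P}^{(1,e)}$ with $B/E$ tokens each. Then define routing at every later layer by the same assignment: a token in slab $e$ at layer $1$ activates expert $e$ at every layer, possibly after a fixed relabeling $\pi_\ell$ at each layer. Let each token's scores tend to the corresponding one-hot vector. In that limit:
\begin{itemize}
\item each expert at each layer receives exactly $B/E$ tokens, so $\sum_i f_i^{(\ell,e)}=\sum_i f_i^{(\ell,\nu)}$ for all $e,\nu$;
\item the only nonzero product is $s_i^{(\ell,e)}s_i^{(\ell+1,\pi(e))}\to 1$, and $\pi(e)\in\mathbb{T}_i^{(\ell,e)}$.
\end{itemize}
So the bound of Step 1 is achieved simultaneously with load balance. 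For general $k$, the same construction works with $E/k$ groups of $k$ experts each, assuming $k\mid E$. Each token spreads its mass over its group, and the group at layer $\ell+1$ is exactly the top-$k$ set. Each expert in a group is then activated by the same $kB/E$ tokens.

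\textbf{Step 3: realizability, the main obstacle.} It remains to justify that such a pattern is induced by actual linear routers at every layer, since the representations $x_i^{(\ell)}$ change with depth. I would not require the same slab geometry at every layer. Instead I would note that the proposition only asks for existence of a state. A pattern is realizable whenever the batch's layer-$\ell$ representations are linearly separable along the prescribed partition, and that can be arranged by choosing $r^{(\ell,e)}$ as scaled separating directions, with the scale sent to infinity to sharpen the softmax.

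The delicate point is that Proposition~\ref{lemma 3} gives slabs, and slabs are realizable by softmax-argmax over linear logits only in a generic configuration. The fix I would use is to take logits $q^{(\ell,e)}=c\,(\beta_e\,a^\top x-\gamma_e)$ with increasing slopes $\beta_e$ chosen so that the upper envelope switches exactly at the thresholds $\alpha_e$; a bias can be absorbed by augmenting $x$ with a constant coordinate. Letting $c\to\infty$ yields the one-hot limits used in Step 2.
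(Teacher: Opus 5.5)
Your Step~1 matches the paper's lower bound $\mathcal{L}_{\mathrm{cp}}\ge -B(L-1)$, and your treatment of the equality case is sharper than the paper's. Whenever $s_i^{(\ell,e)}>0$, the set $\mathbb{T}_i^{(\ell,e)}$ is simply the top-$k$ set of $s_i^{(\ell+1,\cdot)}$. So the bound is attained iff every layer $\ell\ge 2$ gives each token at most $k$ nonzero scores. Softmax never does this. The paper sidesteps the issue by defining scores that vanish off the selected set, which abandons the softmax parametrization; your limit-of-logits convention is the honest reading of the statement.

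Your construction is where you diverge from the paper, and where the gap is. The paper uses a cyclic shift. Fix distinct $\iota_1,\dots,\iota_k\in[E]$ and let token $i$ activate $\{\iota_r\oplus_E(i-1)\}_{r=1}^k$ at each layer, with arbitrary weights $\eta_i^{(\ell,\kappa)}$ summing to one. Since $E\mid B$, each slot $r$ visits every expert $B/E$ times, so every expert receives $kB/E$ tokens, for every $k\le E$. Your disjoint-group construction instead needs $k\mid E$ once $k>1$. That fails, e.g., for $E=64$, $k=6$ (the routed configuration of DeepSeek-MoE-16B). As written, you therefore prove the proposition only for $k=1$ and for $k\mid E$.

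The repair follows from your own Step~1: agreement across layers plays no role in attaining the bound. The slab and relabeling structure of Step~2 is therefore unnecessary. Any balanced $k$-sparse assignment, chosen independently at each layer, works, and the cyclic window supplies one.

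Step~3 is not needed for the statement as the paper proves it. The paper works at the level of score vectors and never requires them to come from routers $q=\langle x,r\rangle$. If you want that strengthening, the argument as written does not deliver it, for three reasons:
\begin{itemize}
\item Separability of $\{x_i^{(\ell)}\}$ along a prescribed partition is a property of the representations; it cannot be arranged by choosing $r^{(\ell,e)}$.
\item For $\ell\ge 2$, those representations depend on the earlier routers, which you are simultaneously sending to infinite scale.
\item The paper's router has no bias, so appending a constant coordinate to $x$ changes the model.
\end{itemize}
Some genericity hypothesis is also unavoidable: if all $x_i^{(\ell)}$ coincide, every router sends all tokens to the same $k$ experts and balance is impossible. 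A workable version would proceed layer by layer, with the partition chosen for each layer's own (limiting) representations and realized without a bias term. It would rely on the fact that strict argmax margins survive the small perturbations of deeper representations that occur when earlier logit scales are large but finite.
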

\begin{proof}
Denote the coupling loss for one given token batch as $\mathcal{L}_{\text{cp}}:=\sum_{i=1}^B\mathcal{R}_{\text{cp}}(x_i)$, then we have:
\begin{equation}
\label{eq: coupling loss_1}
\begin{aligned}
\mathcal{L}_{\text{cp}}&=-\sum_{i=1}^B\sum_{\ell=1}^{L-1}\sum_{e=1}^E\sum_{\nu\in\mathbb{T}_i^{(\ell,e)}}s_i^{(\ell,e)}s_i^{(\ell+1,\nu)}\geq-\sum_{i=1}^B\sum_{\ell=1}^{L-1}\sum_{e=1}^Es_i^{(\ell,e)}&=-\left(B(L-1)\right),
\end{aligned}
\end{equation}
where the equality condition is that for any $\nu\not\in\mathbb{T}_i^{(\ell,e)}$ it holds
\begin{align}
    P_i^{(\ell,(e,\nu))}=0,
    \label{pg_con}
\end{align}
for $i=1,2,\cdots,B$ and $e=1,2,\cdots,E$.

Recall the load balance condition  
\begin{align}
    \sum_{i=1}^Bf_i^{(\ell,e)}=\sum_{i=1}^Bf_i^{(\ell,\nu)},
    \label{lb_con}
\end{align}
where $e,\nu\in\{1,2,\cdots,E\}$. We now prove that \eqref{pg_con} and \eqref{lb_con} can be simultaneously satisfied by explicitly constructing the desired condition.  

We denote
\[
[n] := \{1,2,\dots,n\}, 
\qquad 
[n]^k := \underbrace{[n]\times\cdots\times[n]}_{k\ \text{times}}.
\]

And we also define modular addition on $\{1,\dots,n\}$ by
\[
a \oplus_n b := \big((a-1+b) \bmod n\big) + 1.
\]
Consider $\iota=(\iota_1,\dots,\iota_k)$, any array in $[E]^k$.  
We define the following class of functions:
\[
\mathcal{F}_{B,E,k} := 
\Big\{\, f : [B] \to [E]^k \ \Big|\
\forall i=1,2,\cdots,B,\ 
f(i) := \big( \iota_1 \oplus_{N_E} (i-1), \dots, \iota_k \oplus_{N_E} (i-1)\big) \,\Big\}.
\]

Equivalently in component form, it holds that
\begin{equation}
    f(i) = \big(f(i)_1, \dots, f(i)_k\big), 
    \quad
    (f(i))_r = \big((s_r-1) + (i-1)\big)\bmod N_E + 1,
    \quad r=1,\dots,k,
\end{equation}
where $(f(i))_r$ denotes the $\mathcal{R}$-th element of $f(i)$.
Then implies the recursion that
\begin{equation}
    \forall i \in \{1,\dots,B\},\ \forall r \in \{1,\dots,k\}, 
    \quad f(i+1)_r = f(i)_r \oplus_{N_E} 1,
    \qquad f(N_E+1)=f(1).
    \label{recur}
\end{equation}

Now taking any collection of parameters $\eta_i^{(\ell,\kappa)}$ for $i=1,2,\cdots,B$, $\ell=1,2,\cdots,L$, and $\kappa=1,2,\cdots,k$ subject to the normalization constraint
\begin{equation}\label{q_norm}
    \sum_{\kappa=1}^k \eta_i^{(\ell,\kappa)} = 1, 
    \quad \forall \ell \in \{1,2,\cdots,L\}\text{ and } i \in \{1,2,\cdots,B\}.
\end{equation}

We also take $f_1,f_2,\dots,f_{L} \in \mathcal{F}_{B,E,k}$.  
Then define the routing scores $\eta_i^{(\ell,e)}$ by
\begin{equation}\label{s_def}
    s_i^{(\ell,e)} = 
    \begin{cases}
        \eta_i^{(\ell,\kappa)}, & \text{if } e = (f_\ell(i))_\kappa \text{ for some } \kappa \in \{1,\dots,k\}, \\[6pt]
        0, & \text{otherwise}.
    \end{cases}
\end{equation}

We now verify that the term $s_i^{(\ell,e)}$ defined in \eqref{s_def} satisfies conditions \eqref{pg_con} and \eqref{lb_con}. Specifically,
for Eq. \eqref{pg_con} we have
\begin{equation}
    P_i^{(\ell,(e,\nu))}:=s_i^{(\ell,e)}s_i^{(\ell+1,\nu)} =
    \begin{cases}
        \eta_i^{(\ell,\kappa_1)}\eta_i^{(\ell+1,\kappa_2)}, 
        & \text{if } e = (f_\ell(i))_{\kappa_1},\ \nu = (f_{\ell+1}(i))_{\kappa_2}, \\[6pt]
        0, & \text{otherwise}.
    \end{cases}
\end{equation}
Thus $\mathbb{T}_i^{(\ell,e)}$ equals to the set of all the elements of $f_{\ell+1}(i)$. And for any $\nu\not\in\mathbb{T}_i^{(\ell,e)}$, it holds that $P_i^{(\ell,(e,\nu))}=0$. 

Moreover, we consider Eq. \eqref{lb_con}. Recall the recursion property \eqref{recur} of the selected function.  
Since $E | B$, in each layer every expert is loaded exactly $\dfrac{B k}{R}$, which directly gives \eqref{lb_con}.
\end{proof}

Proposition~\ref{lemma 4} strengthens the compatibility message for $\mathcal{R}_{\mathrm{cp}}$:
it shows that there exist routing configurations that minimize the coupling objective while maintaining perfect load balance.
This supports treating $\mathcal{R}_{\mathrm{cp}}$ as a plug-and-play auxiliary term on top of standard balancing losses,
rather than a competing constraint.

\subsection{Summary: closing the theoretical loop.}
\noindent
Appendix~C can be read as answering three questions that together close the theory loop:
\begin{itemize}
\setlength{\itemsep}{0.25em}
\setlength{\topsep}{0.25em}
\setlength{\parsep}{0em}

\item \textbf{Why does specialization sharpen routing?} 
Appendix~\ref{appendix:c1} shows that even a weak best-expert advantage is sufficient to push routing toward a sharper,
lower-entropy regime (Theorem~\ref{thm:weak-spec-sharp-app}), making expert assignments more decisive.

\item \textbf{Why does decisive routing amplify specialization?}
Appendix~\ref{appendix:c1} further shows that once routing is sharp and stable, each expert is trained on higher-purity data,
which strengthens experts on their advantage regions and amplifies specialization (Theorem~\ref{thm:clear-routing-improves-region-app}).

\item \textbf{Why is cross-layer coupling a useful learning signal, and is it compatible with load balancing?}
Appendix~\ref{app:coupling_loss} formalizes cross-layer coupling as a meaningful self-supervised signal: under strong coupling,
later-layer routing can transfer pathway structure backward and stabilize token--expert identities across depth
(Theorem~\ref{thm:appendix-backward}).
Appendix~\ref{appendix: statement 3} establishes that these objectives remain compatible with standard load balancing
(Propositions~\ref{lemma 3} and~\ref{lemma 4}).

\end{itemize}

\noindent
Together with the main-text propositions (proved in Appendix~\ref{appendix:B}),
these results provide a coherent closed-loop theory for why $\mathcal{R}_{\mathrm{sp}}$ and $\mathcal{R}_{\mathrm{cp}}$
jointly improve both specialization quality and routing stability.

\section{Experimental details and additional experiments for pre-training tasks}
\label{appendix:experiments}

\subsection{Experimental setup}
\label{appendix:experiment setup}

\paragraph{Infrastructure.}
We integrate two auxiliary loss functions into the Megatron-LM framework \citep{shoeybi2019megatron} as a plug-and-play module. By setting the corresponding hyperparameters, these losses can be enabled during MoE training.

\paragraph{Model architecture.}
We evaluate two MoE variants at multiple scales. For the vanilla MoE, we adopt a mainstream design comprising RMS normalization \citep{zhang2019root}, SwiGLU activations \citep{shazeer2020glu}, and rotary position embeddings (RoPE) \citep{su2024roformer}; architectural hyperparameters are listed in Table~\ref{tab:moe-config}. For the DeepSeek-style MoE, we augment the vanilla design with \textbf{ONE} shared expert and employ the \textbf{auxiliary-loss-free} load balancing strategy \citep{dai2024deepseekmoe}. {The hyperparameters $\lambda_{\text{cp}}$ and $\lambda_{\text{sp}}$ are set to $1\times10^{-3}$ and $2\times10^{-3}$, respectively.} Unless otherwise noted, the load-balance loss weight is set to \(1\times10^{-2}\), the z-loss weight \(\mathcal{R}_{\text{z}}\) \citep{zoph2022st} to \(1\times10^{-3}\), and the update step size for the coefficient \(b\) in the auxiliary-loss-free load balancing strategy to \(1\times10^{-3}\).

\paragraph{Training settings.}
Training is performed on the C4-en dataset \citep{raffel2020exploring} using the LLaMA-2 tokenizer \citep{touvron2023llama}. The small and medium MoE models are trained for 30~billion tokens, and the large MoE model for 50~billion tokens. This token budget exceeds the data size suggested by MoE scaling laws \citep{clark2022unified}, providing sufficient signal for convergence. We use AdamW \citep{loshchilov2017adamw} optimizer with moment coefficient \(\beta_{1}=0.9\), \(\beta_{2}=0.999\), and weight decay coefficient \(0.1\).

\subsection{Ablation study for different regularizations}
\label{app:ablation}
To evaluate the influence of various regularization techniques on model performance, we performed an ablation study utilizing a medium-scale architecture. The outcomes of this investigation are summarized in Table~\ref{tab:moe_ablations}. Our analysis identifies several consistent trends. First, each auxiliary objective demonstrates individual efficacy: for the Vanilla MoE model, the introduction of \(\mathcal{R}_{\text{sp}}\) leads to a reduction in perplexity, whereas \(\mathcal{R}_{\text{cp}}\) produces a more substantial improvement. Similarly, in the DeepSeek-style MoE, both regularizers enhance performance, with \(\mathcal{R}_{\text{cp}}\) yielding a greater effect. Moreover, the two losses exhibit complementarity, as their combined application results in further gains. When integrated with additional components such as \(\mathcal{R}_{\text{lb}}\), the full regularization set achieves the most pronounced enhancements across both model variants. These patterns indicate that the specialization and coupling mechanisms independently contribute to refining expert behavior and, when employed together, synergize to produce cumulative reductions in perplexity.

\begin{table}[t]
\centering
\caption{Mixture-of-Experts (MoE) model configurations and training data volumes. `A. Experts' denotes the activated experts and `A. Params' denotes the activate parameters.}
\label{tab:moe-config}
\begin{tabular}{ccccccc}
\toprule
\textbf{Model size} & \textbf{Experts} & \textbf{A. Experts} & \textbf{Params} & \textbf{A. Params} & \textbf{Training Tokens} \\
\midrule
Small  & 16 & 2  & 0.4B & 80M  & 30B \\
Medium & 64 & 4  & 1.1B & 100M & 30B \\
Large  & 96 & 6  & 7.0B & 500M & 50B \\
\bottomrule
\end{tabular}
\end{table}

\begin{table}[t]
  \centering
  \small
  \caption{Ablations for two MoE architectures; metric is perplexity ($\downarrow$).}
  \label{tab:moe_ablations}
  \begin{tabular}{ccccccccc}
    \toprule
    \textbf{Model} & \(\mathcal{L}_{\text{lb}}\) & \(\mathcal{L}_{\text{lb,sp}}\) & \(\mathcal{L}_{\text{lb,cp}}\) & \(\mathcal{L}_{\text{lb,sp,cp}}\) & \(\mathcal{L}_{\text{lb,z}}\) & \(\mathcal{L}_{\text{lb,z,sp}}\) & \(\mathcal{L}_{\text{lb,z,cp}}\) & \(\mathcal{L}_{\text{lb,z,sp,cp}}\) \\
    \midrule
    Vanilla MoE        & 12.50 & 12.44 & 12.33 & 12.27 & 12.33 & 12.27 & 12.21 & 12.17 \\
    DeepSeek-style MoE & 12.33 & 12.29 & 12.22 & 12.16 & 12.07 & 12.05 & 12.00 & 11.99 \\
    \bottomrule
  \end{tabular}
\end{table}
\begin{table*}[t]
\centering
\caption{\small Zero-shot accuracy of \emph{Vanilla MoE} and \emph{DeepSeek-style MoE} across seven benchmarks ($\uparrow$).}
\label{tab:zero-shot-moe}
\renewcommand{\arraystretch}{1.2}
\setlength{\tabcolsep}{4.5pt}
{\footnotesize
\begin{tabular}{cl|ccccccc|c}
\toprule
\textbf{Model} & \multicolumn{1}{c|}{\textbf{Loss}} & \textbf{BoolQ} & \textbf{ARC-E} & \textbf{ARC-C} & \multicolumn{1}{c}{\begin{tabular}[c]{@{}c@{}} \textbf{Truthful} \\ \textbf{QA-MC2}\end{tabular}} & \textbf{PIQA} & \textbf{MMLU} & \multicolumn{1}{c}{\begin{tabular}[c]{@{}c@{}} \textbf{Hella} \\ \textbf{Swag}\end{tabular}} & \textbf{Avg.} \\
\midrule
\multirow{4}*{\begin{tabular}[c]{@{}c@{}} Vanilla \\ MoE \end{tabular}}
& \(\mathcal{L}_{\text{lb}}\)           & 0.570{\tiny$\pm$0.003} &  0.452{\tiny$\pm$0.003} & 0.204{\tiny$\pm$0.003} & 0.432{\tiny$\pm$0.001} & 0.622{\tiny$\pm$0.005}  & 0.247{\tiny$\pm$0.002}  &  0.268{\tiny$\pm$0.002}  & \multicolumn{1}{|c}{0.399} \\
& \cellcolor{cyan!30}\(\mathcal{L}_{\text{lb,sp,cp}}\)     & \cellcolor{cyan!30}0.578{\tiny$\pm$0.003} &  \cellcolor{cyan!30}0.462{\tiny$\pm$0.002}  & \cellcolor{cyan!30}0.210{\tiny$\pm$0.004} & \cellcolor{cyan!30}0.451{\tiny$\pm$0.003} & \cellcolor{cyan!30}0.627{\tiny$\pm$0.002} & \cellcolor{cyan!30}0.253{\tiny$\pm$0.002} & \cellcolor{cyan!30}0.275{\tiny$\pm$0.004} & \multicolumn{1}{|c}{\cellcolor{cyan!30}\textbf{0.408}} \\ \cline{2-10}
& \(\mathcal{L}_{\text{lb,z}}\)         & 0.567{\tiny$\pm$0.003} & 0.457{\tiny$\pm$0.004}  &0.205{\tiny$\pm$0.002}  & 0.433{\tiny$\pm$0.003} &  0.629{\tiny$\pm$0.002} & 0.250{\tiny$\pm$0.001} & 0.267{\tiny$\pm$0.004} & \multicolumn{1}{|c}{0.401} \\
& \cellcolor{orange!30}\(\mathcal{L}_{\text{lb,z,sp,cp}}\)   & \cellcolor{orange!30}0.589{\tiny$\pm$0.003} & \cellcolor{orange!30}0.453{\tiny$\pm$0.004} & \cellcolor{orange!30}0.206{\tiny$\pm$0.006} & \cellcolor{orange!30}0.445{\tiny$\pm$0.003} & \cellcolor{orange!30}0.637{\tiny$\pm$0.003} & \cellcolor{orange!30}0.257{\tiny$\pm$0.002} & \cellcolor{orange!30}0.274{\tiny$\pm$0.003} & \multicolumn{1}{|c}{\cellcolor{orange!30}\textbf{0.409}} \\
\midrule
\multirow{4}*{\begin{tabular}[c]{@{}c@{}} DS-style \\ MoE \end{tabular}}
& \(\mathcal{L}_{\text{lb}}\)           & 0.578{\tiny$\pm$0.002} & 0.453{\tiny$\pm$0.001}& 0.205{\tiny$\pm$0.003} & 0.438{\tiny$\pm$0.003} & 0.631{\tiny$\pm$0.002} & 0.248{\tiny$\pm$0.001} & 0.269{\tiny$\pm$0.002} & \multicolumn{1}{|c}{0.403} \\
& \cellcolor{cyan!30}\(\mathcal{L}_{\text{lb,sp,cp}}\)     &  \cellcolor{cyan!30}0.584{\tiny$\pm$0.001}& \cellcolor{cyan!30}0.452{\tiny$\pm$0.003} & \cellcolor{cyan!30}0.206{\tiny$\pm$0.005} & \cellcolor{cyan!30}0.457{\tiny$\pm$0.002} & \cellcolor{cyan!30}0.635{\tiny$\pm$0.002} & \cellcolor{cyan!30}0.255{\tiny$\pm$0.003} & \cellcolor{cyan!30}0.277{\tiny$\pm$0.005} & \multicolumn{1}{|c}{\cellcolor{cyan!30}\textbf{0.410}} \\ \cline{2-10}
& \(\mathcal{L}_{\text{lb,z}}\)         &  0.564{\tiny$\pm$0.002} & 0.453{\tiny$\pm$0.002} & 0.205{\tiny$\pm$0.002} & 0.444{\tiny$\pm$0.002} & 0.628{\tiny$\pm$0.001} & 0.252{\tiny$\pm$0.001} &0.270{\tiny$\pm$0.004} & \multicolumn{1}{|c}{0.402} \\
& \cellcolor{orange!30}\(\mathcal{L}_{\text{lb,z,sp,cp}}\)   & \cellcolor{orange!30}0.575{\tiny$\pm$0.002}  & \cellcolor{orange!30}0.461{\tiny$\pm$0.004} & \cellcolor{orange!30}0.214{\tiny$\pm$0.004} & \cellcolor{orange!30}0.452{\tiny$\pm$0.004} & \cellcolor{orange!30}0.642{\tiny$\pm$0.003} & \cellcolor{orange!30}0.257{\tiny$\pm$0.002} & \cellcolor{orange!30}0.280{\tiny$\pm$0.002} & \multicolumn{1}{|c}{\cellcolor{orange!30}\textbf{0.412}} \\
\bottomrule
\end{tabular}}
\end{table*}
\subsection{Downstream Task Evaluations for pre-trained MoE models. }
\label{app:zero-shot}
{We evaluate the pre-trained MoE models on supervised fine-tuning tasks (see Appendix for details; \citep{raffel2020exploring}) and seven zero-shot benchmarks: BoolQ \citep{clark2019boolq}, ARC-Easy and ARC-Challenge \citep{clark2018arc}, TruthfulQA-MC2 \citep{lin2022truthfulqa}, PIQA \citep{bisk2020piqa}, MMLU \citep{hendrycks2020measuring}, and HellaSwag \citep{zellers2019hellaswag} as outlined in Table~\ref{tab:zero-shot-moe}. For each experimental setup, the process was conducted three times with different random seeds to ensure robustness.}

Across both architectures, the addition of \(\mathcal{R}_{\text{cp}}\) and \(\mathcal{R}_{\text{sp}}\) enhances zero-shot accuracy in synergy with load-balance loss and z-loss. For the Vanilla MoE, integrating \(\mathcal{R}_{\text{sp}}\) and \(\mathcal{R}_{\text{cp}}\) with load-balance regularization leads to a marked improvement in average accuracy. Further gains are observed when \(\mathcal{R}_{\text{sp}}\) and \(\mathcal{R}_{\text{cp}}\) are applied in combination with load-balance loss and z-loss. In the DeepSeek-style MoE, a similar trend emerges: the inclusion of \(\mathcal{R}_{\text{sp}}\) and \(\mathcal{R}_{\text{cp}}\) alongside \(\mathcal{R}_{\text{lb}}\) boosts average performance, while the loss with full set of regularization ($\mathcal{L}_{\text{lb,z,sp,cp}}$) achieves the highest overall accuracy, outperforming both $\mathcal{L}_{\text{lb,z}}$ and $\mathcal{L}_{\text{lb}}$, and yielding superior results on benchmarks such as ARC-E, ARC-C, and PIQA.

Although individual benchmarks show slight variations, the consistent upward trend in average accuracy demonstrates that \(\mathcal{R}_{\text{cp}}\) and \(\mathcal{R}_{\text{sp}}\) effectively complement load-balance loss and z-loss, contributing to downstream improvements across model families.

\subsection{The Impact of Auxiliary Loss on Load Balance Loss}
\label{app:lb_curve}
To investigate the impact of the proposed regularization terms \(\mathcal{R}_{\text{sp}}\) and \(\mathcal{R}_{\text{cp}}\) on the primary training objective, we analyze the load balance loss curves throughout the training process. Figure~\ref{fig:loadnew} compares the load balance loss of the full model \(\mathcal{R}_{\text{lb,sp,cp}}\) with ablation studies involving the baseline configurations \(\mathcal{R}_{\text{lb}}\), \(\mathcal{R}_{\text{lb,cp}}\), and \(\mathcal{R}_{\text{lb,sp}}\).

All model variants exhibit rapid and stable convergence. The load balance loss for each configuration declines sharply within the initial 5,000 steps and stabilizes promptly near its optimal value. This demonstrates that incorporating the auxiliary objectives does not hinder the model's capacity to learn the primary load balancing task.

The load-balance loss curves are nearly identical across all configurations, with the trajectories largely overlapping throughout training. This indicates that incorporating \(\mathcal{R}_{\text{sp}}\) and \(\mathcal{R}_{\text{cp}}\) has a negligible effect on load balancing and does not alter the optimization of the load-balance objective.

\begin{figure}[!t]
  \centering
  \includegraphics[width=0.650\linewidth]{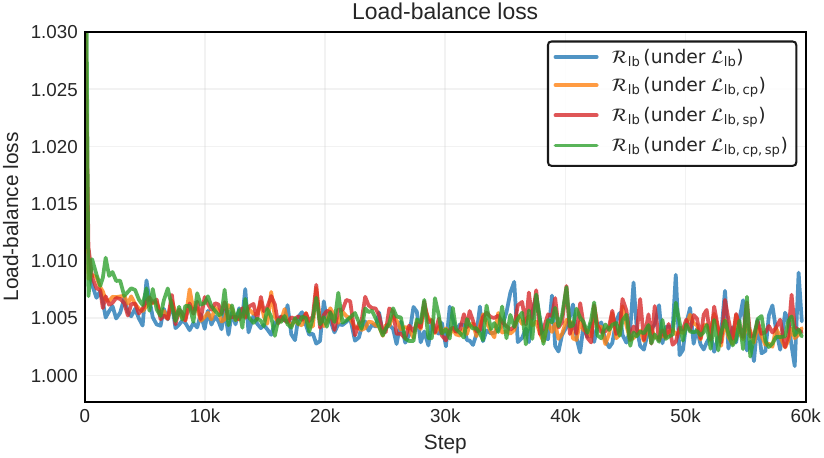}
  \caption{The impact of auxiliary loss on load balance loss}
  \label{fig:loadnew}
\end{figure}

\subsection{Pre-training results with random seeds}
\label{app:seeds}
{
To rigorously demonstrate that the reported improvements are attributable to the auxiliary regularization and are statistically significant rather than resulting from optimization noise, we conducted repeated pre-training experiments using medium-sized models for both the Vanilla MoE and DeepSeek-style architectures. The experimental configuration remains identical to that described in Appendix~\ref{appendix:experiment setup}. 

As illustrated in the Table \ref{tab:moe-perplexity_errorbar}, for the Vanilla MoE, the comparison between $\mathcal{L}_{\text{lb}}$ versus $\mathcal{L}_{\text{lb,sp,cp}}$ shows an improvement from $12.50$ to $12.26$, corresponding to an approximately $1.9\%$ relative reduction, with a standard deviation across seeds of only 0.01. Furthermore, when all auxiliary terms are included, $\mathcal{L}_{\text{lb,z}}$ versus $\mathcal{L}_{\text{lb,z,sp,cp}}$ improves from $12.33$ to $12.17$ , with standard deviations ranging from $0.01$ to $0.02$. These findings confirm consistent and statistically meaningful gains in validation performance.

\begin{table}[t]
\centering
\caption{\small Validation perplexity for medium model scale with three random repetitions ($\downarrow$).}
\label{tab:moe-perplexity_errorbar}
\begin{tabular}{lcc}
\toprule
\multicolumn{1}{c}{\textbf{Losses}}& \multicolumn{1}{c}{\textbf{Vanilla MoE}} & \multicolumn{1}{c}{\textbf{DeepSeek-style MoE}} \\ \midrule
\(\mathcal{L}_{\text{lb}}\)                  & 12.50 (0.01) & 12.33 (0.02)  \\
\cellcolor{cyan!30}\(\mathcal{L}_{\text{lb,sp,cp}}\)            & \cellcolor{cyan!30}12.26 (0.01) & \cellcolor{cyan!30}12.15 (0.02)  \\ \midrule
\(\mathcal{L}_{\text{lb,z}}\)                & 12.33 (0.02) & 12.07 (0.01)  \\
\cellcolor{orange!30}\(\mathcal{L}_{\text{lb,z,sp,cp}}\)          & \cellcolor{orange!30}12.17 (0.01) & \cellcolor{orange!30}11.98 (0.01)  \\
\bottomrule
\end{tabular}
\end{table}}

\begin{table}[t]
    \centering
    \begin{minipage}[t]{0.48\textwidth}
        \centering
        \caption{Perplexity with fixed $\lambda_{\text{sp}}$ and varied $\lambda_{\text{cp}}$.}
        \setlength{\tabcolsep}{3pt}
        \label{table: fix_sp_vary_cp}
        \begin{threeparttable}
        {\small
        \begin{tabular}{c|cccc}
        \toprule
        $\lambda_{\text{cp}}$ & $2\times 10^{-4}$ & $5\times 10^{-4}$ & $1\times 10^{-3}$ & $2\times 10^{-3}$ \\ \midrule
        PPL & 12.41 & 12.35 & \textbf{12.27} & 12.30 \\ \bottomrule
        \end{tabular}}
        \end{threeparttable}
    \end{minipage}
    \hfill
    \begin{minipage}[t]{0.48\textwidth}
        \centering
        \caption{Perplexity with fixed $\lambda_{\text{cp}}$ and varied $\lambda_{\text{sp}}$.}
        \setlength{\tabcolsep}{3pt}
        \label{table: fix_cp_vary_sp}
        \begin{threeparttable}
        {\small
        \begin{tabular}{c|cccc}
        \toprule
        $\lambda_{\text{sp}}$ & $5\times 10^{-4}$ & $1\times 10^{-3}$ & $2\times 10^{-3}$ & $3\times 10^{-3}$ \\ \midrule
        PPL & 12.32 & 12.30 & \textbf{12.27} & 12.29\\ \bottomrule
        \end{tabular}}
        \end{threeparttable}
    \end{minipage}
\end{table}

\subsection{Hyperparameter sensitivity in pre-training}
\label{app:hparam}
{
The validation performance for the pre-training tasks, as presented in Table~\ref{tab:moe-perplexity}, is based on a fixed hyperparameter selection described in Appendix~\ref{appendix:experiment setup}. To examine the sensitivity of the hyperparameters $\lambda_{\text{cp}}$ and $\lambda_{\text{sp}}$, we performed a hyperparameter sweep around the default values using a medium-sized model. Validation perplexity (where lower values indicate better performance) was measured under the following variations:
\begin{itemize}
    \item With $\lambda_{\text{sp}}$ fixed at $2 \times 10^{-3}$, we varied $\lambda_{\text{cp}}$ from 0.2 to 2 times the default value of $1 \times 10^{-3}$.
    \item With $\lambda_{\text{cp}}$ fixed at $1 \times 10^{-3}$, we varied $\lambda_{\text{sp}}$ from 0.25 to 3 times the default value of $2 \times 10^{-3}$.
\end{itemize}

The results, shown in Tables~\ref{table: fix_sp_vary_cp} and~\ref{table: fix_cp_vary_sp}, demonstrate that the model performance remains stable across a broad interval. The perplexity changes are limited to less than 1\% relative to the optimum. The heuristic choice of $\lambda_{\text{cp}} = 10^{-3}$ and $\lambda_{\text{sp}} = 2 \times 10^{-3}$ yields near-optimal results, and deviations cause only minor degradation.

\begin{table}[t]
\centering
\caption{The fraction of tokens that keep the same experts between checkpoints.}
\label{tab:sp_cp_fraction}
\begin{tabular}{lccccccc}
\toprule
Iteration range & 1K–2K & 2K–3K & 4K–5K & 9K–10K & 19K–20K & 29K–30K & 59K–60K \\
\midrule
$\mathcal{L}_{\text{lb,sp}}$ & 0.4746 & 0.6056 & 0.6601 & 0.6987 & 0.7450 & 0.7864 & 0.9011 \\
$\mathcal{L}_{\text{lb,sp,cp}}$ & 0.4898&0.6213&0.6757&0.7187&0.7594&0.7935&0.9067 \\
\bottomrule
\end{tabular}
\end{table}

\subsection{Quantitative comparison with DeepSeekMoE-style load balancing}
{To directly address whether training with our proposed specialization induces more expert specialization than DeepSeekMoE's auxiliary-loss-free load balancing, we compare two training objectives including \(\mathcal{L}_{\text{lb}}\) and \(\mathcal{L}_{\text{lb,cp,sp}}\) over the small model with configurations in Table \ref{tab:moe-config}. As a proxy for expert specialization and routing coherence, we measure every 1000 iterations the percentage of tokens whose top-1 expert assignment remains unchanged between consecutive checkpoints. Higher values correspond to more stable token--expert assignments, lower routing entropy, and, via Proposition \ref{lemma 4}, more persistent expert-specific gradient directions. 

The results, as detailed in Table \ref{tab:sp_cp_fraction}, demonstrate that across all training stages, \(\mathcal{L}_{\text{lb,cp,sp}}\) consistently enhances routing stability by 1--2 absolute points (approximately 2\%--4\% relative improvement) compared to \(\mathcal{L}_{\text{lb}}\). The gains are especially significant during early training phases when routing ambiguity is most severe, which aligns precisely with the regime addressed by Proposition~2. Furthermore, the benefits persist even at later stages (e.g., 59K--60K iterations), indicating that expert assignments maintain greater consistency over time.

In conjunction with Fig \ref{fig:combined_losses}, where \(\mathcal{L}_{\text{lb,sp,cp}}\) reduces the intra-layer specialization loss \(R_{\text{sp}}\) relative to \(\mathcal{L}_{\text{lb,sp}}\), these findings confirm that our method further sharpens expert differentiation beyond the capabilities of auxiliary-loss-free load balancing alone. This improvement is consistent with our theoretical framework: reduced activation similarity promotes more orthogonal gradients (as per Proposition~\ref{lemma 4}), while enhanced routing stability supports stronger and more coherent expert paths (in line with Proposition~\ref{lemma2}).}

\section{Finetuning tasks evaluation}
\label{appendix: experimental finetune}

\begin{table}[t!]
    \centering
    \caption{Hyperparameters for the fine-tuning task under Qwen3-30B models.}
    \begin{threeparttable}
    \setlength{\tabcolsep}{22pt}
    \begin{tabular}{lc}
    \toprule
         \textbf{Hyperparameters}&\textbf{Value}\\
        \midrule
        
         Global batch size& 64\\
         Learning rate&8e-5\\
         Epochs & 3\\
         Sequence length & 32768\\
         {$\lambda_{\text{lb}}^{\Diamond}$} & {1e-3}\\
         $\lambda_{\text{sp}}$ & 2e-4\\
         $\lambda_{\text{cp}}$& 1e-4\\
    \bottomrule
    \end{tabular}
    \begin{tablenotes}
    \small
        \item[$\Diamond$] The coefficient of the regularization of load-balancing.
    \end{tablenotes}
    \end{threeparttable}
    \label{tab:hyperparameter_qwen3}
\end{table}

\begin{table}[t]
\centering
\caption{Evaluation score on Qwen3-30B-A3B-Instruct-2507 finetuning tasks. The last four rows stands for the performance for mmlu dataset with different domains.}
\label{tab:intl-bench_full}
\setlength{\tabcolsep}{12pt}
{\begin{tabular}{llcc}
\toprule
\multicolumn{1}{c}{\textbf{Dataset}} & \multicolumn{1}{c}{\textbf{Metric}} & \(\mathcal{L}_{\text{lb}}\)  & \(\mathcal{L}_{\text{lb,sp,cp}}\) \\
\midrule
openai\_humaneval    & humaneval\_pass@1   & 92.07 & \cellcolor{cyan!30}\textbf{95.73} \\
gsm8k                & accuracy            & 93.33 & \cellcolor{cyan!30}\textbf{94.16} \\
math\_prm800k\_500   & accuracy            & 94.00 & \cellcolor{cyan!30}\textbf{94.20} \\
mmlu                 & naive\_average      & 78.97 & \cellcolor{cyan!30}\textbf{79.86} \\
mmlu-weighted        & weighted\_average   & 76.35 & \cellcolor{cyan!30}\textbf{77.10} \\ \midrule
mmlu-humanities      & naive\_average      & \textbf{75.90} & \cellcolor{cyan!30}75.42 \\
mmlu-stem            & naive\_average      & 87.38 & \cellcolor{cyan!30}\textbf{88.97} \\
mmlu-social-science  & naive\_average      & 75.91 & \cellcolor{cyan!30}\textbf{77.11} \\
mmlu-other           & naive\_average      & 72.59 & \cellcolor{cyan!30}\textbf{73.52} \\
\bottomrule
\end{tabular}}
\end{table}

We fine-tune \texttt{Qwen3-30B-A3B-Instruct-2507} model on an internal corpus of 38B tokens under identical training hyperparameters listed in Table \ref{tab:hyperparameter_qwen3}. We evaluate on a broad suite of reasoning and knowledge-intensive benchmarks, including HumanEval \citep{chen2021evaluating}, GSM8K \citep{cobbe2021training}, math500\_PRM800K\_dataset \citep{lightman2023let}, and MMLU \citep{hendrycks2020measuring}. Across nearly all settings, incorporating \(\mathcal{R}_{\text{cp}}\) and \(\mathcal{R}_{\text{sp}}\) outperforms the baseline, yielding consistent gains on reasoning-oriented tasks as well as aggregate knowledge measures as Table \ref{tab:intl-bench_full}. While a minor fluctuation is observed on the humanities subset of MMLU, the overall trend remains positive, confirming that our objectives not only sharpen specialization in pre-training but also transfer effectively to finetuning adaptation.

\section{Inference acceleration}
To leverage the benefits of the specialization loss and coupling loss during the inference period, we implement a path-aware placement and bucketing strategy. This involves estimating a cross-layer expert co-activation matrix from a held-out dataset, greedily co-locating strongly coupled experts on the same GPU shard via graph partitioning, and performing a lightweight pre-routing pass through the first MoE router to bucket sequences according to early expert decisions. These buckets are then assigned to shards hosting the corresponding experts, ensuring that most subsequent dispatches remain local.

We evaluate our approach on MoE models of varying scales under 8 Nvidia A100 80G GPUs with expert parallelism. The number of parallel devices are set to 8 and the microbatch size is set to 1. A baseline model trained solely with $\mathcal{R}_{\text{lb}}$ is compared against our variant trained with $\mathcal{R}_{\text{lb,sp,cp}}$. While the baseline uses default round-robin expert placement and uniform batching, our model employs the path-aware scheme described above. {We also apply identical system-level optimizations to both the load-balancing baseline and our model. This design cleanly separates the acceleration attributable to engineering infrastructure from that enabled by structural properties—specifically, stronger cross-layer expert coupling and lower routing entropy—induced by our proposed losses.} 

As summarized in Table~\ref{tab:infer_throughput}, throughput improves consistently across model sizes and benchmarks—without any architectural modifications or additional parameters. These results demonstrate that reducing routing ambiguity through $\mathcal{R}_{\text{sp}}$ and $\mathcal{R}_{\text{cp}}$ directly enhances system-level efficiency by streamlining token-to-expert execution paths.
With the inference throughput, it can be observed that our proposed auxiliary losses improve model perplexity while simultaneously enhancing inference efficiency through reduced routing entropy. By promoting sharper expert specialization and stronger cross-layer coupling, tokens follow more consistent expert paths, which in an expert parallelism setup improves cache locality and reduces All-to-All communication overhead.

\begin{table}[t]
  \centering
  \small
  \caption{Throughput comparison (samples/s; $\uparrow$) on four standard benchmarks. Here 'SO' means the system optimization.}
  \label{tab:infer_throughput}
  \begin{tabular}{clcccc}
    \toprule
    \textbf{Model size} & \multicolumn{1}{c}{\textbf{Loss}}  & \textbf{MMLU} & \textbf{GSM8K} & \textbf{HumanEval} & \textbf{Math500} \\
    \midrule
    \multirow{3}*{Small}& \(\mathcal{L}_{\text{lb}}\) \textit{w.o.} SO              & 161.3 (\texttt{1.00}$\times$) & 26.2 (\texttt{1.00}$\times$) & 35.7 (\texttt{1.00}$\times$) & 6.9 (\texttt{1.00}$\times$) \\
    & \(\mathcal{L}_{\text{lb}}\) \textit{w.} SO              & 164.9 (\texttt{1.03}$\times$) & 26.5 (\texttt{1.01}$\times$) & 36.6 (\texttt{1.02}$\times$) & 7.0 (\texttt{1.01}$\times$) \\  
     &\cellcolor{cyan!30}\(\mathcal{L}_{\text{lb,sp,cp}}\) \textit{w.} SO         & \cellcolor{cyan!30}170.4 (\texttt{1.06}$\times$) & \cellcolor{cyan!30}27.0 (\texttt{1.03}$\times$) & \cellcolor{cyan!30}37.5 (\texttt{1.05}$\times$) & \cellcolor{cyan!30}7.1 (\texttt{1.03}$\times$) \\ \midrule
    \multirow{3}*{Medium}&\(\mathcal{L}_{\text{lb}}\)  \textit{w.o.} SO              & 157.4 (\texttt{1.00}$\times$) & 25.9 (\texttt{1.00}$\times$) & 27.7 (\texttt{1.00}$\times$) & 6.1 (\texttt{1.00}$\times$) \\
    &\(\mathcal{L}_{\text{lb}}\)  \textit{w.} SO              & 162.7 (\texttt{1.03}$\times$) & 26.2 (\texttt{1.01}$\times$) & 28.6 (\texttt{1.03}$\times$) & 6.2 (\texttt{1.01}$\times$) \\
     &\cellcolor{cyan!30}\(\mathcal{L}_{\text{lb,sp,cp}}\) \textit{w.} SO        & \cellcolor{cyan!30}165.7 (\texttt{1.05}$\times$) & \cellcolor{cyan!30}26.6 (\texttt{1.03}$\times$) & \cellcolor{cyan!30}29.4 (\texttt{1.06}$\times$) & \cellcolor{cyan!30}6.3 (\texttt{1.03}$\times$) \\ \midrule
    \multirow{3}*{Large} &\(\mathcal{L}_{\text{lb}}\)  \textit{w.o.} SO               & {96.9} (\texttt{1.00}$\times$)  & 15.0 (\texttt{1.00}$\times$) & 12.8 (\texttt{1.00}$\times$) & 3.9 (\texttt{1.00}$\times$) \\
    &\(\mathcal{L}_{\text{lb}}\)  \textit{w.} SO               & {96.9} (\texttt{1.00}$\times$)  & 15.0 (\texttt{1.00}$\times$) & 12.8 (\texttt{1.00}$\times$) & 3.9 (\texttt{1.00}$\times$) \\
     &\cellcolor{cyan!30}\(\mathcal{L}_{\text{lb,sp,cp}}\) \textit{w.} SO         & \cellcolor{cyan!30}103.5 (\texttt{1.07}$\times$) & \cellcolor{cyan!30}15.7 (\texttt{1.05}$\times$) & \cellcolor{cyan!30}13.7 (\texttt{1.07}$\times$) & \cellcolor{cyan!30}4.2 (\texttt{1.08}$\times$) \\
    \bottomrule
  \end{tabular}
\end{table}

\begin{table}[t]
  \centering
  \small
  \caption{The iteration time and peak memory with different loss}
  \label{tab:train_throughput}
  \begin{tabular}{clcc}
    \toprule
    \textbf{Model size} & \multicolumn{1}{c}{\textbf{Loss}}  & \textbf{Iteration time (ms/iteration)} & \textbf{Peak memory (GB)} \\
    \midrule
    \multirow{2}*{Small}& \(\mathcal{L}_{\text{lb}}\)               & 405.9 (\texttt{1.0000}$\times$) & 43.5 (\texttt{1.0000}$\times$) \\ 
     &\cellcolor{cyan!30}\(\mathcal{L}_{\text{lb,sp,cp}}\)         & \cellcolor{cyan!30}413.6 (\texttt{1.0190}$\times$) & \cellcolor{cyan!30}43.6 (\texttt{1.0023}$\times$) \\ \midrule
    \multirow{2}*{Medium}&\(\mathcal{L}_{\text{lb}}\)              & 518.4 (\texttt{1.0000}$\times$) & 60.6 (\texttt{1.0000}$\times$) \\
     &\cellcolor{cyan!30}\(\mathcal{L}_{\text{lb,sp,cp}}\)        & \cellcolor{cyan!30}526.5 (\texttt{1.0156}$\times$) & \cellcolor{cyan!30}60.7 (\texttt{1.0016}$\times$) \\ \midrule
    \multirow{2}*{Large} &\(\mathcal{L}_{\text{lb}}\)               & 2927.8 (\texttt{1.0000}$\times$)  & 73.1 (\texttt{1.0000}$\times$) \\
     &\cellcolor{cyan!30}\(\mathcal{L}_{\text{lb,sp,cp}}\)         & \cellcolor{cyan!30}2942.4 (\texttt{1.0049}$\times$) & \cellcolor{cyan!30}73.3 (\texttt{1.0027}$\times$) \\
    \bottomrule
  \end{tabular}
\end{table}

\section{Comparison with recent auxiliary-loss methods for specialization}
\label{sec:compare_recent_aux}
{
Several recent studies have introduced auxiliary loss functions aimed at improving expert specialization and routing efficacy in Mixture-of-Experts (MoE) models. In this section, we present a conceptual analysis and empirical evaluation comparing our approach with a representative method by \citep{guo2025advancing}, which combines an orthogonality loss with a \emph{variance} loss applied to the routing logits.

\subsection{Conceptual Comparison}

Our method integrates two complementary components: the \emph{intra-layer specialization} loss $\mathcal{L}_{\text{sp}}$, which promotes orthogonality in the representations of co-activated experts, thereby aligning their parameter gradients along orthogonal directions (see Proposition~\ref{prop:activation-similarity}), and the \emph{cross-layer coupling} loss $\mathcal{L}_{\text{cp}}$, which enforces consistency in expert selection across adjacent layers, reducing routing ambiguity (see Proposition~\ref{lemma2}). These losses operate on principles of \emph{information geometry and path consistency} and are compatible with standard router-stabilization techniques, such as the $z$-loss and logit clipping.

In contrast, the approach by \citep{guo2025advancing} incorporates an orthogonality term along with a \emph{variance-maximization} objective on the routing logits, explicitly encouraging high logit dispersion to enhance discrimination. While increased dispersion can sharpen top-$k$ selections, it lacks inherent control over logit magnitudes, potentially leading to adverse interactions with softmax temperature and $z$-loss penalties. Specifically, unregulated variance amplification often causes prematurely peaked routing distributions or numerical instabilities (e.g., gradient spikes), increasing sensitivity to learning rate and initialization in large-scale pre-training. Our design mitigates these issues by regularizing \emph{activations and paths} rather than directly inflating raw logit variance.

\subsection{Experimental Evaluation}
We compare our losses with $\mathcal{L}_{\text{lb,o,v}}$ in Table \ref{tab:moe-perplexity} and Table \ref{tab:excel_downstream}, we show that our consistency-based formulation demonstrates stronger accuracy and stability. The combination of $\mathcal{L}_{\text{sp}}$ and $\mathcal{L}_{\text{cp}}$ outperforms variance-maximization methods in both pre-training and downstream settings. These gains align with our theoretical framework: (i) orthogonalizing expert activations yields orthogonal gradient directions, reducing parameter interference; (ii) cross-layer coupling concentrates routing probability mass along consistent paths, diminishing ambiguity and consolidating expert specialization. Together, these effects enhance final model quality and training dynamics for large-scale applications.}

\section{Analysis for the computational and memory efficiency}
\label{appendix:analysis for computation}
{In this section, we present a series of analysis for the computation and memory overhead for the gradient evaluation with our proposed auxiliary regularization.

\subsection{Theoretical analysis for computational and memory overhead}
Here we present a theoretical analysis for computational and memory  overhead. From a computational perspective, both losses are lightweight relative to the model's core operations (attention mechanisms and feed-forward networks)

\subsubsection{Intra-Layer Specialization Loss ($\mathcal{R}_\text{sp}$).}

\textbf{Computational Complexity}: This loss requires computing pairwise cosine similarities between activations of the top-k selected experts. For a hidden dimension $d$ and $k$ activated experts, the per-token complexity is $\mathcal{O}(k^2 \cdot d)$. In standard MoE configurations, $k$ typically assumes small values (e.g., 2 or 4), while $d$ represents a large dimension (e.g., 4096). Consequently, $k^2 \ll d$, rendering the cost of $\mathcal{O}(k^2 \cdot d)$ negligible compared to the standard FFN transformation cost of $\mathcal{O}(k \cdot d^2)$.

\textbf{Scalability}: Crucially, this computational cost depends solely on the number of activated experts $k$, rather than the total number of experts $E$. This implies that even as the total expert count $E$ scales to hundreds or thousands (as in "Mixture of Million Experts" architectures), as long as the activated expert count $k$ remains small, the computational overhead of $\mathcal{R}_{sp}$ remains both constant and minimal.
    
\textbf{Memory Requirements}: No additional memory allocation is necessary, as this loss reuses intermediate activations $z^{(l,e)}$ already computed during the forward pass.

\subsubsection{Cross-Layer Coupling Loss ($\mathcal{R}_\text{cp}$).}

\textbf{Computational Complexity}: This loss operates exclusively on scalar routing logits. Specifically, it involves basic statistical operations on token routing scores across consecutive layers. These operations avoid complex matrix computations involving high-dimensional hidden states.

\textbf{Memory Requirements}: This loss requires storing a lightweight tensor of dimensions $E \times E \times L$ to track expert transition statistics. Since the number of experts $E$ is typically much smaller than the hidden dimension $d$, the memory consumption of this tensor is negligible.

\subsection{Empirical wall-clocked time and memory analysis}
Empirical results are fully consistent with our theoretical complexity analysis. We systematically measured training throughput (in ms/iteration) and peak GPU memory consumption (in GB) across the Small (0.4B), Medium (1.1B), and Large (7.0B) model configurations used in Appendix \ref{appendix:experiment setup}, with all experiments conducted on a uniform hardware configuration consisting of 8 A100 GPUs. 

Our benchmarking results, as shown in Table \ref{tab:train_throughput}, demonstrate that the overhead introduced by $\mathcal{R}_{\text{sp}}$ and $\mathcal{R}_{\text{cp}}$ is negligible: the combined auxiliary losses introduce only {$0.5\%$ to $1.9\%$} additional latency, with the relative overhead exhibiting a {decreasing} trend as model scale increases (reducing to approximately $0.5\%$ for the 7B parameter model), indicating favorable scaling characteristics of our method, while the additional memory footprint is minimal ($<0.3\%$), empirically confirming that our approach does not impose additional hardware requirements. }

\end{document}